\newcolumntype{b}{>{\centering\arraybackslash}X}
\newcolumntype{s}{>{\hsize=.5\hsize}X}
\newif\ifdraft
\def\adl@drawiv#1#2#3{%
        \hskip.5\tabcolsep
        \xleaders#3{#2.5\@tempdimb #1{1}#2.5\@tempdimb}%
                #2\z@ plus1fil minus1fil\relax
        \hskip.5\tabcolsep}
\newcommand{\cdashlinelr}[1]{%
  \noalign{\vskip\aboverulesep
           \global\let\@dashdrawstore\adl@draw
           \global\let\adl@draw\adl@drawiv}
  \cdashline{#1}
  \noalign{\global\let\adl@draw\@dashdrawstore
           \vskip\belowrulesep}}
\DeclareRobustCommand\onedot{\futurelet\@let@token\@onedot}
\def\@onedot{\ifx\@let@token.\else.\null\fi\xspace}
\def\eg{\emph{e.g}\onedot} 
\def\ie{\emph{i.e}\onedot}
\def\etal{\emph{et al}\onedot} 
\NewDocumentCommand{\code}{v}{%
\texttt{\textcolor{black}{#1}}%
}
\newcommand{\nameA}{\textsf{Sample}\@\xspace}
\newcommand{\nameB}{\textsf{Scale}\@\xspace}
\newcommand{\ours}{IDP-SGD\@\xspace}
\newcommand{\ipate}{IPATE\@\xspace}
\newcommand{\sigs}{\sigma_\text{sample}}
\newcommand{\sigss}{\{\sigma_1, \dots, \sigma_P\}}
\newcommand{\nm}{noise multiplier\@\xspace}
\newcommand{\qqs}{\{q_1, \dots, q_P\}}
\newcommand{\sigc}{\sigma_\text{scale}}
\newcommand{\ccs}{\{c_1, \dots, c_P\}}
\newcommand{\sigd}{\sigma}
\newcommand{\Gp}{\mathcal{G}_p}
\newcommand{\franzi}[1]{\textcolor{purple}{F: #1}}
\newcommand{\adam}[1]{\textcolor{red}{Adam: #1}}
\newcommand{\chris}[1]{\textcolor{green}{C: #1}}
\newcommand{\roy}[1]{\textcolor{blue}{Roy: #1}}
\newcommand{\emmy}[1]{\textcolor{orange}{Emmy: #1}}
\newcommand{\stephan}[1]{\textcolor{teal}{[Stephan: #1]}}
\newcommand{\nicolas}[1]{\textcolor{red}{N: #1}}
\newcommand{\anvith}[1]{\textcolor{brown}{anvith: #1}}
\newcommand{\patty}[1]{\textcolor{olive}{patty: #1}}
\newcommand{\franzi}[1]{}
\newcommand{\roy}[1]{}
\newcommand{\chris}[1]{}
\newcommand{\adam}[1]{}
\newcommand{\emmy}[1]{}
\newcommand{\stephan}[1]{}
\newcommand{\nicolas}[1]{}
\newcommand{\anvith}[1]{}
\newcommand{\patty}[1]{}
\newcommand{\blue}[1]{\textcolor{blue}{#1}}
\newcommand{\new}[1]{\textcolor{blue}{#1}}
\newcommand{\out}[1][]{\textcolor{blue}{[...]}}
\newcommand{\new}[1]{\textcolor{black}{#1}}
\newcommand{\out}[1][]{\textcolor{blue}{}}
\theoremstyle{plain}
\newtheorem{theorem}{Theorem}[section]
\newtheorem{lemma}[theorem]{Lemma}
\theoremstyle{definition}
\newtheorem{definition}[theorem]{Definition}
\theoremstyle{remark}
\title{Have it your way: Individualized Privacy Assignment for DP-SGD}
\author{%
  Franziska Boenisch\thanks{Equal contribution. $^{\dagger}$Work was done at the University of Toronto and the Vector Institute.}\hspace{0.15cm}$^{\dagger}$ \\
\texttt{boenisch@cispa.de} \\
  CISPA Helmholtz Center for Information Security\\
\And
    Christopher Mühl$^{*}$ \\
    \texttt{christopher.muehl@fu-berlin.de}\\
  Free University Berlin \\
  \AND
  Adam Dziedzic$^{*\dagger}$ \\
  \texttt{dziedzic@cispa.de} \\
  CISPA Helmholtz Center for Information Security \\
  \And
  Roy Rinberg \\
  \texttt{roy.rinberg@columbia.edu} \\
  Columbia University \\
  \And
  Nicolas Papernot \\
  \texttt{nicolas.papernot@utoronto.ca} \\
  University of Toronto \& Vector Institute \\
}
\begin{document}

\maketitle

\begin{abstract}

When training a machine learning model with differential privacy, one sets a privacy budget. 
This uniform budget represents an overall maximal privacy violation that any user is willing to face by contributing their data to the training set. 
We argue that this approach is limited because different users may have different privacy expectations.
Thus, setting a uniform privacy budget across all points may be overly conservative for some users or, conversely, not sufficiently protective for others.
In this paper, we capture these preferences through individualized privacy budgets.
To demonstrate their practicality, we introduce a variant of Differentially Private Stochastic Gradient Descent (DP-SGD) which supports such individualized budgets. 
DP-SGD is the canonical approach to training models with differential privacy.
We modify its data sampling and gradient noising mechanisms to arrive at our approach, which we call Individualized DP-SGD (\ours).
Because \ours provides privacy guarantees tailored to the preferences of individual users and their data points, we empirically find it to improve privacy-utility trade-offs.

\end{abstract}

\vspace{-0.2in}
\section{Introduction}

Machine learning (ML) models are known to leak information about their training data.
Such leakage can result in attacks that determine whether a specific data point was used to train a given ML model (membership inference)~\citep{shokri2017membership,yeom2018privacy,carlini2022membership}, infer sensitive attributes from the model's training data~\citep{fredrikson2015model,zhang2020secret}, or even (partially) reconstruct that training data~\citep{haim2022reconstructing,salem2020updates}. 
Therefore, the need to provide privacy guarantees to the individuals who contribute their sensitive data to train ML models is pressing.

Approaches to train ML models with guarantees of differential privacy (DP)~\cite{dwork2008differential} have established themselves as the canonical answer to this need. 
In the context of ML, differential privacy bounds how much can be learned about any of the individual data points from the model's training set.
Take the canonical example of differentially private stochastic gradient descent (DP-SGD): it updates the model using noisy averages of clipped per-example gradients.
Unlike vanilla SGD, this ensures that any single training point has a limited influence on model updates (due to the clipping to a pre-defined maximum clip norm) and that the training is likely to output a similar model should a single point be added to or removed from the training set (due to the noisy average).
As is the case for any DP algorithm, the privacy guarantees of DP-SGD are derived analytically. They are captured by a privacy budget $\varepsilon$. This budget represents the maximal privacy violation that any user contributing data to the training set is willing to tolerate.
Lower budgets correspond to stronger privacy protection since they impose that the outputs of the training algorithm have to be closer for similar training sets.

In this paper, we argue that there is a key limitation when training ML models with DP guarantees: the privacy budget $\varepsilon$ is set uniformly across all training points. 
This implicitly assumes that all users who contribute their data to the training set have the same expectations of privacy. 
However, that is not true; individuals have diverse values and privacy preferences~\cite{jensen2005privacy,berendt2005privacy}.
Yet, because DP-SGD assumes a uniform privacy budget, this budget must match the individual whose privacy expectations are the strongest. 
That is, $\varepsilon$ has to correspond to the lowest privacy violation tolerance expressed amongst individuals who contributed their data. %
This limits the ability of DP-SGD to learn %
because the privacy budget $\varepsilon$ together with the sensitivity of each step of the algorithm (\ie,  the clip norm) define the scale of noise that needs to be added:
the stronger the required privacy protection, the higher the noise scale.
Thus, implementing privacy protection according to the lowest privacy violation tolerance in the training dataset comes at a significant cost in model accuracy.

We are the first to propose two variants of the DP-SGD algorithm, that we refer to as \ours, to enforce different privacy budgets for each training point.\footnote{In the rest of this paper, we 
assume without loss of generality that each training point is contributed by a different individual.}
First, our modified sampling mechanism (\nameA) samples data points proportionally to their privacy budgets: points with higher privacy budgets are sampled more frequently during training. 
It is intuitive that the more a training point is analyzed during training, the more private information can leak about this point.
Second, with our individualized scaling mechanism (\nameB), we rescale the noise added to gradients based on the budget specified for each training point.
Naively, one would implement \nameB by changing the scale of added noise per individual data point. %
However, for efficiency, current implementations of DP-SGD typically noise the \textit{sum} of the per-example clipped gradients over an entire mini-batch, hence, adding the same amount of noise to all gradients. %
Yet, within the context of this implementation, we note that we can \textit{effectively} adapt the 
scale of noise being added on an individual basis by adjusting the sensitivity (\ie the clip norm) of each example by a multiplier. 
Because clipping is performed on a per-example basis, this approach enables the efficient implementation of our proposed \nameB method. %

To summarize our contributions:
\begin{itemize}
\vspace{-0.1in}
    \item We introduce two novel individualized variants of the DP-SGD algorithm, where each data point can be assigned its desired privacy budget. 
    This allows individuals who contribute data to the training set to specify different privacy expectations.
\vspace{-0.02in}
    \item We provide a theoretical analysis of  our two mechanisms to derive how the privacy parameters (\eg, noise multipliers and sample rates) should be set to meet individual data points' privacy guarantees and provide a thorough privacy analysis for \nameA and \nameB. 
\vspace{-0.02in}
    \item We carry out an extensive empirical evaluation on various vision and language datasets and multiple model architectures. 
    Our results 
    highlight the utility gain of our methods for different distributions of privacy expectations among users.
 \end{itemize}

\paragraph*{Ethical considerations.} 
We need to ensure that our individualized privacy assignment does not harm individuals by disclosing too much of their sensitive information.
We, therefore, suggest implementing our methods in a controlled environment where privacy risks are openly communicated~\citep{xiong2020towards, cummings2021need, franzen2022private}, and where a regulatory entity is in charge of ensuring that even the individuals with lowest expectations regarding their privacy obtain a sufficient degree of protection. 
We further discuss the ethical deployment of our Individualized DP-SGD (\ours) in \Cref{app:ethics}.

\section{Background and Related Work}
\label{sec:background}

\subsection{Differential Privacy and DP-SGD}
\label{sub:DP}
Algorithm $M$ satisfies $\pmb{(\varepsilon, \delta)}$\textbf{-DP}, if for any two datasets $D, D'\subseteq \mathcal{D}$ that differ in any one record $x$, such that $D=D'\cup x$, and any set of outputs~$R$
\begin{align}
\label{align:dp}
    \mathbb{P}\left[M(D) \in R\right] \leq e^\varepsilon \cdot \mathbb{P}\left[M(D') \in R\right] + \delta .
\end{align}
Since the possible difference in outputs is bounded for any pair of datasets, DP bounds privacy leakage for any individual.
The value $\varepsilon \in \mathbb{R}_+$ specifies the privacy level with lower values corresponding to stronger privacy guarantees.
The $\delta \in [0, 1]$ offers a relaxation, \ie, a small probability of violating the guarantees.
See \Cref{app:dp} for a more thorough introduction of $(\varepsilon, \delta)$-DP.

Another relaxation of DP is based on the Rényi divergence~\cite{mironov2017renyi}, namely $\pmb{(\alpha, \varepsilon)}$\textbf{-RDP}, which for the parameters as described above and an order $\alpha \in (1,\infty)$ is defined as follows: $\mathbb{D}_\alpha\left(M(D) \parallel M(D')\right)\leq \varepsilon$.
Due to its smoother composition properties, it is employed for privacy accounting in ML with DP-SGD.
Yet, it is possible to convert between the two notions: If $M$ is an $(\alpha, \varepsilon)$-RDP mechanism, it also satisfies 
$(\varepsilon+\log\frac{\alpha-1}{\alpha}-\frac{\log \delta+\log \alpha}{\alpha-1},\delta)$-DP for any $0<\delta<1$~\cite{balle2020hypothesis}.

\setlength{\columnsep}{2mm}%
\setlength{\intextsep}{0mm}%
\begin{wrapfigure}{r}{0.53\textwidth}
\begin{minipage}{\linewidth}
\begin{algorithm}[H]
\footnotesize
	\caption{Differentially Private SGD~\cite{abadi2016deep}}\label{alg:dpsgd}
	\begin{algorithmic}[1]
	\REQUIRE Training dataset $\mathcal{D}$ with data points $\{x_1,\ldots,x_N\}$, loss function $l$, learning rate $\eta$, \nm $\sigma$, sampling rate $q$, clip norm $c$, number of training iterations $I$.
		\STATE {\bf Initialize} $\theta_0$ randomly
		\FOR{$t \in [I]$}
		\STATE {\textit{Poisson Sampling}: Sample mini-batch $L_t$ with per-point probability $q$ from $\mathcal{D}$.}\label{lst:line:dpsgd_sample}
		\STATE {For each $i\in L_t$, compute $g_t(x_i) \gets \nabla_{\theta_t} l(\theta_t, x_i)$}		
		\STATE {\bf 1. Gradient Clipping}
		\STATE {$\bar{g}_t(x_i) \gets g_t(x_i) / \max\big(1, \frac{\|g_t(x_i)\|_2}{c}\big)$}\label{alg:dpsgd_clipping}
		\STATE {\bf 2. Noise Addition}
		\STATE {$\tilde{g}_t \gets \frac{1}{|L_t|}\left( \sum_i \bar{g}_t(x_i) + \mathcal{N}(0, (\sigma c)^2 \mathbf{I})\right)$}\label{alg:dpsgd_noise}
		\STATE { $\theta_{t+1} \gets \theta_{t} - \eta \tilde{g}_t$}
		\ENDFOR
		\STATE {\bf Output} $\theta_I$, privacy cost $(\varepsilon, \delta)$ computed using a privacy accounting method.
	\end{algorithmic}
\end{algorithm}
\end{minipage}
\end{wrapfigure}
\textbf{DP-SGD} extends standard SGD with two additional steps, namely clipping and noise addition.
Clipping each data point's gradients to a pre-defined \textit{clip norm} $c$ bounds their \textit{sensitivity} to ensure that no data point causes too large model updates.
Adding noise from a Gaussian distribution, $\mathcal{N}(0, (\sigma c)^2\mathbf{I})$, with zero mean and standard deviation according to the sensitivity~$c$ and a pre-defined \textit{\nm}~$\sigma$ introduces privacy.
We formally depict DP-SGD as \Cref{alg:dpsgd}.
To yield tighter privacy bounds, DP-SGD implements privacy amplification by subsampling~\cite{beimel2010bounds} (line 3 in \Cref{alg:dpsgd}) where training data points are sampled into mini-batches with a Poisson sampling,\footnote{In DP-SGD, the concept of an epoch does not exist and training duration is measured in \textit{iterations}. Due to the Poisson sampling, training mini-batches can contain varying numbers of data points.} rather than assigning each data point from the training dataset to a mini-batch before an epoch starts.
DP-SGD hence implements a Sampled Gaussian Mechanism (SGM)~\citep{mironov2019r} where each training data point is sampled independently at random without replacement with probability $q \in (0, 1]$ over $I$ training iterations.
As shown by Mironov~\etal~\citep{mironov2019r}, an SGM with sensitivity of $c=1$, satisfies $(\alpha, \varepsilon)$-RDP where 
\begin{align}
\label{align:sgm_privacy}
   \varepsilon\leq I\cdot 2q^2\frac{\alpha}{\sigma^2} \text{.}
\end{align}
\Cref{align:sgm_privacy} highlights that the privacy guarantee $\varepsilon$ depends on the \nm $\sigma$, the sample rate $q$, the number of training iterations $I$, and the RDP order $\alpha$.

\subsection{Individualized Privacy}
\label{sub:individual_privacy}
\textbf{Individualized Privacy} is a topic of high importance given that society consists at least of three different groups of individuals, demanding either strong, average, or weak privacy protection for their data, respectively~\cite{jensen2005privacy, berendt2005privacy}.
Furthermore, some individuals inherently require higher privacy protection, given their personal circumstances and the degree of uniqueness of their data.
However, without individualization, when applying DP to datasets that hold data from individuals with different privacy requirements, the privacy level $\varepsilon$ has to be chosen according to the lowest $\varepsilon$ encountered among all individuals, which favors poor privacy-utility trade-offs.
Prior work on individualized privacy guarantees focused mainly on standard data analyses without considering ML~\cite{hdp, pdp, partitioning, niu2021adapdp}.
However, some of the underlying ideas inspired the design of our \ours variants. 

Our \textbf{\nameA} method relies on a similar idea as the \emph{sample mechanism} proposed by~\citet{pdp} 
where data points with stronger privacy requirements are sampled with a lower probability than data points that agree to contribute more of their information.
The most significant difference between the \emph{sample mechanism} and our approach is that the former performs sampling as a pre-processing step independent of the subsequent algorithm.
In contrast, we perform subsampling \textit{within} every iteration of our IDP-SGD training and show how to leverage the sampling probability of DP-SGD to obtain individual privacy guarantees. 
Based on this difference, we could not build on how~\citet{pdp} compute the sampling probabilities and had to propose an original mechanism to derive sampling probabilities for individualized privacy in \ours.

In a similar vein to our \textbf{\nameB} method, \citet{hdp} scale up data points individually before the noise addition to increase their sensitivity in their \emph{stretching mechanism}.
As a consequence, the signal-to-noise ratios of data points that are scaled with large factors will be higher than the ones of data points that are scaled with small factors.
In contrast to~\citet{hdp}, we do not scale the data points themselves, but the magnitude of noise added to their gradients, relative to their individual sensitivity.
Our new interesting observation is that data points’ individual clip norms can be used to indirectly scale the noise added to a mini-batch of data in DP-SGD per data point.

Closest work to ours~\cite{iPATE} proposes two mechanisms for an individualized privacy assignment within the framework of Private Aggregation of Teacher Ensembles (PATE)~\cite{papernot2016semi}.
Their \textit{upsampling} duplicates training data points and assigns the duplicates to different teacher models according to the data points' privacy requirements, while their \textit{weighting} changes the aggregation of the teachers' predicted labels to reduce or increase the contribution of teachers according to their respective training data points' privacy requirements.
PATE's approach for implementing ML with DP differs significantly from DP-SGD: PATE relies on privately training multiple non-DP models on the sensitive data and introducing DP guarantees during a knowledge transfer to a separately released model.
The approaches for individualized privacy in PATE, therefore, are non-applicable to DP-SGD.
See \Cref{app:PATE} for details on the PATE algorithm and its individualized extensions.

\vspace{-0.1in}

\paragraph{Relationship to Individualized Privacy Accounting.}
An orthogonal line of research that comes closest to providing individualized guarantees for DP-SGD focuses on individualized \textit{privacy accounting}; the idea is to learn more from points that consume less of the privacy budget over training~\cite{renyi_filter, indi_dpsgd_accounting}. 
\citet{pdp_accountant} propose a personalized moments' accountant to compute the privacy loss on a per-sample basis.
\citet{indi_dpsgd_accounting} perform individualized privacy accounting within DP-SGD based on the gradient norms of the individual data points. 
\citet{renyi_filter} introduce \textit{RDP filters} to account for privacy consumption per data point and train as long as there remain data points that do not exceed the global privacy level.
Yet, it has been shown that the data points that consume little privacy budget and therefore remain in the training are the ones that already incur a small training loss~\cite{indi_dpsgd_accounting}.
Training more on them will not significantly boost the model accuracy.
While privacy assignment and accounting are concerned with improving privacy-utility trade-offs within DP, they operate under different setups.
Privacy \textit{accounting} (in the above three methods) 
assumes a single fixed privacy budget assigned over the whole dataset.
In contrast, our privacy \textit{assignment} is concerned with enabling different individuals to specify their respective privacy preferences.
Since individual accounting and assignment are two independent methods, we experimentally show their synergy by applying individual assignment to individual accounting \citep{renyi_filter}, which yields higher model utility than individual accounting alone.%

\section{Our Individualized Privacy Assignment}
\label{sec:method}

\subsection{Formalizing Individualizing Privacy}

\paragraph{Setup and Notation.} Given a training dataset $D$ with points $\{x_1,\ldots,x_N\}$ which each have their own privacy preference (or \textit{budget}), we consider data points with the same privacy budget $\varepsilon_p$ together as a \textit{privacy group} $\Gp$.
This notation is aligned with~\citep{jensen2005privacy, berendt2005privacy} that identified different \textit{groups} of individual privacy preferences within society.
We denote with $\varepsilon_1$ the smallest privacy budget encountered in the dataset.
Standard DP-SGD needs to set $\varepsilon = \varepsilon_1$ to comply with the strongest privacy requirement encountered within the data.
For all $\varepsilon_p, p \in [2,P]$, it follows that $\varepsilon_p > \varepsilon$ and we arrange the groups such that $\varepsilon_p > \varepsilon_{p-1}$.
Note that following~\cite{hdp}, we argue that the privacy preferences themselves should be kept private to prevent leakage of sensitive information which might be correlated with them. 
If there is a necessity to release individual privacy preferences, this should be done under the addition of noise to obtain DP guarantees for the privacy preferences. 
This can for instance be done through a smooth sensitivity analysis, as done in prior work, \eg~\cite{papernot2018scalable}. In general, we find that it is not necessary to release the privacy budgets: To implement \ours, only the party who trains the ML model on the individuals’ sensitive data needs to know their privacy budgets to set the respective privacy parameters accordingly during training. Given that this party holds access to the data itself, this does not incur any additional privacy disclosure for the individuals. Other parties then only interact with the final private model (but not with the individuals’ sensitive data or their associated privacy budgets) and it has been shown impractical to disclose privacy guarantees through black-box access to ML models~\cite{gilbert2018property}. We discuss the confidentiality of the privacy budgets further in \Cref{app:conf_of_budgets}.

\paragraph{Individualized Privacy.} Following~\cite{pdp}, our methods aim at providing data points $x_i \in \Gp$ with individual DP (IDP) guarantees according to their privacy budget $\varepsilon_p$ as follows: The learning algorithm $M$ satisfies $(\varepsilon_p, \delta)$-IDP if for all datasets $D \overset{x_i}{\sim} D'$ which differ only in $x_i$ and for all outputs $R \subseteq \mathcal{R}$
\begin{align}
\label{eq:DP_guarantee}
    \mathbb{P}\left[M(D) \in R\right] \leq e^{\varepsilon_p} \cdot \mathbb{P}\left[M(D') \in R\right] + \delta \, .
\end{align}
Without loss of generality, we assume that all individual data points have privacy preferences with the same $\delta$. 
Note that other notions of DP (\eg, RDP) can analogously be generalized to enable per-data point privacy guarantees.
The key difference between \Cref{eq:DP_guarantee} and the standard definition of DP (\Cref{align:dp}) does not lie in the replacement of $\varepsilon$ with $\varepsilon_p$. Instead, the key difference is about the definition of neighboring datasets. 
For standard DP, neighboring datasets are defined as $D \sim D'$ where $D$ and $D’$ differ in any random data point (given that it is standard DP, every data point has privacy $\varepsilon$). In contrast, in our work, following~\cite{pdp} the neighboring datasets are defined as $D \overset{x_i}{\sim} D'$ where $D$ and $D’$ differ in any arbitrary data point $x_i$ which has a privacy budget of $(\varepsilon_p, \delta)$. This yields to the individualized notion of DP.

We formalize the relationship between the individualized notion of DP (which we denote by $(\{\varepsilon_1, \varepsilon_2, \dots, \varepsilon_P\}, \delta)$-IDP) and standard $(\varepsilon, \delta)$-DP in the following two lemmas over the learning algorithm $M$.
The proofs are included in \Cref{app:proofs}.

\begin{lemma}
\label{theorem1}
An algorithm $M$ that satisfies $(\varepsilon_1, \delta)$-DP also satisfies $(\{\varepsilon_1, \varepsilon_2, \dots, \varepsilon_P\}, \delta)$-IDP.
\end{lemma}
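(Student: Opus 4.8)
The plan is to unwind both definitions and observe that IDP is strictly weaker than standard DP here, so the implication is essentially immediate. First I would fix an arbitrary privacy group $\Gp$ with $p \in [P]$, an arbitrary data point $x_i \in \Gp$, and an arbitrary measurable output set $R \subseteq \mathcal{R}$, and consider any pair of datasets $D \overset{x_i}{\sim} D'$ that differ only in the record $x_i$. The key structural remark is that such a pair is in particular a pair of neighboring datasets in the sense of \Cref{align:dp}: they differ in exactly one record. Hence the hypothesis that $M$ satisfies $(\varepsilon_1, \delta)$-DP applies verbatim to this pair, giving
\begin{align}
\mathbb{P}\left[M(D) \in R\right] \leq e^{\varepsilon_1} \cdot \mathbb{P}\left[M(D') \in R\right] + \delta .
\end{align}

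Next I would invoke the ordering of the privacy budgets established in the setup: the groups are arranged so that $\varepsilon_{p-1} < \varepsilon_p$, and $\varepsilon_1$ is the smallest budget in the dataset, so $\varepsilon_1 \leq \varepsilon_p$ for every $p \in [P]$. Since $t \mapsto e^{t}$ is monotonically increasing, $e^{\varepsilon_1} \leq e^{\varepsilon_p}$, and since $\mathbb{P}[M(D') \in R] \geq 0$, we may relax the bound to
\begin{align}
\mathbb{P}\left[M(D) \in R\right] \leq e^{\varepsilon_1} \cdot \mathbb{P}\left[M(D') \in R\right] + \delta \leq e^{\varepsilon_p} \cdot \mathbb{P}\left[M(D') \in R\right] + \delta ,
\end{align}
which is exactly the $(\varepsilon_p, \delta)$-IDP condition of \Cref{eq:DP_guarantee} for the point $x_i$. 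Because $p$, $x_i$, $D \overset{x_i}{\sim} D'$, and $R$ were arbitrary, this shows $M$ satisfies $(\varepsilon_p,\delta)$-IDP for every group simultaneously, i.e.\ $(\{\varepsilon_1,\ldots,\varepsilon_P\},\delta)$-IDP.

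I do not anticipate a genuine obstacle here; the only point that deserves care is the comparison of the two notions of neighboring datasets. One must make explicit that the ``differ in $x_i$'' neighboring relation used by IDP is a sub-relation of the ``differ in one record'' relation used by standard DP, so that the quantifier in the DP hypothesis indeed ranges over all the pairs that the IDP conclusion needs. Everything else is monotonicity of the exponential and non-negativity of probabilities.
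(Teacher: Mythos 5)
Your proposal is correct and follows essentially the same route as the paper's proof: apply the $(\varepsilon_1,\delta)$-DP guarantee to the pair $D \overset{x_i}{\sim} D'$ and relax the bound via $\varepsilon_1 \leq \varepsilon_p$ and monotonicity of the exponential. Your explicit observation that the ``differ in $x_i$'' neighboring relation is a sub-relation of the standard one is a point the paper leaves implicit, and is a welcome clarification rather than a deviation.
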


\begin{lemma}
\label{theorem2}
An algorithm $M$ that satisfies $(\{\varepsilon_1, \varepsilon_2, \dots, \varepsilon_P\}, \delta)$-IDP also satisfies $(\varepsilon_P, \delta)$-DP.
\end{lemma}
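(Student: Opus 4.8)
The plan is to unfold both definitions and observe that $(\varepsilon_P,\delta)$-DP is simply the weakest of the per-group guarantees that are bundled together into the IDP notion. First I would fix an arbitrary pair of neighboring datasets $D \sim D'$ in the ordinary sense, \ie $D$ and $D'$ differ in exactly one record $x$. Since every data point in the training set is assigned to some privacy group, the record $x$ belongs to some $\Gp$ with budget $\varepsilon_p$, and by the chosen ordering $\varepsilon_1 < \varepsilon_2 < \dots < \varepsilon_P$ we have $\varepsilon_p \le \varepsilon_P$, \ie $\varepsilon_P = \max_p \varepsilon_p$.

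Next I would invoke the $(\{\varepsilon_1,\dots,\varepsilon_P\},\delta)$-IDP hypothesis applied to this particular differing record: because $x \in \Gp$, the definition in \Cref{eq:DP_guarantee} gives, for every measurable output set $R$, $\mathbb{P}[M(D)\in R] \le e^{\varepsilon_p}\,\mathbb{P}[M(D')\in R] + \delta$. Then, using $\varepsilon_p \le \varepsilon_P$ together with the non-negativity of $\mathbb{P}[M(D')\in R]$, I would monotonically relax the multiplicative factor $e^{\varepsilon_p}$ to $e^{\varepsilon_P}$, obtaining $\mathbb{P}[M(D)\in R] \le e^{\varepsilon_P}\,\mathbb{P}[M(D')\in R] + \delta$. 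Since the neighbor pair $D \sim D'$ and the output set $R$ were arbitrary, this is exactly the statement that $M$ is $(\varepsilon_P,\delta)$-DP.

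The only point requiring a little care — and the closest thing to an obstacle — is reconciling the neighboring-dataset conventions: the IDP definition quantifies over pairs $D \overset{x_i}{\sim} D'$ indexed by the distinguished record $x_i$, whereas standard DP quantifies over unindexed neighbors. I would argue that these quantifications range over the same collection of dataset pairs, merely with different bookkeeping: any standard neighbor pair differing in $x$ is an IDP neighbor pair with $x_i = x$, and the group membership of $x$ (hence the applicable budget $\varepsilon_p$) is determined by $x$ itself. I would also confirm that the argument is insensitive to the add/remove direction of the neighbor relation — the IDP hypothesis is stated for all such pairs, so both directions are already covered and no extra symmetry assumption beyond what is built into \Cref{eq:DP_guarantee} is needed. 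No composition, subsampling, or accounting machinery is involved; once the definitions are aligned the proof is a one-line monotonicity argument.
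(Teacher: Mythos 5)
Your proof is correct and follows essentially the same route as the paper's: both observe that any differing record belongs to some group $\Gp$ with $\varepsilon_p \le \varepsilon_P$, apply the IDP guarantee for that group, and monotonically relax the factor $e^{\varepsilon_p}$ to $e^{\varepsilon_P}$. Your added care in aligning the two neighboring-dataset quantifications is a detail the paper leaves implicit, but it does not change the argument.
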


\subsection{From Individualized Privacy Preferences to Privacy Parameters}

From the individual privacy preferences and the total given privacy distribution over the data, \ie, the number of different privacy budgets $P$, their values $\varepsilon_p$, and the sizes of the respective privacy groups $|\Gp|$, we derive individual privacy parameters for \ours such that all data points within one privacy group (same privacy budget) obtain the same privacy parameters, and such that all privacy groups are expected to exhaust their privacy budget at the same time, after $I$ training iterations. %

\begin{wraptable}{r}{7.5cm}
\vspace{-0.0cm}
\addtolength{\tabcolsep}{-2.0pt} 
  \caption{
  \textbf{(Individualized) Privacy Bounds.} }
  \vspace{-0.3cm}
  \label{tab:guarantees}
  \begin{sc}
  \begin{center}
  \tiny
  \new{
  \begin{tabular}{ccccc}
    \toprule
    \textbf{DP-SGD} &&  \nameA && \nameB\\ 
    \midrule
    $\varepsilon\leq I\cdot 2q^2\frac{\alpha }{\sigma^2}$ &&
    $\varepsilon_p\leq I\cdot 2{\blue{q_p}}^2\frac{\alpha }{\blue{\sigma_{sample}}^2}$ &&
    $\varepsilon_p\leq I\cdot 2q^2\frac{\alpha }{\blue{\sigma_p}^2}$\\
    \bottomrule
  \end{tabular}
  }
  \end{center}
  \end{sc}
\vspace{-0cm}
\end{wraptable}
The individualized parameters for our \nameA method are the individual sample rates $\qqs$, and a respective \nm $\sigs$---common to all privacy groups---which we derive from the privacy budget distribution as described in \Cref{sub:individual_sampling}.
In \nameB, the individualized parameters are noise multipliers $\sigss$ with their respective clip norms $\ccs$, as we explain in \Cref{sub:individual_clipping}.
Our individual bounds per privacy group are depicted in \Cref{tab:guarantees}. 
We omit $\alpha$ from  consideration since its optimal value is selected as an optimization parameter when converting from RDP to $(\varepsilon,\delta)$-DP.

\begin{figure}[t]
\begin{minipage}[t]{0.50\textwidth}
\footnotesize
\begin{algorithm}[H]
	\caption{\textbf{Finding \nameA Parameters.} The subroutine \textit{getSampleRate} is equivalent to Opacus' function get\_noise\_multiplier~\cite{opacusNoise}, (see \textit{getNoise} in\Cref{alg:getNoiseMultiplier}). 
    We sketch \textit{getSampleRate} in \Cref{alg:getSampleRate} in \Cref{app:algos}.
 }\label{alg:sampling}
	\begin{algorithmic}[1]
	\REQUIRE Per-group target privacy budgets $\{\varepsilon_1,\dots,\varepsilon_P\}$, target $\delta$, iterations $I$, number of total data points $N$ and per-privacy group data points $\{|\mathcal{G}_1|,\dots,|\mathcal{G}_P|\}$.%
    \STATE {{\bf init} $\sigs$:} $\sigs  \gets$ getNoise($\varepsilon_1, \delta, q, I$)
    \STATE {{\bf init} $\qqs$ where for $p\in [P]$:\\ $q_p \gets$ getSampleRate($\varepsilon_p, \delta, \sigs, I$)}
    \STATE {{\bf while} $q \not\approx \frac{1}{N} \sum_{p=1}^{P} |\mathcal{G}_p| q_p$:}
    \Indent
        \STATE {$\sigs \gets s_i \sigs$} \COMMENT{scaling factor: $s_i < 1$}
        \STATE {{\bf for} $p \in [P]$: }
            \Indent
            \STATE {$q_p \gets$ getSampleRate($\varepsilon_p, \delta, \sigs, I$)}
            \EndIndent
    \EndIndent
    \STATE {\bf Output} $\sigs, \qqs$
	\end{algorithmic}
\end{algorithm}
\end{minipage}\hfill
\begin{minipage}[t]{0.48\textwidth}
\footnotesize
    \begin{algorithm}[H]
	\caption{\textbf{Finding \nameB Parameters.} The subroutine \textit{getNoise} corresponds to Opacus' function get\_noise\_multiplier~\cite{opacusNoise}.
 We sketch its implementation in \Cref{alg:getNoiseMultiplier} in \Cref{app:algos}.
 }\label{alg:clipping}
	\begin{algorithmic}[1]
	\REQUIRE Per-group target privacy budgets $\{\varepsilon_1,\dots,\varepsilon_P\}$, target $\delta$, iterations $I$, number of total data points $N$ and per-privacy group data points $\{|\mathcal{G}_1|,\dots,|\mathcal{G}_P|\}$, default clip norm $c$, sample rate $q$
 \STATE {{\bf init} $\sigss$ where for $p\in [P]$:\\ $\sigma_p \gets$ getNoise($\varepsilon_p, \delta, q, I$)}
 \STATE {{\bf init} $\sigc$:} $\sigc \gets  (\frac{1}{N} \sum_{p=1}^{P} \frac{|\Gp|}{\sigma_p})^{-1}$ \COMMENT{see derivation of $\sigc$ in \Cref{app:method-scaling}}
 \STATE {{\bf for} $p \in [P]$:}
 \Indent
    \STATE {$c_p \gets \frac{\sigc c}{\sigma_p}$ }
\EndIndent
\vspace{-0.045cm}
 \STATE {\bf Output} $\sigc, \ccs$
	\end{algorithmic}
\end{algorithm}
\end{minipage}\hfill
\label{fig:our_methods}
\end{figure}

\subsection{Our \nameA Method}
\label{sub:individual_sampling}

Our \nameA method relies on sampling data points with different sample rates $\qqs$ depending on their individual privacy budgets. In this case, the \nm $\sigs$ is fixed.
Data points with higher privacy budgets (weaker privacy requirements) are assigned higher sampling rates than those with lower privacy budgets.
This modifies the Poisson sampling for DP-SGD (line~3 in \Cref{alg:dpsgd}) to sample data points with higher privacy budgets within more training iterations.

\paragraph{Deriving Parameters.}
We aim at deriving the privacy parameters that yield the specified individual privacy preferences.
The learning hyperparameters, such as mini-batch size $B$ or learning rate $\eta$ can be found through hyperparameter tuning.\footnote{
We discuss in \Cref{app:extended_sampling} other alternatives to implement \nameA if we allow changing the \textit{training} hyperparameters.}
For \nameA, given a tuned mini-batch size $B$, we have to find $\qqs$, such that their weighted average equals $q$: $\frac{1}{N} \sum_{p=1}^P |\Gp| q_p\overset{!}{=}q = \frac{B}{N}$\footnote{We use the notation $\overset{!}{=}$ to indicate that the two terms \textit{should be} equal.}.
This asserts that the Poisson sampling (line~3 in \Cref{alg:dpsgd}) will yield mini-batches of size $B$ in expectation.
We also need to ensure that the privacy budgets of all groups will exhaust after $I$ training iterations, which we do by deriving the adequate $\sigs$, shared across all privacy groups.
\Cref{align:sgm_privacy} highlights that the final privacy budget $\varepsilon$ depends on both sample rate $q$ and \nm $\sigma$.
Our \nameA method enables higher individualized privacy budgets for the different groups than the standard DP-SGD but still requires setting the average sampling rate equal to $q$ as in DP-SGD to obtain an expected mini-batch size $B$.
Hence, the value of $\sigs$ has to be decreased in comparison to the initial default $\sigma$.
As an effect of this noise reduction, \nameA improves utility of the final model. 

Our concrete derivation of values for $\sigs$ and $\qqs$ is presented in~\Cref{alg:sampling}.
We start from initializing $\sigs$ with $\sigma$ from standard DP-SGD, the \nm required for the privacy group $\mathcal{G}_1$ (strongest privacy requirement of all groups) with the smallest privacy budget $\varepsilon_1$.
This corresponds to instantiating $\sigs$ with the upper bound noise over all privacy groups. 
Then, we use a \textit{getSampleRate} function that derives the sampling rate for the given privacy parameters based on approximating \Cref{align:sgm_privacy}. 
Finally, we iteratively decrease $\sigs$ by a scaling factor $s_i$ slightly smaller than one and recompute $\qqs$ until their weighted average is (approximately) equal to $q$.
In \Cref{app:proofs}, we present the formal proof that \nameA satisfies $(\{\varepsilon_1, \varepsilon_2, \dots, \varepsilon_P\}, \delta)$-DP.
Our proof relies on~\citet{mironov2019r} and considers training of each privacy group as a separate SGM---with an individual sampling probability---that all simultaneously update the same model.

\subsection{\nameB}
\label{sub:individual_clipping}

Our \nameB method aims at scaling the noise added to each gradient according to the respective data point's privacy preference.
Yet, for efficiency, current implementations of DP-SGD typically noise the \textit{sum} of the per-example clipped gradients over an entire mini-batch, hence, adding the same amount of noise to all gradients.
Therefore, we instead set individualized clip norms $\ccs$ that \textit{effectively} adapt the scale of noise being added on an individual basis by adjusting the sensitivity (\ie the clip norm) of each example by a multiplier.
This changes lines~6 and~8 in the DP-SGD Algorithm (\ref{alg:dpsgd}).
Data points with higher privacy budgets (weaker privacy requirements) obtain lower noise and higher clip norms.
Since \Cref{align:sgm_privacy} highlights that the clip norm $c$ has no direct impact on the obtained $\varepsilon$, individualized privacy in \nameB results from the individual noise multipliers $\sigss$.
Utility gains come from the overall increase in the signal-to-noise ratio during training. 

\paragraph{Deriving Parameters.}
While the ultimate goal of our \nameB method is to adapt individual noise multipliers per privacy group, we cannot implement this directly without degrading training performance.
The reason is that whereas in DP-SGD sampling and gradient clipping are performed on a per-data point basis, noise is added per mini-batch, see lines 3, 6 and 8 in \Cref{alg:dpsgd}, respectively.
However, if we restrict mini-batches to contain only data points from the same privacy group (which share the same \nm), we lose the gains in privacy-utility trade-offs which result from the subsampling (see \Cref{app:per_point_noise} for more details).
Hence, while we rely on mini-batches containing data points with different privacy requirements (\ie, different noise multipliers) we can only specify one fixed \nm~$\sigc$.%

To overcome this limitation, we do not set noise multipliers $\sigss$ directly, but indirectly obtain them through individualized clip norms $\ccs$ as follows:
In standard DP-SGD, \Cref{alg:dpsgd}, a gradient clipped to $c$ (line 6) obtains noise with standard deviation $\sigma c$ (line 8). %
For \nameB, we clip gradients to $c_p=s_pc$ with a per-privacy group scaling factor $s_p$ %
and they obtain noise $\sigma_p c_p$.
But in practice, we add noise according to $\sigc c$ to all mini-batches. 
Hence, the effective scale $\sigma_p$ of added noise is $\sigc c = \sigc \frac{c_p}{s_p} \overset{!}{=} \sigma_p c_p\Rightarrow \sigma_p=\frac{1}{s_p}\sigc$.
For data points with higher privacy budgets $s_p > 1$, so their gradients are clipped to larger norms $c_p=s_pc$ and a smaller \nm $\sigma_p=\frac{1}{s_p}\sigc$ is assigned to them. The opposite is true for data points with lower privacy budgets.

We find the values of $\sigma_p$ required to obtain each privacy groups' desired $\varepsilon_p$ using the \textit{getNoise} function (see \Cref{alg:clipping} line 1), which takes as inputs the privacy parameters $\varepsilon_p, \delta, q, I$.
To optimize utility, we want to set the individual clip norms such that their average over the dataset corresponds to the best clip norm $c$ obtained through initial hyperparameter tuning: $\frac{1}{N}\sum_{p=1}^P |\Gp| c_p\overset{!}{=}c$.
Given the interdependence of $c_p$, $\sigma_p$, and $\sigc$ ($c_p = c\frac{\sigc}{\sigma_p}$), this can be achieved by setting $\sigc$ as the inverse of the weighted average over all $1/\sigma_p$ as: $\sigc = (\frac{1}{N}\sum_{p=1}^{P} \frac{|\Gp|}{\sigma_p})^{-1}$ (please see~\Cref{app:method-scaling} for the derivation of $\sigc$). 
Given $\sigc$, the required $\sigss$, and $c$ found through hyperparamter tuning, the individual clip norms are calculated as $c_p = \frac{\sigc c}{\sigma_p}$.
We detail the derivation of the parameters in \Cref{alg:clipping}.
In \Cref{app:proofs}, we formally show that \nameB satisfies $(\{\varepsilon_1, \varepsilon_2, \dots, \varepsilon_P\}, \delta)$-DP).
Similar to \nameA, the proof is based on considering the training for all privacy groups as simultaneously executed SGMs with differing sensitivities.

\section{Empirical Evaluation}
\label{sec:experiments}

For our empirical evaluation, we implement our methods in Python 3.9 and extend standard Opacus with our individualized privacy parameters and a per-privacy group accounting.
We perform evaluation on the MNIST~\cite{MNIST}, SVHN~\cite{SVHN}, and CIFAR10~\cite{CIFAR10} dataset, using the convolutional architectures from \citet{tramer2020differentially} for most experiments, and additionally evaluate on various language datasets and diverse (larger) model architectures in \Cref{sub:other_data}.
To evaluate the utility of our methods, we use the datasets' standard train-test splits and report test accuracies.
The training and standard DP-SGD and \ours hyperparameters are specified in \Cref{tab:hyperparameters} and \Cref{tab:hyperparameters_individual} in \Cref{app:experiments}, respectively where the \nm $\sigma$ is derived with the function  \code{get_noise_multiplier} provided in Opacus~\cite{opacusNoise}. It takes in as arguments the specified parameters $\delta, q, I$, and target $\varepsilon$.
For experimentation on individualized privacy,  we assigned privacy budgets randomly to the training data points if not indicated otherwise.

\subsection{Utility Improvement and General Applicability of Individualization}
\label{sub:results_utility}

Assigning heterogeneous individual privacy budgets over the training dataset yields significant improvements in terms of the resulting model's utility, as showcased in \Cref{tab:compare-dpsgd-individual}.
For example, by following the privacy budget distribution of~\citet{hdp} with privacy budgets $\varepsilon_p=\{1,2,3\}$ for strong, medium, and weak privacy requirements, our \nameA method yields accuracy improvements of 1.06\%, 2.63\%, and 5.09\% on MNIST, SVHN, and CIFAR10, respectively.
On the CIFAR10 dataset, our \nameB even outperforms improvement with an accuracy increase of 5.26\% in comparison to the non-individualized baseline, which would have to assign $\varepsilon=1$ to the whole training set in order to respect each individual's privacy preferences.

The benefits of our individualized privacy assignment also become clear in \Cref{fig:advantage_MNIST}, which depicts the test accuracy of our \nameA and \nameB vs. standard DP-SGD over the course of training.
Both our methods continuously outperform the non-individualized DP-SGD baseline with $\varepsilon=1$.
Additionally, the test accuracy with privacy budget distribution according to~\citet{hdp} ($34\%,43\%,23\%$) is higher than the one of~\citet{upem} ($54\%,37\%,9\%$). This can be explained by the fact that in this latter distribution, more individuals exhibit a higher privacy preference.
With more data points choosing lower privacy protection as in~\citet{hdp}, our individualization boosts the performance of the final trained model more significantly.
\begin{wrapfigure}{r}{0.5\textwidth}
  \vspace{-0.0cm}
    \centering
    \includegraphics[width=0.5\textwidth, trim={1.5cm 0 1.5cm 1.0cm},clip]{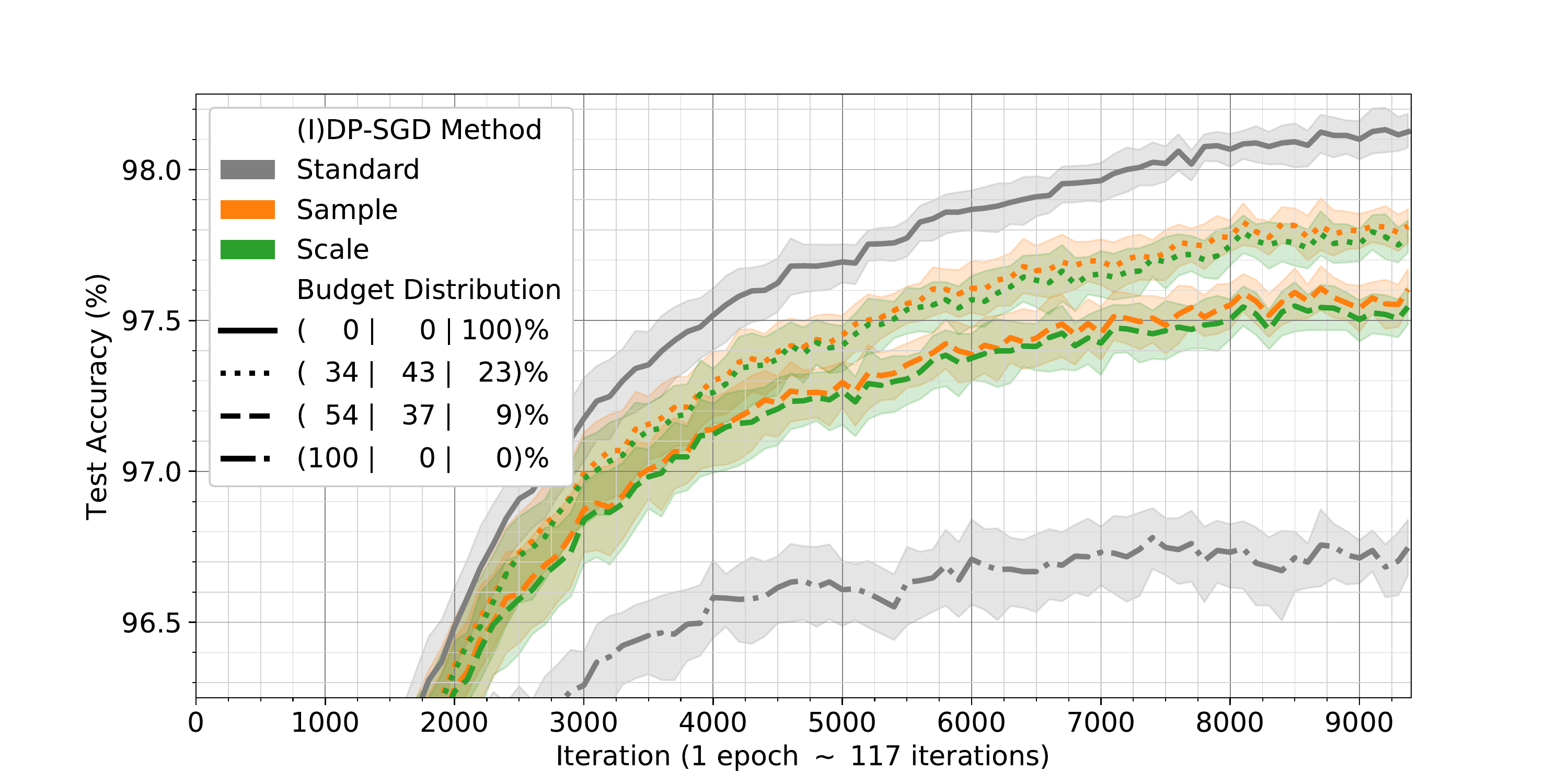}
    \caption{ \textbf{\ours vs DP-SGD.}
    }   
    \label{fig:advantage_MNIST}
\vspace{0.1cm}
\end{wrapfigure} 
For comparison, we also report results for the standard DP-SGD with $\varepsilon=3$. This corresponds to the upper bound on utility that could have been achieved by assigning the highest privacy budget to all data points---which violates the privacy requirements of individuals with strong and medium privacy preferences.
We observe that our individualized methods' performance is close to this upper bound without violating data points' privacy requirements. 
In \Cref{app:alternative_baselines}, we also discuss other possible baselines (apart from \textit{standard} DP-SGD) and empirically evaluate our individualized methods against them.
We report our hyperparameters and the respective individualization parameters (noise multipliers, clip norms, and sampling rates) obtained through our methods in \Cref{app:experiments}.
\Cref{fig:privacy_spending} in \Cref{app:calibration} highlights additionally that this privacy parameter derivation is well calibrated: all different groups exhaust their privacy budget simultaneously when the targeted number of iterations is reached.
In \Cref{sec:ablation_study}, we also experimentally show that our methods are flexible and can extend to substantially more privacy groups which all have different budgets.\footnote{In principle, with our methods, each data point could have its own privacy budget.}
Finally, in \Cref{app:combining}, we demonstrate how to combine our \nameA and \nameB method, and experimentally evaluate the joint method.

\addtolength{\tabcolsep}{0.0pt} 
\begin{table}[t]
  \caption{\textbf{Model Test Accuracy after training with Standard DP-SGD vs our Individualized DP-SGD} using \nameA or \nameB. %
  The percentages of the three privacy groups are chosen according to~\citep{hdp} (first setup) and~\citep{upem} (second setup). 
  Additional hyperparameters are found in \Cref{tab:hyperparameters} and \Cref{tab:hyperparameters_individual}.
  We report the standard deviation over 10 trials.
  }
  \vspace{-0ex}
  \label{tab:compare-dpsgd-individual}
  \begin{sc}
  \begin{center}
  \small
  \begin{tabular}{cccccc}
    \toprule
    \textbf{Dataset} & \textbf{Privacy Groups} & \textbf{Privacy Budgets} & \textbf{DP-SGD} & \textbf{\nameA} & \textbf{\nameB} \\
    \hline
    \multirow{2}{*}{MNIST} & \textit{34\%-43\%-23\%} & \textit{1.0-2.0-3.0} & 96.75$\pm$0.15  & \textbf{97.81}$\pm$\textbf{0.09} & 97.78$\pm$0.08 \\
    \cdashlinelr{2-6}
    &  \textit{54\%-37\%-9\%} & \textit{1.0-2.0-3.0} & 96.75$\pm$0.15  & \textbf{97.6}$\pm$\textbf{0.11} & 97.54$\pm$0.09 \\
    \hline
    \multirow{2}{*}{SVHN} & \textit{34\%-43\%-23\%}  & \textit{1.0-2.0-3.0} & 83.26$\pm$0.31 & \textbf{85.89}\textbf{$\pm$0.14} & 85.57$\pm$0.24 \\
    \cdashlinelr{2-6}
    & \textit{54\%-37\%-9\%} & \textit{1.0-2.0-3.0} & 83.26$\pm$0.31 & \textbf{85.14}\textbf{$\pm$0.30} & 85.08$\pm$0.12 \\
    \hline
    \multirow{2}{*}{CIFAR10} & \textit{34\%-43\%-23\%} &\textit{1.0-2.0-3.0} & 52.77$\pm$0.65 & 57.86$\pm$0.56   & \textbf{58.03}\textbf{$\pm$0.36} \\
    \cdashlinelr{2- 6}
     & \textit{54\%-37\%-9\%} & \textit{1.0-2.0-3.0}& 52.77$\pm$0.65  & \textbf{56.83}\textbf{$\pm$0.39} & 56.65$\pm$0.49 \\
    \bottomrule
  \end{tabular}
  \end{center}
  \end{sc}
\vspace{-0.2in}
\end{table}
\addtolength{\tabcolsep}{0.0pt}

\subsection{Broad Applicability of IDP-SGD}
\label{sub:other_data}

We also evaluate IDP-SGD with other (larger) architectures, for fine-tuning, and with different modalities and tasks.
Therefore, we 1) fine-tune a BERT transformer~\citep{kenton2019bert} (bert-base-uncased) on the SNLI dataset~\citep{bowman2015large} for natural language inference, 2) train a ResNet18~\citep{he2016deep} from scratch on the CIFAR10 dataset for image classification, 3) train a linear embedding model (one embedding and two linear layers) on IMDB data~\citep{imdb} for text classification, and 4) train a character-level RNN (DPLSTM\footnote{\url{https://opacus.ai/tutorials/building_lstm_name_classifier}}, three layers, hidden size: 128) from scratch to classify surnames to languages.
In \Cref{tab:modalities}, we evaluate our \nameA and \nameB in these setups with a privacy distribution corresponding to the first setup of \Cref{tab:compare-dpsgd-individual}, with $\varepsilon=5,7.5,10$ for the first three rows, and $\varepsilon=1,2,3$ for the last one, and a DP-SGD baseline with $\varepsilon=5$ or $\varepsilon=1$, respectively.
We tuned hyperparameters with standard DP-SGD and applied the best resulting hyperparameters to \nameA and \nameB.
All results are averaged over 9 trials and we report average accuracy and standard deviation.
Over all tasks, architectures, and modalities, IDP-SGD outperforms standard DP-SGD.
\addtolength{\tabcolsep}{0.0pt} 
\begin{table}[th]
\caption{\textbf{Evaluating IDP-SGD with other architectures, tasks, modalities, and to fine-tuning.}}
\label{tab:modalities}
\scriptsize
\centering
\begin{tabular}{ccccccc}
\toprule
Architecture        & Dataset  & Setup     & Modality, Task             & DP-SGD  & Sample & Scale  \\
\midrule
BERT                & SNLI     & Fine-Tune & Natural language inference & 75.91$\pm$ 0.23  & 76.11$\pm$0.21     & \textbf{76.5$\pm$0.17}     \\
ResNet18            & CIFAR10  & Train     & Image classification       & 47.52$\pm$0.84   & 48.53$\pm$0.69     & \textbf{48.77$\pm$0.73}    \\
Embedding Model     & IMDB     & Train     & Text classication          & 72.69$\pm$0.27   & 73.27$\pm$0.3      & \textbf{73.34$\pm$0.11}    \\
Character-level RNN & Surnames & Train     & Text (name) classification & 60.86$\pm$0.78   & 65.56$\pm$0.96     & \textbf{66.0$\pm$1.19}  \\  
\bottomrule
\vspace{-0.6cm}
\end{tabular}
\end{table}

\subsection{Privacy Assessment via Membership Inference}
\label{sub:membership_inference}

To evaluate the impact of our \ours on the privacy of individual data points, we perform membership inference based on the LiRA attack~\cite{carlini2022membership}.
For the reader's convenience, we include a description of the attack in \Cref{app:LiRA}.
We experiment with CIFAR10 and  train 512 shadow models and the target model using \nameA on different subsets of 25,000 training data points each. 
Results for \nameB can be found in \Cref{sec:mia-appendix}.
The privacy budgets are set to $\varepsilon=10$ and $\varepsilon=20$ and evenly assigned to  the shadow models' training data, resulting in 12,500 training data points per privacy budget.
Our target model achieves a train accuracy of 68.46\% on its 25,000 member data points, and a test accuracy of 64.89\% on its 25,000 non-member data points.
The lower accuracy in comparison to~\citet{carlini2022membership}, who achieved 100\% and 92\% train and test accuracy respectively, results from us introducing DP to the training of the shadow models. Learning with DP is known to reduce model performance, particularly so on small datasets.

We depict the membership inference risk of the target model's training data per privacy budget (privacy group) in \Cref{fig:LiRA-main}.
The dotted blue line represents the ROC curve for LiRA over all CIFAR10 data points and shows that overall, as expected by its definition, DP protects the training data well against membership inference attacks. 
\begin{wrapfigure}{r}{0.5\textwidth}
\vspace{0.5em}
    \centering
    \includegraphics[width=0.5\textwidth, trim={0cm 0 0cm 1.5cm},clip]{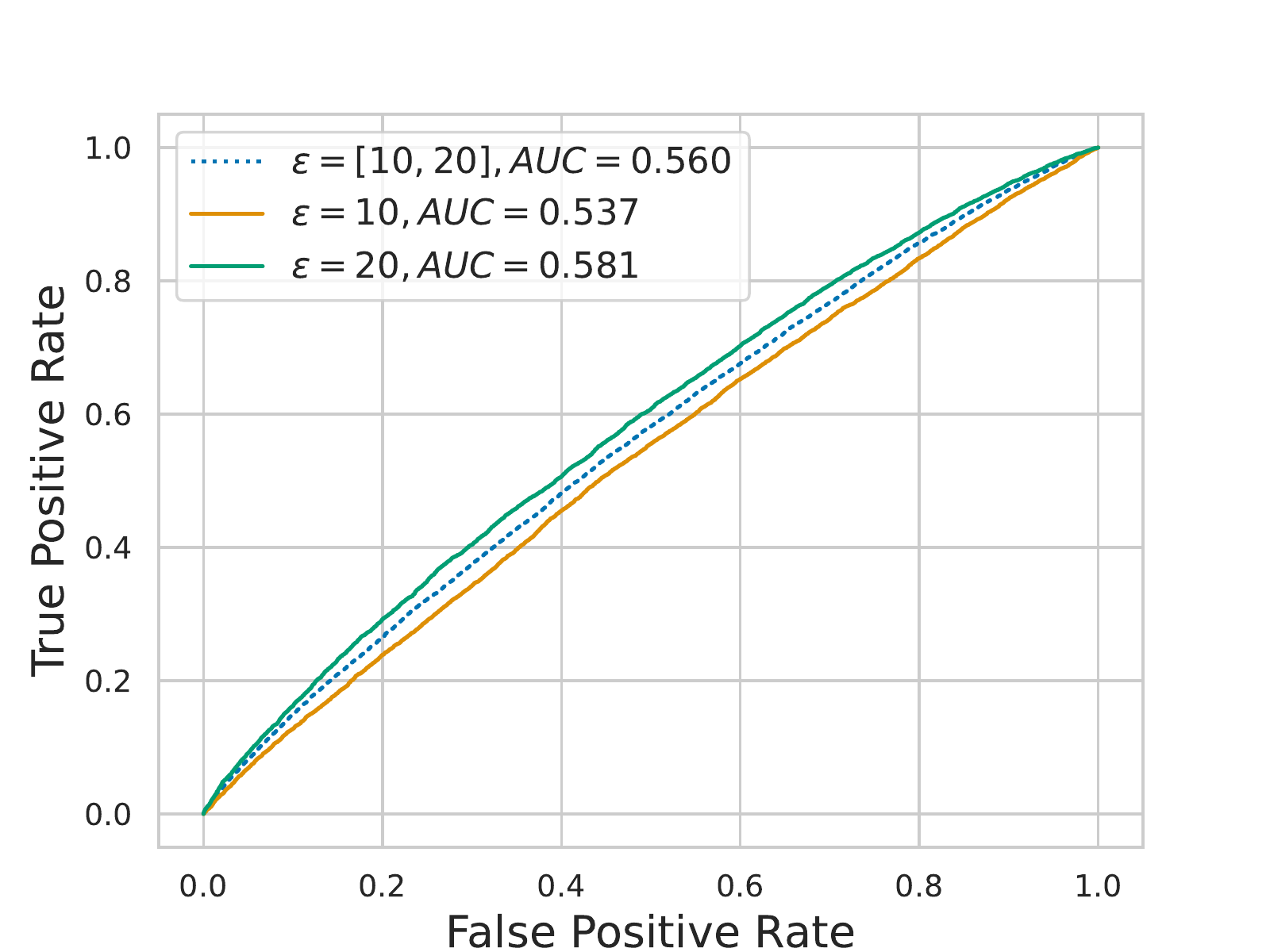} 
    \caption{\textbf{Per-Privacy Group LiRA~\cite{carlini2022membership} Membership Inference Success.}}
\vspace{0.1em}
    \label{fig:LiRA-main}
\end{wrapfigure}
However, when inspecting the ROC curve separately for the two privacy groups ($\varepsilon=10$ and $\varepsilon=20$), we observe a significant difference between them. 
The privacy group with stronger privacy guarantees ($\varepsilon=10$) is protected better (AUC$=0.537$) than the group with the higher privacy budget ($\varepsilon=20$, AUC$=0.581$).
To evaluate whether the difference in the LiRA-likelihood scores between the two privacy groups is statistically significant, we perform a Student t-test on the likelihood score distributions over the data points with privacy budget $\varepsilon=10$ vs. the data points with privacy budget $\varepsilon=20$.
We report $\Delta=2.54$ with $p=0.01<0.05$, 
and hence a statistically significant difference between the two groups exists.

Experimental results for additional target models, further comparison between IDP-SGD and DPSGD, and  detailed per-target model statistics are presented in \Cref{fig:lira-5-appendix}, \Cref{fig:mia_threegraphs} and \Cref{tab:p_values} in \Cref{sec:mia-appendix}.
The results highlight that the individual privacy assignment of our \ours has a practical impact and indeed protects data points with different levels of privacy to different degrees. 
Additionally, the privacy risk for data points with $\varepsilon=10$ when training with IDP-SGD is lower than when training purely on them with standard DP-SGD. This privacy gain does not come at large expense of points with $\varepsilon=20$ whose privacy risk remains roughly the same(see \Cref{fig:mia_threegraphs}).

\section{Comparison to Other Methods}

\paragraph{Comparison to Individualized PATE.}
\label{sub:comparison_PATE}

We compare against Individualized PATE~\cite{iPATE} (\ipate), the only other work aiming at enabling individual privacy assignment in ML.
See details of their method and the general PATE framework in \Cref{app:PATE}.
For the comparison, we report the student model accuracies of \ipate for the MNIST, SVHN, and CIFAR10 datasets.
The results are presented in~\Cref{tab:compare_iPATE} in \Cref{app:comparisonPATE}.
We observe that \ours constantly outperforms \ipate with the same privacy budget assignment.

\paragraph{Synergy between Individualized Privacy Accounting and Assignment.}
We show that we can leverage synergies between our individualized privacy assignment and the orthogonal line of work on privacy accounting (see \Cref{sub:individual_privacy}). %
Therefore, we extend the approach by~\citet{renyi_filter} with our method of assigning each data point its individual privacy budget through an individualized Renyi-filter.
This filter causes the point to be excluded from training once its individually assigned privacy budget is exhausted. 
Note that in contrast, in the approach by~\citet{renyi_filter}, all data points obtain the same privacy budget and data points are excluded from training once they reach this budget. 
For more details, see the description in \Cref{app:accounting+assignment}.

\label{sub:assignment+accounting}
\begin{wrapfigure}{r}{0.24\textwidth}
\vspace{-0.0cm}
    \centering
    \subfloat[Accounting]{\includegraphics[scale=0.17]{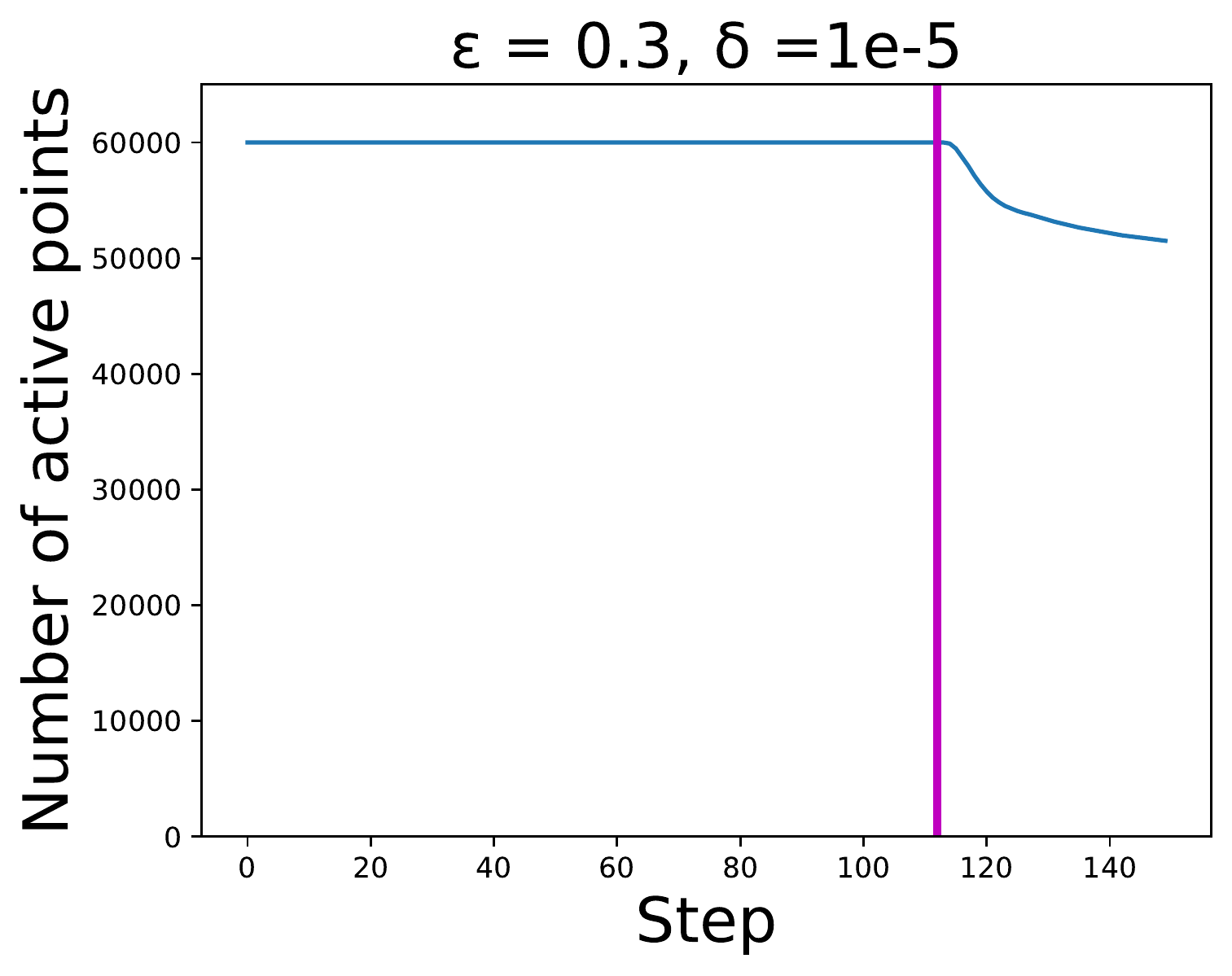}}%
        \qquad
    \subfloat[Accounting+Assignment]{\includegraphics[scale=0.17]{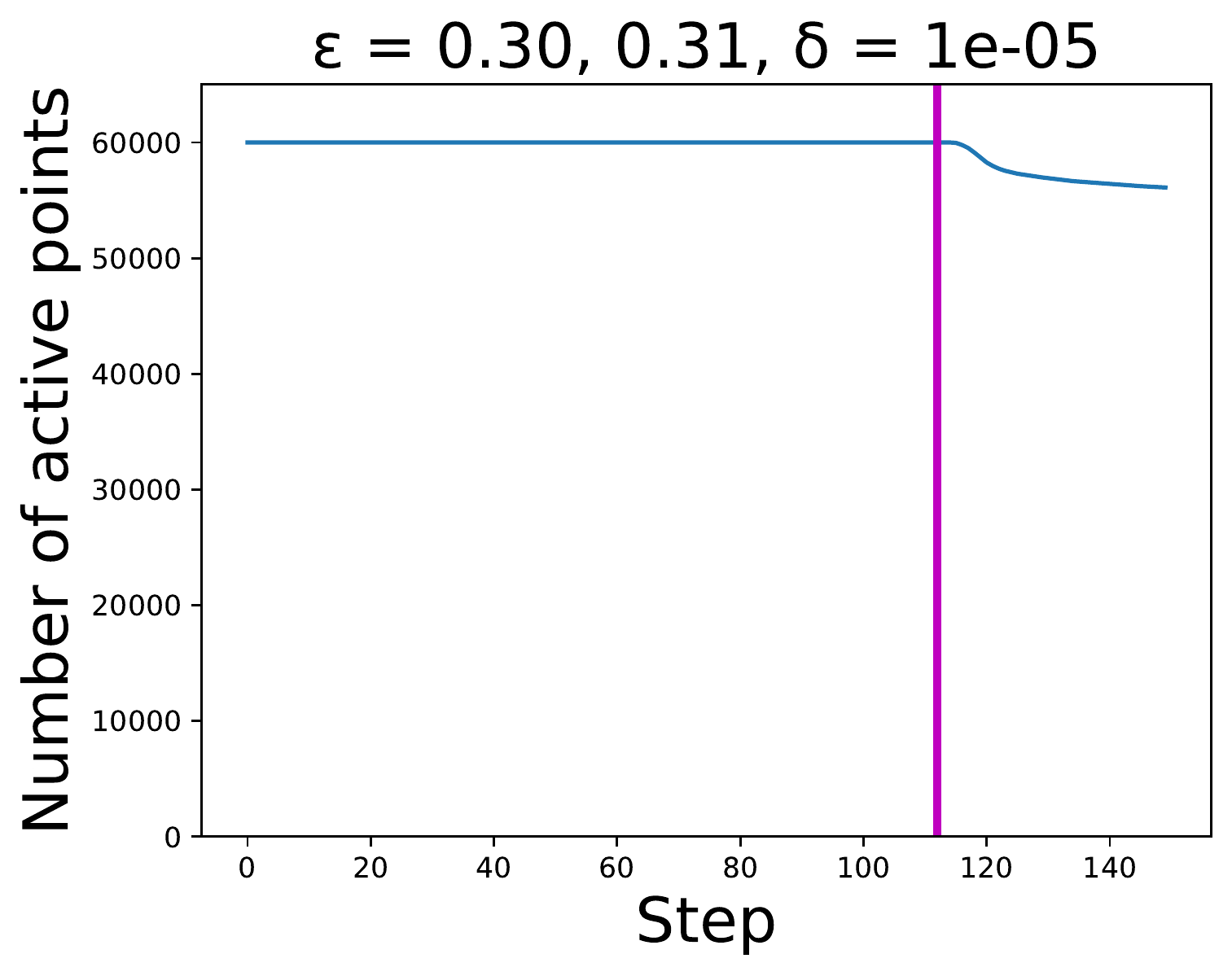}}%
    \caption{\textbf{Number of Active Data Points Over Training.} 
    }
    \label{fig:individual_accounting}
\vspace{-0.1cm}
\end{wrapfigure}

To assess the performance of pure individual accounting, we assign the same privacy budget of $0.3$ to all 60,000 training data points in MNIST, following the experimental setup by~\citet{renyi_filter}. 
We then enable the individual privacy assignment and change the privacy values to $\varepsilon=0.3$ for the first half of the points and $\varepsilon=0.31$ for the second half of data points. %
We empirically observe that in this low-$\varepsilon$ regime, the small change of $\varepsilon$ for half the data points is enough to cause significant improvement in the accuracy of the final model, as we show in \Cref{tab:compare-dpsgd2}.
Note that the differences in accuracy reported for standard DP-SGD on MNIST between \Cref{tab:compare-dpsgd2} and \Cref{tab:compare-dpsgd} differ. This is because of the different privacy budgets ($\varepsilon=0.3$ vs. $\varepsilon=1.0$).
Additionally, in~\Cref{fig:individual_accounting}, we present the number of active data points, \ie, data points that did not yet exhaust their individual privacy budget, over training. 
We find that many data points are used longer during training with individualized assignment compared to using only individualized accounting. This is because the data points with a higher privacy budget exhaust their budget later and, thus, can be used longer. 
We are only able to run this experiment on MNIST as done in \citet{renyi_filter} since the method is based on full-batch gradient descent: gradients are computed over the whole dataset at once, which limits its applicability to large and high-dimensional datasets.

\addtolength{\tabcolsep}{-2pt} 
\begin{table}[t]
  \caption{
  \textbf{Comparison between standard DP-SGD, individual accounting, and individual accounting+assignment.} 
  We present the test accuracy (\%) on MNIST for training with $\varepsilon=0.3$ for standard DP-SGD and individual accounting~\cite{renyi_filter}, while setting $\varepsilon=0.30$ and $\varepsilon=0.31$ for half of the points each when using individual accounting and assignment.
  Combining individualized privacy accounting and assignment further increases utility.
  }
  \label{tab:compare-dpsgd2}
  \begin{sc}
  \begin{center}
  \small
  \new{
  \begin{tabular}{ccc}
    \toprule
    Vanilla & Individual & Individual Accounting\\
    DP-SGD & Accounting & and Assignment \\
    \hline
    93.29 $\pm$ 0.49 & 93.64 $\pm$ 0.46 & 94.16 $\pm$ 0.23 \\
    \bottomrule
  \end{tabular}
  }
  \end{center}
  \end{sc}
\vspace{-0.6cm}
\end{table}
\addtolength{\tabcolsep}{2pt}

\section{Conclusion and Future Work}
\label{sec:conclusion}

Prior work on ML with DP guarantees assigns a uniform privacy budget $\varepsilon$ over the entire dataset.
This approach fails to capture that different individuals have different expectations towards privacy and also decreases the model's ability to learn from the data.
To overcome the limitations of a uniform privacy budget and to implement individual users' privacy preferences, we propose two modifications to the popular DP-SGD algorithm. 
Our \nameA and \nameB mechanisms adapt training to yield individual per-data point privacy guarantees and boost utility of the trained models.
For future work, we believe that our individualized privacy assignment should be closely integrated with a form of practical individual privacy accounting.
This could enable us to obtain a fine-grained notion of individualized privacy guarantees that tightly meet the users' expectations.

\section*{Acknowledgements}
We would like to acknowledge our sponsors, who support our research with financial and in-kind contributions:
this includes CIFAR through the Canada CIFAR AI
Chair and NSERC through the Discovery Grant. Resources used in
preparing this research were provided, in part, by the
Province of Ontario, the Government of Canada through
CIFAR, and companies sponsoring the Vector Institute.
As team member of the Center for Trustworthy Artificial Intelligence (CTAI), Christopher Mühl was funded by the German Federal Ministry for the Environment, Nature Conservation, Nuclear Safety and Consumer Protection.
We would also like to thank CleverHans lab group members for their feedback.

\newpage

\bibliographystyle{plainnat}
\bibliography{sample-base}

\newpage
\appendix

\section{Broader Impacts and Ethical Considerations of this Work}
\label{app:ethics}
The specification of an adequate privacy level $\varepsilon$ is challenging since it does not only depend on the domain but also on the dataset itself.
This challenge persists when assigning individual privacy levels over a dataset.
Especially in sensitive domains, one needs to make sure that the assignment of individual privacy guarantees does not pose an additional risk to the individuals whose data is processed.
Therefore, one needs to first make sure that assigning individual privacy guarantees is not abused. This can occur when the entity training the model violates underlying individuals' privacy by nudging them into choosing inadequate privacy levels.
Second, one needs to prevent individuals from giving up their privacy due to a lack of understanding or a wrong perception of their risk.
To do so, \ours should not be used as a stand-alone method.
Instead, following the example of \cite{iPATE}, we suggest incorporating it into a whole process that involves communicating the functioning of DP and the risks associated with a decision to the individuals, analyzing their preferences, and supporting them in an informed decision-making process.
There exist a growing body of work on the communication of DP methodology and associated risks to the public~\cite{xiong2020towards, cummings2021need, franzen2022private}. The most recent finding~\cite{franzen2022private} suggests that medical risk communication formats, such as percentages or frequencies, are best applicable to inform about privacy risks.
For the identification of privacy preferences, existing frameworks, such as~\cite{sorries2021privacy} can be applied.
Finally, we suggest giving individuals the choice between categorical privacy levels (\eg, \textit{high}, \textit{medium}, \textit{low}) while putting a regulatory entity, such as an ethics committee in charge of specifying concrete numeric values of $\varepsilon$.
This mitigates the disadvantages of the general limited interpretability of $\varepsilon$ that exist independently of individualization.

\subsection{Limitations}
In this work, we consider a setup where individuals indicate their privacy preferences.
We acknowledge that not all individuals are aware of their own privacy preferences and might, therefore, provide inaccurate indications.
Therefore, our framework should be always deployed in a protected setup as described above.
While we provide theoretical privacy guarantees in the framework of differential privacy, we assess the practical impact of these guarantees on the individuals solely through membership inference attacks. 
Future work should investigate the practical (disparate per privacy-group) impact on other known privacy risks, such as data reconstruction.
Due to the lack of sensitive-real world datasets that, in addition to individuals' data, also capture their privacy preferences, we evaluate our methods on standard benchmark datasets.
We, therefore, have to \textit{simulate} a privacy preference distribution on this data according to the distribution known from society~\citep{jensen2005privacy, berendt2005privacy}.

\subsection{Confidentiality of the Privacy Budgets}
\label{app:conf_of_budgets}
In some setups, an individual's choice of privacy budget could reflect some sensitive information. Imagine a medical context with a dataset that consists of individuals that do and individuals that do not have cancer.
The latter ones might be more inclined towards choosing higher privacy protection to hide their condition. If an attacker with access to the trained model was able to deduce the individual's privacy budget, they might, in turn, be able to draw conclusions on the individual's medical state.

Yet, theoretical results from ML, \eg,~\cite{gilbert2018property}, have shown that one cannot currently perform sample-efficient black-box audits to determine the privacy-budget of a trained model. 
Frameworks to estimate privacy of a trained model, such as~\cite{tramer2022debugging} rely on training 1000-100,000 shadow models to get an estimate of the full model’s privacy guarantee. In our IDP-SGD setup, the model does not have one single $\varepsilon$, but one $\varepsilon_p$ per group, aggravating the impracticability of auditing guarantees and limiting the applicability of existing frameworks. As a result, the adversary cannot currently determine the different per-group privacy budgets of the model.

Our MIA experiments in \Cref{sub:membership_inference} show a difference in privacy risks over entire privacy groups. First note that these experiments (\Cref{fig:LiRA-main}) were the most costly ones in our paper. We had to train more than 1500 (3 times 512) shadow models to generate the results. Yet, there are two limitations: a) The results only show differences in the groups, but do not reveal the respective $\varepsilon_p$ values, b) they do not allow to perform per-point distinctions. This is because the distributions of privacy risks over the two groups still have a large overlap. Hence, given a single point’s risks and the two distributions, one cannot predict without error to what group the point belongs. This error could be reduced when the privacy budgets between the groups differ significantly (0.1 vs. 100). Note however, that we suggest deploying the model in a setup where individuals simply specify their privacy preferences and the model builder or an ethics committee assign concrete epsilon values to these preferences. We observed that extreme differences between privacy groups’ are less beneficial because they degrade overall model performance. Hence, in practice, these large differences are very unlikely. As a result, it will be very challenging for an adversary to determine which privacy group a data point belongs to. Even if they manage to, they are unlikely to be able to learn the privacy budget of the group as described above.

To enhance confidentiality of privacy budgets, \citet{hdp} implement a form of deniability for users’ membership in a certain privacy group by uniformly sampling the privacy values over groups in an overlapping way: the unconcerned sample from {1.}, the pragmatists from {0.5, 0.75, 1.}, and the fundamentalists from {0, 0.5, 1}, such that on average, the values will be 1., 0.75, and 0.5 within the groups (0 denotes the strongest, while 1 is the weakest privacy guarantee). Such an approach can be implemented into our IDP-SGD framework as well. The drawback is that an individual who expresses strong privacy preferences (fundamentalists) might actually be assigned a privacy value 1 by randomness, which causes the same privacy leakage as for an unconcerned individual. Depending on the individuals’ preferences and the application, when protecting privacy group membership is more important than protecting the actual data, this approach might still be valid and this approach could be integrated with IDP-SGD, as well.

\section{Extended Background}
\label{app:extended_background}

\subsection{Differential Privacy}
\label{app:dp}
Differential Privacy (DP)~\cite{dwork2008differential} provides a theoretical upper bound on the influence that a single data point can have on the outcome of an analysis over the whole dataset.

The most commonly applied instantiation of DP is $(\varepsilon, \delta)$-DP which is defined as follows:
\begin{definition}
\label{df:Differential Privacy}
Let  $D, D' \subseteq \mathcal{D}$ be two neighboring datasets, \ie, datasets that differ in exactly one data point. %
Let further $M \colon \mathcal{D}^* \rightarrow \mathcal{R}$ be a mechanism that processes an arbitrary number of data points.
$M$ satisfies $(\varepsilon, \delta)$-DP if for all datasets $D \sim D'$, and for all result events $R \subseteq \mathcal{R}$
\begin{align}
    \mathbb{P}\left[M(D) \in R\right] \leq e^\varepsilon \cdot \mathbb{P}\left[M(D') \in R\right] + \delta \, .
\end{align}
\end{definition}
In the definition, the privacy level is specified by $\varepsilon \in \mathbb{R}_+$, while $\delta \in [0, 1]$ offers a relaxation, \ie, a small probability of violating the guarantees.

\subsection{Differential Privacy Accounting in Machine Learning}
Privacy accounting in DP-SGD is most commonly implemented by the moments accountant which keeps track of a bound on the moments of the privacy loss random variable at outcome $R$, defined by
\begin{equation}\label{eq:privacyloss}
  c(R; M,  \textsf{aux}, D, D') = \log \frac{\Pr[M( \textsf{aux}, D) \in R]}{\Pr[M( \textsf{aux}, D') \in R]}.
\end{equation}
for $D, D', M$ and $R$ as above, and an auxiliary input \textsf{aux}. %

\subsection{PATE Algorithm}
\label{app:PATE}

The Private Aggregation of Teacher Ensembles (PATE) algorithm~\cite{papernot2016semi} represents an alternative to DP-SGD for training ML models with privacy guarantees.
This ensemble-based algorithm implements privacy guarantees through a knowledge transfer from the ensemble to a separate student model.
More concretely, in PATE, the private training data is split into non-overlapping subsets and distributed among several teacher models.
Once each teacher is trained on their own data subset, they perform a privacy-preserving knowledge transfer by jointly labeling an additional unlabeled public dataset.
To implement DP guarantees, noise is added to the labeling process.
On completion, an independent student model is trained on the public dataset using the generated labels, and thereby incorporating knowledge about the original training data without ever requiring access to it. 

\subsection{Individualized PATE.}
The individualized PATE variants by \cite{iPATE} are \textit{Upsample} and \textit{Weight}.
Upsample duplicates data points and distributes them to different teachers in the PATE ensemble.
Utility increase in this method result from the availability of more training data points for the teachers.
Since the sensitivity for a duplicated data point increases (it can change the votes of all the teachers that are trained on its duplicates), privacy consumption of that data point is higher.
Since each data point can be duplicated individually, this method allows for a fine-grained individualization.
In contrast, the weight method allows for a per-teacher model privacy individualization. 
Data points with the same privacy budget are assigned to the same teacher model(s) and the impact of that teacher model's weight on the final vote is weighted according to its training data points' privacy preferences.
Teachers that are trained on data points with low privacy requirements are weighted higher, increasing the privacy consumption of their training data.

\subsection{The LiRA Membership Inference Attack}
\label{app:LiRA}
The LiRA membership inference attack~\cite{carlini2022membership} proceeds in three steps to determine which data points from a dataset $\mathcal{D}=\{x_1, \dots, x_N\}$ were used to train a target model $f$:
(1)~First, multiple shadow models, similar to $f$, are trained on different subsets of $\mathcal{D}$.
(2)~Then, the mean and variance of two loss distributions $\mathcal{N}(\mu_{\text{in}}, \sigma_{\text{in}})$ and $\mathcal{N}(\mu_{\text{out}}, \sigma_{\text{out}})$ are estimated per data point $x_i$.
Both distributions are calculated from the logits of $x_i$ at the target class $y_i$---the former one over the shadow models that $x_i$ is a member of, the latter one on shadow models that $x_i$ is not a member of.
(3)~Finally, the likelihood of a new data point $x$ of class $y$ being a member of the target model $f$ is calculated as $\Lambda = \frac{p(f(x)_{y}\ \mid\ \mathcal{N}(\mu_{\text{in}}, \sigma^2_{\text{in}}))}
    {p(f(x)_{y}\ \mid\ \mathcal{N}(\mu_{\text{out}}, \sigma^2_{\text{out}}))}$.

\section{Details on the Methods}

\subsection{\nameA}
\label{app:method-sample}

\paragraph{Leveraging Higher Sampling Rates for Increased Utility.}
\label{app:extended_sampling}
With higher sampling rates for certain data points, utility could, in principle be increased in several ways.
(1)~Larger sampling rates can be used to obtain higher mini-batch sizes $B$ (while keeping the number of training iterations $I$ constant).
Line~3 in \Cref{alg:dpsgd} shows that noise is added to the aggregate of all gradients.
Hence, with larger mini-batches, the signal-to-noise ratio is higher, which can improve training.
(2)~Alternatively, the mini-batch size $B$ can be kept constant while increasing the number of training iterations $I$.
Longer training can increase model performance.
However, these two approaches result in a change of core training hyperparamters (mini-batch size and number of iterations).
As we discuss in \Cref{sec:method}, changing training hyperparameters would require a separate fine-tuning, for example, to adapt the learning rate for larger mini-batches, as in (1) or longer training as in (2).
Since the training parameters would change according to the privacy budgets encountered in the private training dataset, and the ratios of these budgets over the training data points, the hyperparameter-tuning would have to be repeated whenever the dataset is updated, individuals change their privacy preferences, or decide to withdraw their consent for leveraging their data for the ML model alltogether, yielding significant overheads.
We, therefore, implement our \nameA according to the third option (3), described in \Cref{sub:individual_sampling} which leverages higher sampling rates for improved utility by reducing the \nm of the added noise $\sigma$.
This allows us to perform an apple to apple comparison between both our methods and to the standard DP-SGD.

\subsection{\nameB}
\label{app:method-scaling}

\paragraph{Deriving Noise Multiplier $\pmb{\sigc}$.}
Given the desired clip norm $c$ found through hyperparameter tuning of the standard DP-SGD, we set the individual clip norms such that their weighted average yields $c$ as $c = \frac{1}{N} \sum_{p=1}^{P} |G_p| \cdot c_p$. So, we derive the $\sigc$ in the following way:

\begin{align}
    c &= \frac{1}{N} \sum_{p=1}^{P} |G_p| \cdot c_p \label{align:sig1}\\ 
    c &= \frac{1}{N} \sum_{p=1}^{P} |G_p| \cdot \left(c \frac{\sigc}{\sigma_p}\right) \label{align:sig2}\\ 
    c &= c \sigc \frac{1}{N} \sum_{p=1}^{P} \frac{|G_p|}{\sigma_p}  \label{align:sig3}\\ 
    \sigc &= \left(\frac{1}{N} \sum_{p=1}^{P} \frac{|G_p|}{\sigma_p}\right)^{-1} \label{align:sig4}
\end{align}

From (\ref{align:sig1}) to~(\ref{align:sig2}), we use the equality between the scale of added noise $\sigc c = \sigma_p c_p$. In~(\ref{align:sig3}), we extract terms that are independent of the privacy groups ($c$ and $\sigc$) before the summation.

\subsection{Algorithmic Details}
\label{app:algos}
We specify our used sub-routines used for determining a sample rate or \nm based on given privacy parameters in \Cref{alg:getSampleRate} and \Cref{alg:getNoiseMultiplier}, respectively. 
\begin{algorithm}[h]
	\caption{\textbf{Subroutine getSampleRate.} Is the equivalent to Opacus' function get\_noise\_multiplier~\cite{opacusNoise} for deriving an adequate sample rate for given paramteters.}\label{alg:getSampleRate}
	\begin{algorithmic}[1]
		\REQUIRE Target $\varepsilon$, target $\delta$, iterations $I$, \nm $\sigma$, precision $\gamma=0.01$
    \STATE {{\bf init} $\varepsilon_{\text{high}}$:} $\varepsilon_{\text{high}} \gets \infty$ 
    \STATE {{\bf init} $q_{\text{low}}, q_{\text{high}}$:} $q_{\text{low}} \gets 1\mathrm{e}{-9}, q_{\text{high}} \gets 0.1$ 
    \WHILE{$\varepsilon_{\text{high}} > \varepsilon$}
    \STATE $q_{\text{high}} \gets 2q_{\text{high}}$
    \STATE {$\varepsilon_{\text{high}} \gets I\cdot 2q_{\text{high}}^2\frac{1}{\sigma^2}$} \COMMENT{approximate $\varepsilon$ according to \Cref{align:sgm_privacy}, we suppress $\alpha$ for simplicity}
    \ENDWHILE
    \WHILE{$\varepsilon - \varepsilon_{\text{high}} > \gamma$}
    \STATE $q \gets (q_{\text{low}}+ q_{\text{high}}) / 2$
    \STATE {$\varepsilon_{\text{temp}} \gets I\cdot 2q^2\frac{1}{\sigma^2}$} \COMMENT{approximate $\varepsilon$ according to \Cref{align:sgm_privacy}, we suppress $\alpha$ for simplicity}
    \IF{$\varepsilon_{\text{temp}} < \varepsilon$}
    \STATE $q_{\text{high}} \gets q$
    \STATE $\varepsilon_{\text{high}} \gets \varepsilon_{\text{temp}}$
    \ELSE 
    \STATE $q_{\text{low}} \gets q$
    \ENDIF
    \ENDWHILE
    \STATE {\bf Output} $q_{\text{high}}$
	\end{algorithmic}
\end{algorithm}

\begin{algorithm}[h]
	\caption{\textbf{Subroutine getNoise.} Implements Opacus' function get\_noise\_multiplier~\cite{opacusNoise}.}\label{alg:getNoiseMultiplier}
	\begin{algorithmic}[1]
	\REQUIRE Target $\varepsilon$, target $\delta$, iterations $I$, sample rate $q$, precision $\gamma=0.01$
    \STATE {{\bf init} $\varepsilon_{\text{high}}$:} $\varepsilon_{\text{high}} \gets \infty$ 
    \STATE {{\bf init} $\sigma_{\text{low}}, \sigma_{\text{high}}$:} $\sigma_{\text{low}} \gets 0, \sigma_{\text{high}} \gets 10$ 
    \WHILE{$\varepsilon_{\text{high}} > \varepsilon$}
    \STATE $\sigma_{\text{high}} \gets 2\sigma_{\text{high}}$
    \STATE {$\varepsilon_{\text{high}} \gets I\cdot 2q^2\frac{1}{\sigma_{\text{high}}^2}$} \COMMENT{approximate $\varepsilon$ according to \Cref{align:sgm_privacy}, we suppress $\alpha$ for simplicity}
    \ENDWHILE
    \WHILE{$\varepsilon - \varepsilon_{\text{high}} > \gamma$}
    \STATE $\sigma \gets (\sigma_{\text{low}}+ \sigma_{\text{high}}) / 2$
    \STATE {$\varepsilon_{\text{temp}} \gets I\cdot 2q^2\frac{1}{\sigma^2}$} \COMMENT{approximate $\varepsilon$ according to \Cref{align:sgm_privacy}, we suppress $\alpha$ for simplicity}
    \IF{$\varepsilon_{\text{temp}} < \varepsilon$}
    \STATE $\sigma_{\text{high}} \gets \sigma$
    \STATE $\varepsilon_{\text{high}} \gets \varepsilon_{\text{temp}}$
    \ELSE 
    \STATE $\sigma_{\text{low}} \gets \sigma$
    \ENDIF
    \ENDWHILE
    \STATE {\bf Output} $\sigma_{\text{high}}$
	\end{algorithmic}
\end{algorithm}

\section{Additional Empirical Evaluation}
\label{app:experiments}

We report the hyperparameters found for our individualized methods in \Cref{tab:hyperparameters_individual}.
The training and standard DP-SGD hyperparameters are specified in \Cref{tab:hyperparameters}.
The performance of our individualized methods when using the hyperparameters of standard DP-SGD is presented in \Cref{tab:compare-dpsgd}.
Already when using these (non-individually tuned) hyperparameters, our methods yield a significant performance increase in comparison to standard DP-SGD.
For MNIST, individual hyperparameter for our methods and individual setups did not yield significant improvements, therefore the results presented in \Cref{tab:compare-dpsgd} and \Cref{tab:compare-dpsgd-individual} are identical for MNIST.

\paragraph{Computing Resources.}
The implementation of our methods does not increase computation time over the standard implementation of DP-SGD apart from the derivation of the privacy parameters that is performed once at the beginning of training.
Hence, to run all experiments around our methods and their evaluation, we required, in total less than 16h of GPU time on a standard GeForce RTX 2080 Ti.
We ran the experiment on combining individualized privacy assignment and accounting on the same machines RTX 2080Ti and the total compute time is also around 2h.
To train all the shadow models for our membership inference attack and run inference on them, we ran on an A100 GPU and required a total runtime of roughly 32 hours.

\addtolength{\tabcolsep}{-2.0pt} 
\begin{table}[h!]
  \caption{
  \textbf{DP-SGD Hyperparameters.} LR: learning rate, B: expected mini-batch size, I: number of iterations, C: clip norm, $\sigd$: noise multiplier in DP-SGD derived from the desired privacy budget $\varepsilon=1$. Default target $\delta=0.00001$.
  }
  \label{tab:hyperparameters}
  \begin{sc}
  \begin{center}
  \small
  \new{
  \begin{tabular}{cccccc}
    \toprule
    \textbf{Dataset} &  LR & B & I & C & $\sigd$ \\ 
    \midrule
    MNIST & 0.6 & 512 & 9375$\sim$80 epochs & 0.2 & 3.42529 \\ %
    SVHN & 0.2 & 1024 & 2146$\sim$30 epochs&0.9 & 2.74658\\
    CIFAR10 & 0.7& 1024& 1465$\sim$30 epochs& 0.4 & 3.29346 \\
    \bottomrule
  \end{tabular}
  }
  \end{center}
  \end{sc}
\vspace{-4pt}
\end{table}
\addtolength{\tabcolsep}{2pt}

\addtolength{\tabcolsep}{-2.0pt} 
\begin{table}[h!]
  \caption{
  \textbf{DP-SGD Hyperparameters (Individually Tuned).} LR: learning rate, B: expected mini-batch size, I: number of iterations, C: clip norm, $\sigd$. Default target $\delta=0.00001$. 
  Setup A is for privacy budgets $\varepsilon=\{1.0,2.0,3.0\}$ and their respective distribution of \textit{34\%-43\%-23\%}. Setup B is for the same privacy budgets but with their distributions \textit{54\%-37\%-9\%}.
  }
  \label{tab:hyperparameters_individual}
  \begin{sc}
  \begin{center}
  \small
  \new{
  \begin{tabular}{cccccccc}
    \toprule
    \textbf{Dataset} & Method & Setup&  LR & B & I & C & $\sigd$ \\ 
    \midrule
    MNIST & \nameA & A & 0.6 & 512 & 9375$\sim$80 epochs & 0.2 & 3.42529 \\ %
    SVHN & \nameA & A & 0.2 & 1024 & 5723$\sim$80 epochs & 0.6 & 2.53261 \\
    CIFAR10 & \nameA & A & 0.2& 1024& 2929$\sim$60 epochs& 1.0 & 2.65712 \\
    \hline
    MNIST & \nameA & B & 0.6 & 512 & 9375$\sim$80 epochs & 0.2 & 3.42529 \\ %
    SVHN & \nameA & B & 0.1 & 1024 & 3577$\sim$50 epochs & 0.6 & 2.41421 \\
    CIFAR10 & \nameA & B & 0.1& 1024& 2929$\sim$60 epochs& 1.8 & 3.14049 \\
    \hline
    MNIST & \nameB & A & 0.6 & 512 & 9375$\sim$80 epochs & 0.2 & 3.42529 \\ %
    SVHN & \nameB & A & 0.1 & 1024 & 3577$\sim$50 epochs & 2.0 & 2.09719 \\
    CIFAR10 & \nameB & A & 0.2 & 1024 & 3418$\sim$70 epochs& 1.1 & 2.88335 \\
    \hline
    MNIST & \nameB & B & 0.6 & 512 & 9375$\sim$80 epochs & 0.2 & 3.42529 \\ %
    SVHN & \nameB & B & 0.1 & 1024 & 3577$\sim$50 epochs & 1.6 & 2.45703 \\
    CIFAR10 & \nameB & B & 0.1& 1024& 2929$\sim$60 epochs& 1.8 & 3.17792 \\
    \bottomrule
  \end{tabular}
  }
  \end{center}
  \end{sc}
\vspace{-4pt}
\end{table}
\addtolength{\tabcolsep}{2pt}

We present the individualized privacy parameters identified for our methods in \Cref{tab:individualization_parms}.
\addtolength{\tabcolsep}{-2.0pt} 
\begin{table}[h]
  \caption{\textbf{Individualization Parameters Computed by our Methods for \Cref{tab:compare-dpsgd}.} 
  We report the individualized privacy parameters identified for our \nameB and \nameA by \Cref{alg:clipping} and \Cref{alg:sampling}, respectively.
  The parameters are obtained on the MNIST, SVHN, and CIFAR10 datasets when using the privacy budget distributions of \Cref{tab:compare-dpsgd} with $\varepsilon=\{1.0,2.0,3.0\}$
  }
  \label{tab:individualization_parms}
  \begin{sc}
  \begin{center}
  \tiny
  \begin{tabular}{ccccc|ccc|cc}
    \toprule
    \multirow{2}{*}{} & & \multicolumn{3}{c|}{\textbf{DP-SGD}} & \multicolumn{3}{c|}{\textbf{\nameB}} & \multicolumn{2}{c}{\textbf{\nameA}}\\
    \textbf{Dataset} & \textbf{Setup} & $\pmb{\sigma}$& $\pmb{c} $& $\pmb{q}$ & $\pmb{\sigc}$ & $\pmb{\sigss}$&$\pmb{\ccs}$ &  $\pmb{\sigs}$ & $\pmb{\qqs}$\\
    \hline
    \multirow{2}{*}{MNIST} & \textit{34\%-43\%-23\%} & 3.425 &0.2 & 0.008 & 2.063 & \{2.189, 1.310, 1.032\}&\{0.129, 0.216, 0.274\} & 2.024 & \{0.005, 0.009, 0.013\}  \\
    \cdashlinelr{2-10}
    & \textit{54\%-37\%-9\%} &3.425 &0.2 & 0.008 & 2.418 & \{2.189, 1.310, 1.032\}&\{0.148, 0.248, 0.315\} & 2.376 & \{0.006, 0.011, 0.016\} \\
    \hline
    \multirow{2}{*}{SVHN} & \textit{34\%-43\%-23\%} &2.747&0.9&0.014& 1.896 &\{2.747, 1.589, 1.214\} &\{0.561, 0.970,   1.270\} &1.667 & \{0.008, 0.015, 0.021\}\\
    \cdashlinelr{2-10}
    & \textit{54\%-37\%-9\%} &2.747&0.9&0.014& 2.180 & \{2.747, 1.589, 1.214\}&\{0.651, 1.125, 1.472\}& 1.937& \{0.009, 0.018,  0.025\}\\
    \hline
    \multirow{2}{*}{CIFAR10} & \textit{34\%-43\%-23\%} &3.293&0.4&0.020& 2.244 & \{3.294, 1.868, 1.399\}&\{0.244, 0.430, 0.574\}& 1.965 & \{0.012, 0.022, 0.031\}\\
    \cdashlinelr{2- 10}
    & \textit{54\%-37\%-9\%} &3.293&0.4&0.020& 2.594& \{3.294, 1.868, 1.399\}&\{0.285, 0.502, 0.671\}& 2.300 & \{0.014, 0.026, 0.037\}\\
    \bottomrule
  \end{tabular}
  \end{center}
  \end{sc}
\vspace{-4pt}
\end{table}
\addtolength{\tabcolsep}{1.0pt}

\addtolength{\tabcolsep}{0.0pt} 
\begin{table}[t]
  \caption{\textbf{Model Test Accuracy after training with Standard DP-SGD vs our Individualized DP-SGD} using \nameA or \nameB. %
  \textbf{D} is the distribution of privacy groups (percentages) and $\pmb{\varepsilon}$ the privacy budget for a given group. 
  The percentages of the three privacy groups are chosen according to~\citet{hdp} (first setup) and~\cite{upem} (second setup). 
  We used the hyperparameters found for standard DP-SGD, see \Cref{tab:hyperparameters} and report the standard deviation over 10 trials.
  }
  \label{tab:compare-dpsgd}
  \begin{sc}
  \begin{center}
  \scriptsize
  \begin{tabular}{cccccc}
    \toprule
    \textbf{Dataset} & \multicolumn{2}{c}{\textbf{Setup}} & \textbf{DP-SGD} & \textbf{\nameA} & \textbf{\nameB} \\
    \hline
    \multirow{4}{*}{MNIST} & \textit{\textbf{D}} & \textit{34\%-43\%-23\%} & 96.75  & \textbf{97.81} & 97.78 \\
    & \textit{$\pmb{\varepsilon}$} & \textit{1.0-2.0-3.0} & $\pm$ 0.15  & $\pm$ \textbf{0.09} & $\pm$ 0.08 \\
    \cdashlinelr{2-6}
    & \textit{\textbf{D}} & \textit{54\%-37\%-9\%} & 96.75  & \textbf{97.6} & 97.54 \\
    & \textit{$\pmb{\varepsilon}$} & \textit{1.0-2.0-3.0} & $\pm$ 0.15  & $\pm$ \textbf{0.11}  & 0.09 \\
    \hline
    \multirow{4}{*}{SVHN} & \textit{\textbf{D}} & \textit{34\%-43\%-23\%} & 83.26  & \textbf{84.56} & 84.48\\
    & \textit{$\pmb{\varepsilon}$} & \textit{1.0-2.0-3.0} & $\pm$0.31  & \textbf{$\pm$0.25} & $\pm$0.25\\
    \cdashlinelr{2-6}
    & \textit{\textbf{D}} & \textit{54\%-37\%-9\%} & 83.26  & \textbf{84.32} & 84.31 \\
    & \textit{$\pmb{\varepsilon}$} & \textit{1.0-2.0-3.0} & $\pm$0.31 & \textbf{$\pm$0.31} & $\pm$0.26 \\
    \hline
    \multirow{4}{*}{CIFAR10} & \textit{\textbf{D}} & \textit{34\%-43\%-23\%} & 52.77 & 54.89  & \textbf{54.92} \\
    & \textit{$\pmb{\varepsilon}$} & \textit{1.0-2.0-3.0} & $\pm$ 0.65 & $\pm$ 0.55 & \textbf{$\pm$0.63} \\
    \cdashlinelr{2- 6}
    & \textit{\textbf{D}} & \textit{54\%-37\%-9\%} & 52.77  & 54.88 & \textbf{55.00}\\
    & \textit{$\pmb{\varepsilon}$} & \textit{1.0-2.0-3.0} & $\pm$ 0.65  & $\pm$ 0.45 & \textbf{$\pm$ 0.65} \\
    \bottomrule
  \end{tabular}
  \end{center}
  \end{sc}
\vspace{-0.2in}
\end{table}
\addtolength{\tabcolsep}{0.0pt}

\subsection{Privacy Consumption of our Methods}
\label{app:calibration}
We track privacy consumption of our methods over the course of training in \Cref{fig:privacy_spending}.
The figure highlights the good calibration of our methods which causes all privacy groups to exhaust their budget after the pre-specified number of training iterations.
\begin{figure}[h]
  \centering
  \begin{subfigure}[t]{.3\textwidth}
    \centering
    \includegraphics[width=\linewidth]{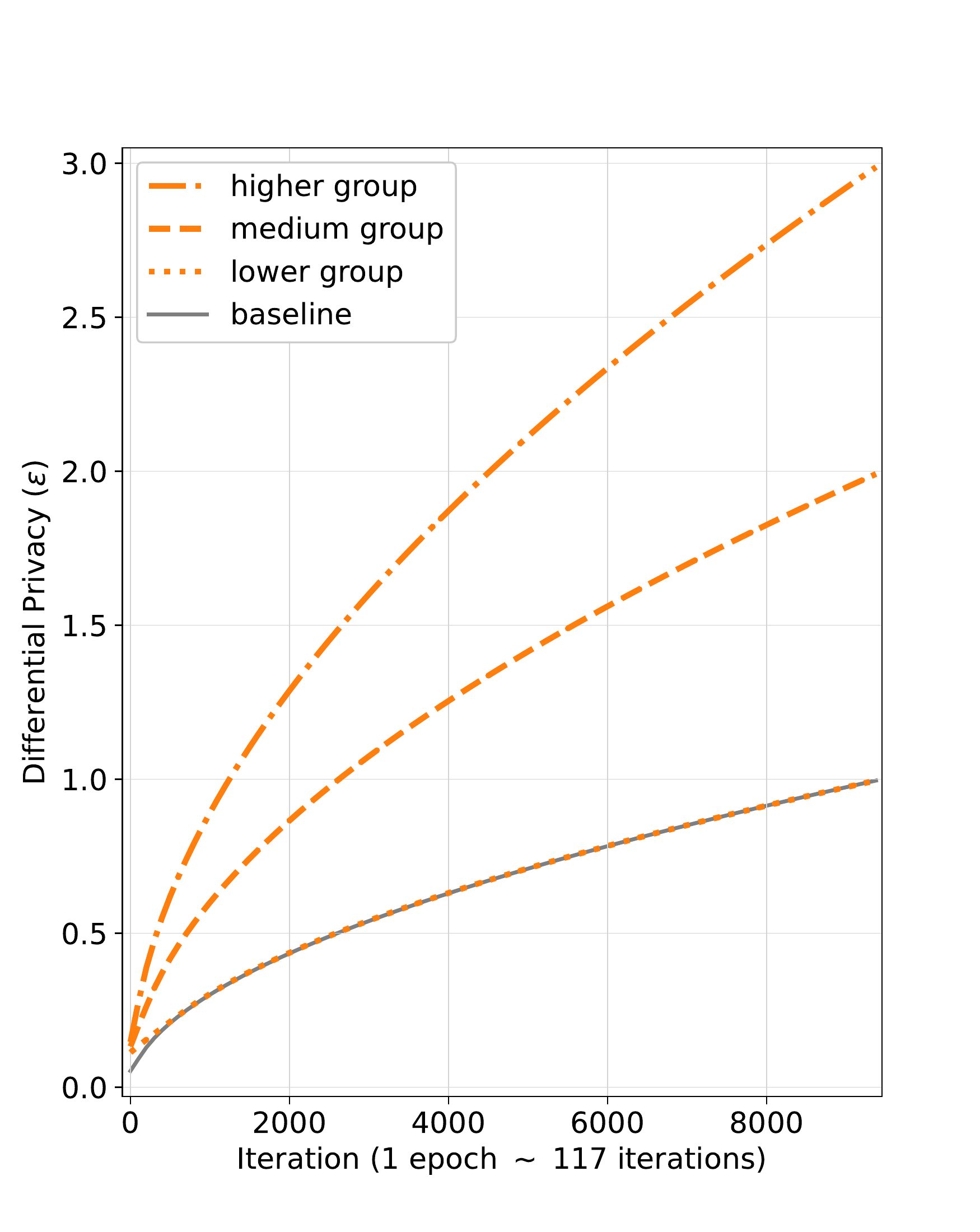}
    \caption{\nameA}
  \end{subfigure}
  \qquad
  \begin{subfigure}[t]{.3\textwidth}
    \centering
    \includegraphics[width=\linewidth]{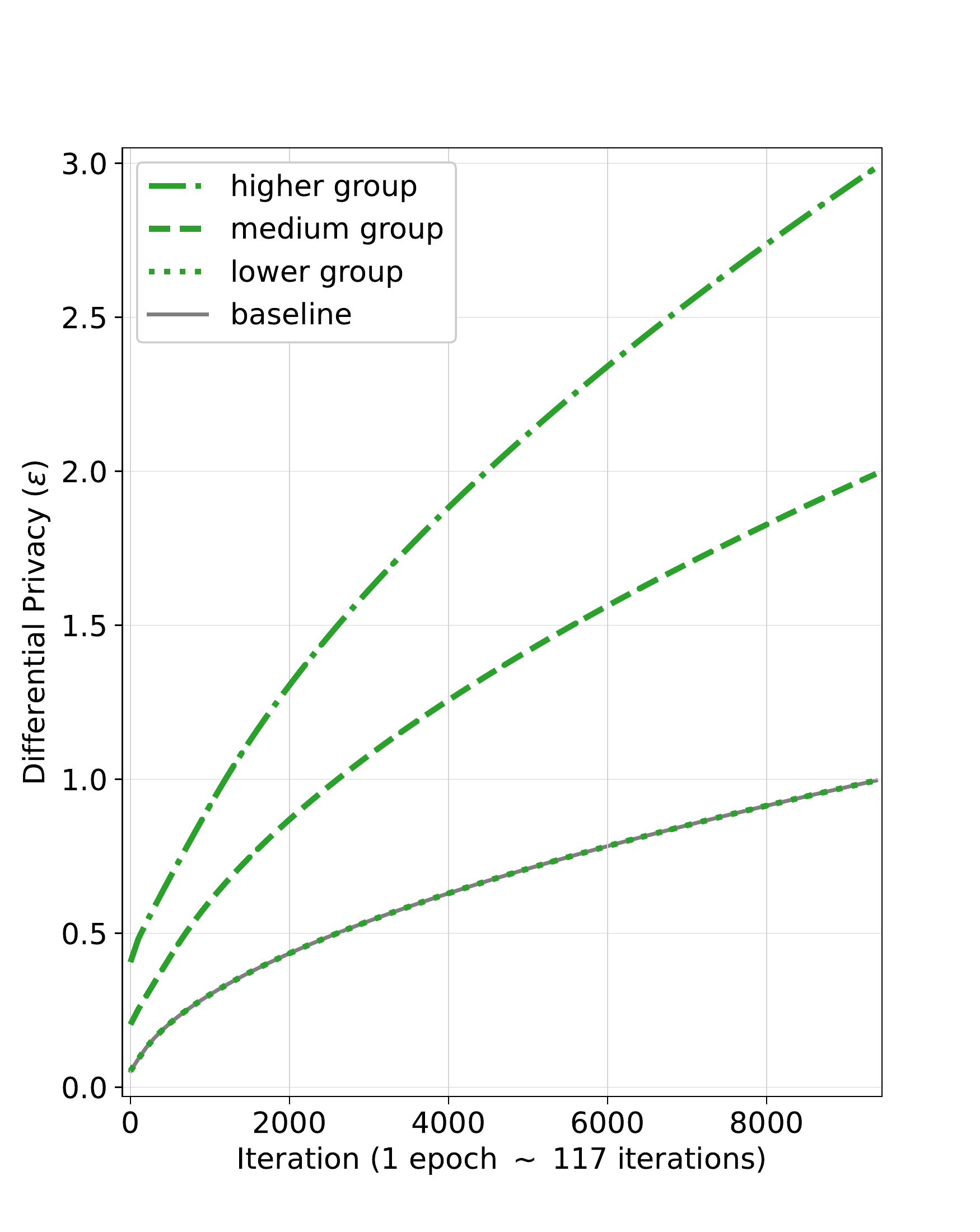}
    \caption{\nameB}
  \end{subfigure}
  \caption{\textbf{Individual Privacy Costs on MNIST} for $\varepsilon \in \{1, 2, 3\}$ with Distribution $(54\%, 37\%, 9\%)$.}\label{fig:privacy_spending}
\end{figure}

\subsection{General Applicability of our Methods}
\label{sec:ablation_study}

We showcase the practical impact of individual privacy assignments on individual utility and demonstrate how our methods extend to many privacy groups and privacy budget distributions. 

\paragraph{Practical Impact.} We run experiments on the CIFAR10 dataset where we assign higher or lower privacy budgets to one of the 10 classes. 
We select all data points from the class 0 (which we can think of, for example, as an underprivileged group) as one privacy group and assign to it either higher ($\varepsilon=3$), the same ($\varepsilon=2$), or lower ($\varepsilon=1$) privacy budgets in comparison to all other data points from the other classes ($\varepsilon=2$). 
\Cref{tab:bias-full} shows that the choice of privacy budget for a single group also impacts the other groups.
We observe that by only changing the privacy budget for the selected group (in this case for class 0), we can flip its performance (its accuracy from being higher to being lower) in comparison to the accuracy of the other group (consisting of remaining classes). 
In the example of class 0, the accuracy is 67.76\% when assigned high privacy budget ($\varepsilon=3$), which is a higher accuracy than for all other classes that have an average accuracy of around 53.41\% and assigned the privacy budget $\varepsilon=2$. Then, by modifying only the privacy budget of class 0 and by assigning to it the low privacy budget ($\varepsilon=1$), its accuracy drops to a mere 25.76\% and is below the accuracy of 56.58\% for the remaining classes.
We visualize the impact of the chosen privacy budget on utility over all classes (instead of only the class 0) in \Cref{fig:CIFARheatmap} and \Cref{fig:MNISTheatmap} for CIFAR10 and MNIST, respectively.

\begin{table}[t]
\centering
\caption{\textbf{100 privacy groups on MNIST.} 
We randomly assign MNIST training data points to one of the 100 privacy groups with respective $\varepsilon_p$ sampled uniformly at random from $[1,6]$.
We compare against the DP-SGD baseline with $\varepsilon=1$ and observe that our methods converge well and outperform the baseline.}
\label{tab:100_groups}
\begin{tabular}{cccc}
\toprule
                           & DP-SGD & \nameA & \nameB \\
\midrule
Test Acc. (\%)          &     96.75$\pm$0.15     &  98.17$\pm$0.18     &  98.21$\pm$0.1  \\
\bottomrule
\end{tabular}
\end{table}

\paragraph{More Privacy Groups.} 
We present in \Cref{tab:10groups} the test accuracy for ten privacy groups, corresponding to the ten classes of the CIFAR10 dataset when each of the privacy groups obtains a different privacy budget.
We obtained these budgets by manually tuning them such that the accuracy gap between the privacy groups is minimized.
We also visualize the accuracy over training in \Cref{fig:balance_histories}.
The figure visualizes that our methods are able to make all privacy groups converge to similar accuracies.

To evaluate IDP-SGD in the limit, we perform additional experiments with 100 privacy groups by randomly assigning MNIST training data points to one of the 100 privacy groups with respective $\varepsilon_p$ sampled uniformly at random from $[1,6]$.
Our results in \Cref{tab:100_groups} indicate that IDP-SGD is able to handle this large number of privacy groups without any degradation in terms of convergence: both our methods, \nameA and \nameB, continuously outperform the standard DP-SGD baseline with $\varepsilon=1$ for all data points.

\addtolength{\tabcolsep}{-1.2pt} 
\begin{table}[h]
 \caption{\textbf{Accuracy for Subgroups.}
 We assess the accuracy of subgroups when their privacy budgets differ. We select a single class for a given group and assign either higher, the same, or lower privacy budgets in comparison to groups with other classes. We change the privacy budgets only for bolded classes in a given experiment while all other classes have the same privacy budget ($\varepsilon=2$).
 }
 \vspace{-0ex}
    \label{tab:bias-full}
    \small
    \begin{center}
    \begin{tabular}{cccc}
    \toprule
        \multirow{2}{*}{Classes}  & \multicolumn{3}{c}{Privacy Budget} \\ 
        \cmidrule{2-4}
         & Higher ($\varepsilon=3$) & Same ($\varepsilon=2$) & Lower ($\varepsilon=1$) \\
         \cmidrule{1-4}
         \textbf{0} & \textbf{67.76 $\pm$ 2.14} & $55.24 \pm 1.98$ & $25.76 \pm 2.52$ \\
         \cdashlinelr{1-4}
         1-9 & $53.41 \pm 2.2$ & $54.72 \pm 2.49$ & \textbf{56.58 $\pm$ 2.29} \\
         \cmidrule{1-4}
         \textbf{1} & \textbf{80.84 $\pm$ 1.2} & $72.79 \pm 1.6$ & $46.09 \pm 3.91$ \\
         \cdashlinelr{1-4}
         0,2-9 & $51.65 \pm 2.28$ & $52.77 \pm 2.53$ & \textbf{54.59 $\pm$ 2.23} \\
         \cmidrule{1-4}
         \textbf{2} & $51.31 \pm 3.53$ & $36.59 \pm 3.33$ & $9.53 \pm 1.41$ \\
         \cdashlinelr{1-4}
         0-1,3-9 & \textbf{54.26 $\pm$ 2.84} & $56.79 \pm 2.34$ & \textbf{58.87 $\pm$ 2.18} \\
         \cmidrule{1-4}
         \textbf{3} & $52.88 \pm 2.6$ & $32.64 \pm 1.91$ & $6.67 \pm 1.09$ \\
         \cdashlinelr{1-4}
         0-2,4-9 & \textbf{54.62 $\pm$ 2.42} & $57.23 \pm 2.5$ & \textbf{58.75 $\pm$ 2.42} \\
         \cmidrule{1-4}
         \textbf{4} & \textbf{56.99 $\pm$ 1.87} & $40.06 \pm 2.88$ & $9.44 \pm 1.68$ \\
         \cdashlinelr{1-4}
         0-3,5-9 & $54.41 \pm 2.21$ & $56.41 \pm 2.39$ & \textbf{58.18 $\pm$ 2.02} \\
         \cmidrule{1-4}
         \textbf{5} & \textbf{64.11 $\pm$ 2.27} & $51.86 \pm 2.21$ & $15.04 \pm 2.31$ \\
         \cdashlinelr{1-4}
         0-4,6-9 & $53.28 \pm 2.16$ & $55.1 \pm 2.46$ & \textbf{57.54 $\pm$ 2.49} \\
         \cmidrule{1-4}
         \textbf{6} & \textbf{73.54 $\pm$ 2.36} & $65.8 \pm 4.25$ & $40.6 \pm 4.18$ \\
         \cdashlinelr{1-4}
         0-5,7-9 & $52.05 \pm 2.21$ & $53.55 \pm 2.23$ & \textbf{56.06 $\pm$ 2.33} \\
         \cmidrule{1-4}
         \textbf{7} & \textbf{68.22 $\pm$ 1.17} & $61.15 \pm 1.99$ & $41.08 \pm 2.75$ \\
         \cdashlinelr{1-4}
         0-6,8-9 & $52.5 \pm 2.43$ & $54.07 \pm 2.49$ & \textbf{56.01 $\pm$ 2.76} \\
         \cmidrule{1-4}
         \textbf{8} & \textbf{77.39 $\pm$ 1.24} & $68.53 \pm 2.34$ & $37.42 \pm 3.37$ \\
         \cdashlinelr{1-4}
         0-7,9 & $51.51 \pm 2.54$ & $53.25 \pm 2.45$ & \textbf{55.28 $\pm$ 2.37} \\
         \cmidrule{1-4}
         \textbf{9} & \textbf{72.82 $\pm$ 1.48} & $63.08 \pm 1.9$ & $32.79 \pm 2.73$ \\
         \cdashlinelr{1-4}
         1-8 & $52.05 \pm 2.32$ & $53.85 \pm 2.5$ & \textbf{56.06 $\pm$ 2.52} \\
   \bottomrule
\end{tabular}
\end{center}
\end{table}
\addtolength{\tabcolsep}{1.2pt}

\begin{figure}[h]
  \centering
  \begin{subfigure}[t]{.475\textwidth}
    \centering
    \includegraphics[width=\linewidth]{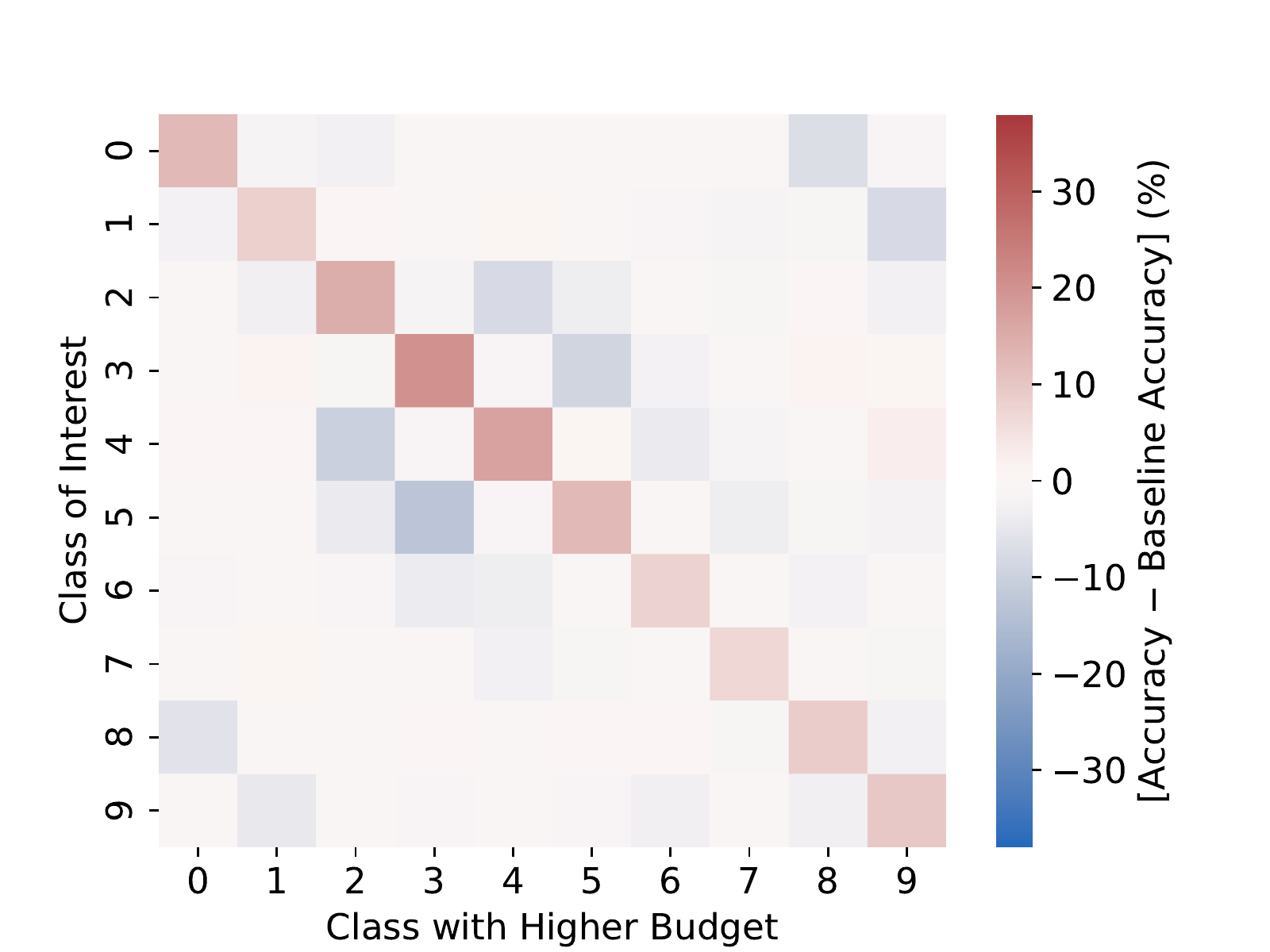}
    \caption{\nameA (higher budget)}
  \end{subfigure}
  \hfill
  \begin{subfigure}[t]{.475\textwidth}
    \centering
    \includegraphics[width=\linewidth]{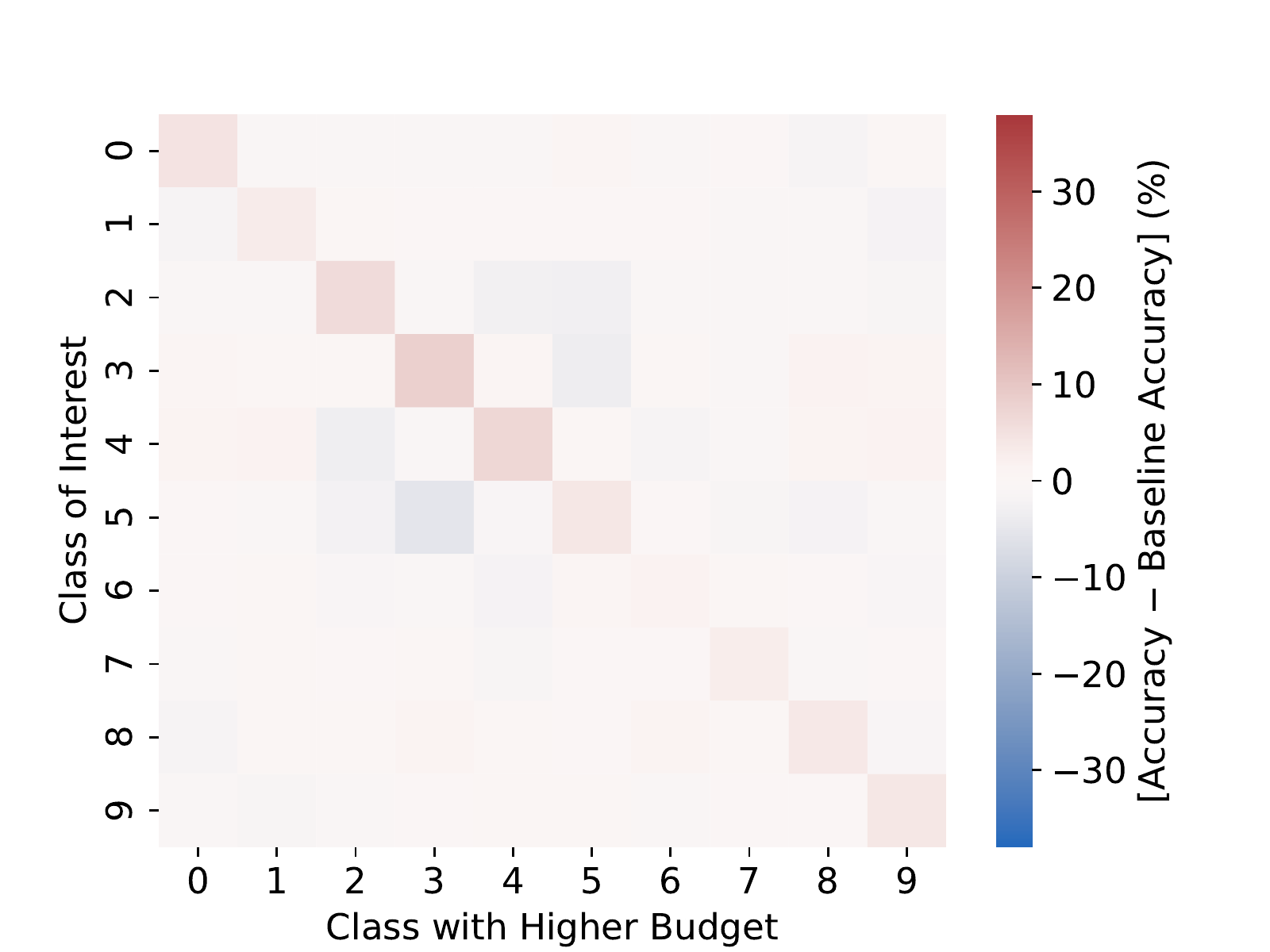}
    \caption{\nameB (higher budget)}
  \end{subfigure}

  \medskip

  \begin{subfigure}[t]{.475\textwidth}
    \centering
    \includegraphics[width=\linewidth]{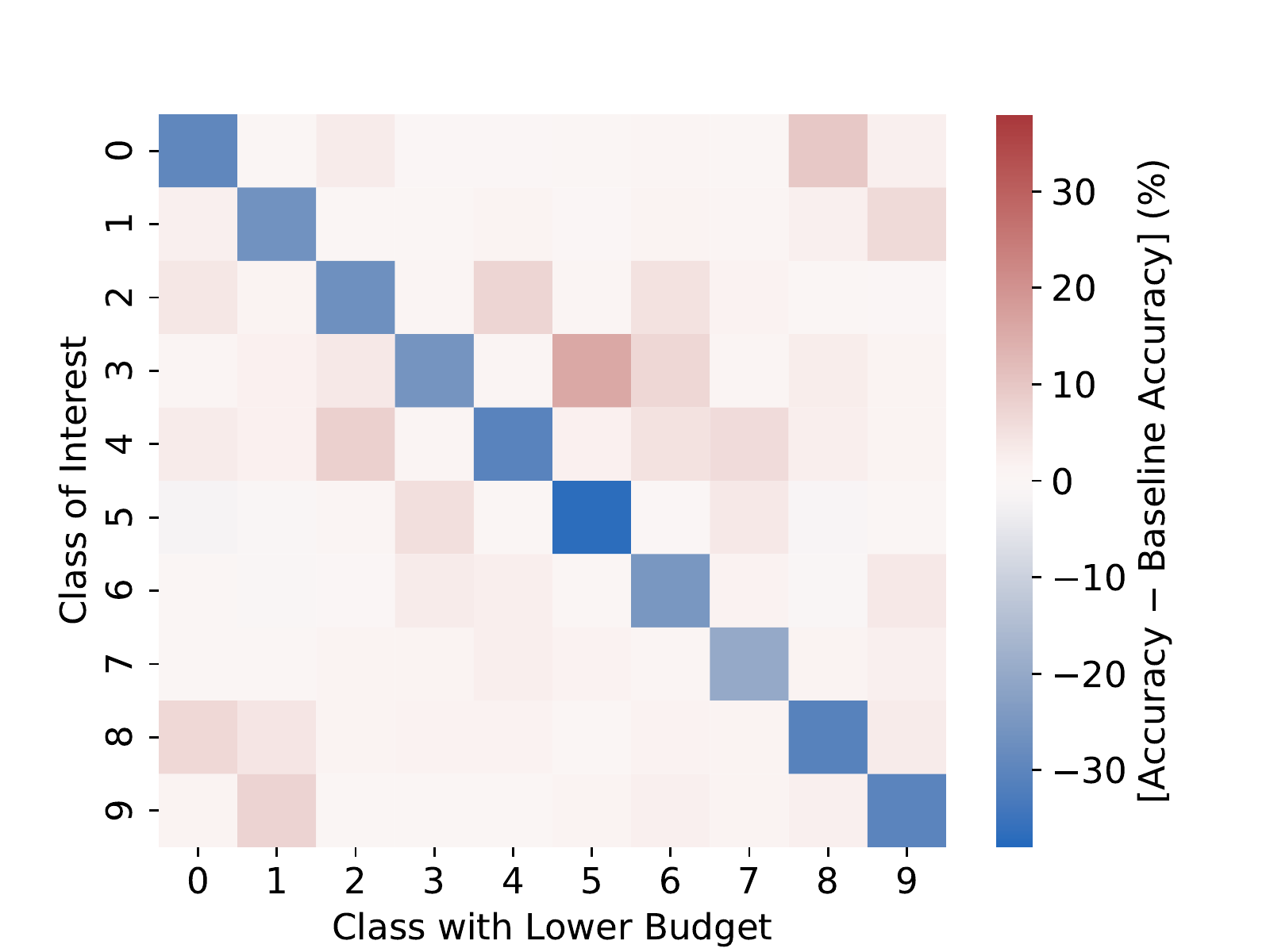}
    \caption{\nameA (lower budget)}
  \end{subfigure}
  \hfill
  \begin{subfigure}[t]{.475\textwidth}
    \centering
    \includegraphics[width=\linewidth]{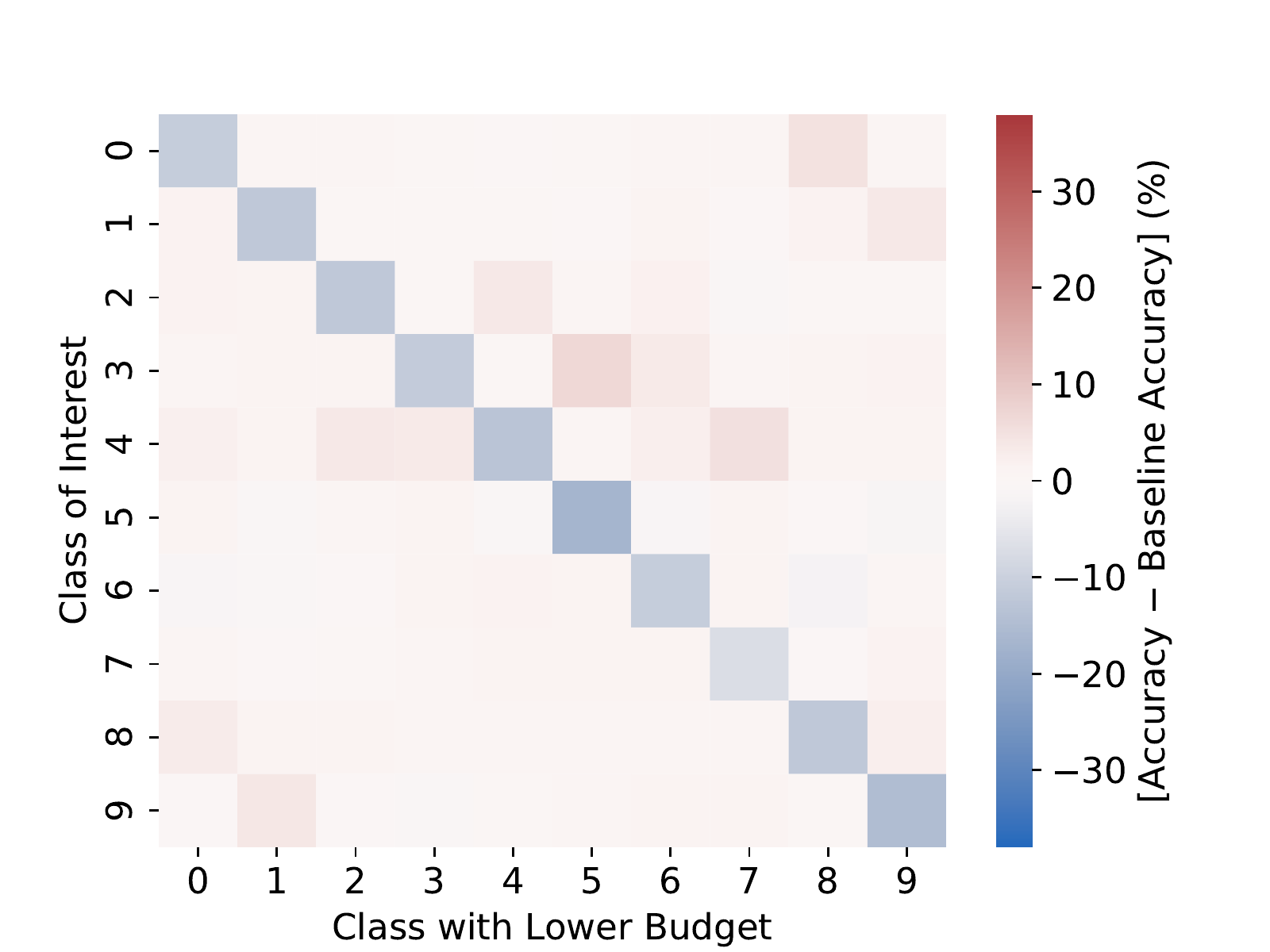}
    \caption{\nameB (lower budget)}
  \end{subfigure}
  \caption{
  \textbf{CIFAR10: Accuracy Changes for Subgroups.}
 We assess how the test \textbf{Accuracy} of a \textbf{Class of interest} changes in comparison to the \textbf{Baseline Accuracy} (standard DP-SGD with $\varepsilon=2$) when we, during training, assign a lower ($\varepsilon=1$) or a higher ($\varepsilon=3$) privacy budget to data points from a class (shown on the x-axis).
The diagonals show that by increasing a class' privacy budget (lower privacy), their utility increases, while it decreases with the decrease of privacy budget (higher privacy). Similar results for MNIST can be found in \Cref{fig:MNISTheatmap}.
  }\label{fig:CIFARheatmap}
\end{figure}

\begin{figure}[h]
  \centering
  \begin{subfigure}[t]{.475\textwidth}
    \centering
    \includegraphics[width=\linewidth]{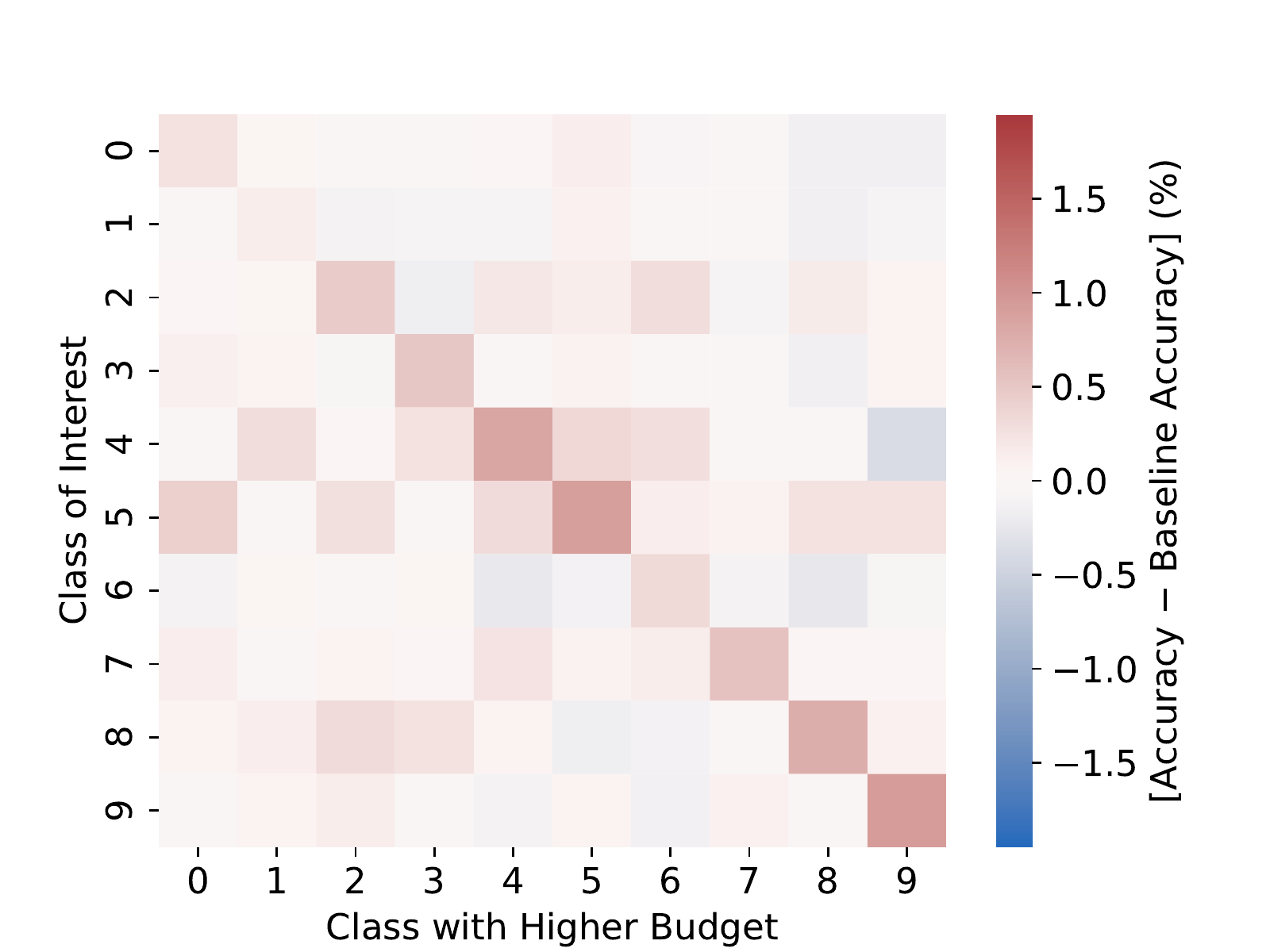}
    \caption{Sample | higher budget}
  \end{subfigure}
  \hfill
  \begin{subfigure}[t]{.475\textwidth}
    \centering
    \includegraphics[width=\linewidth]{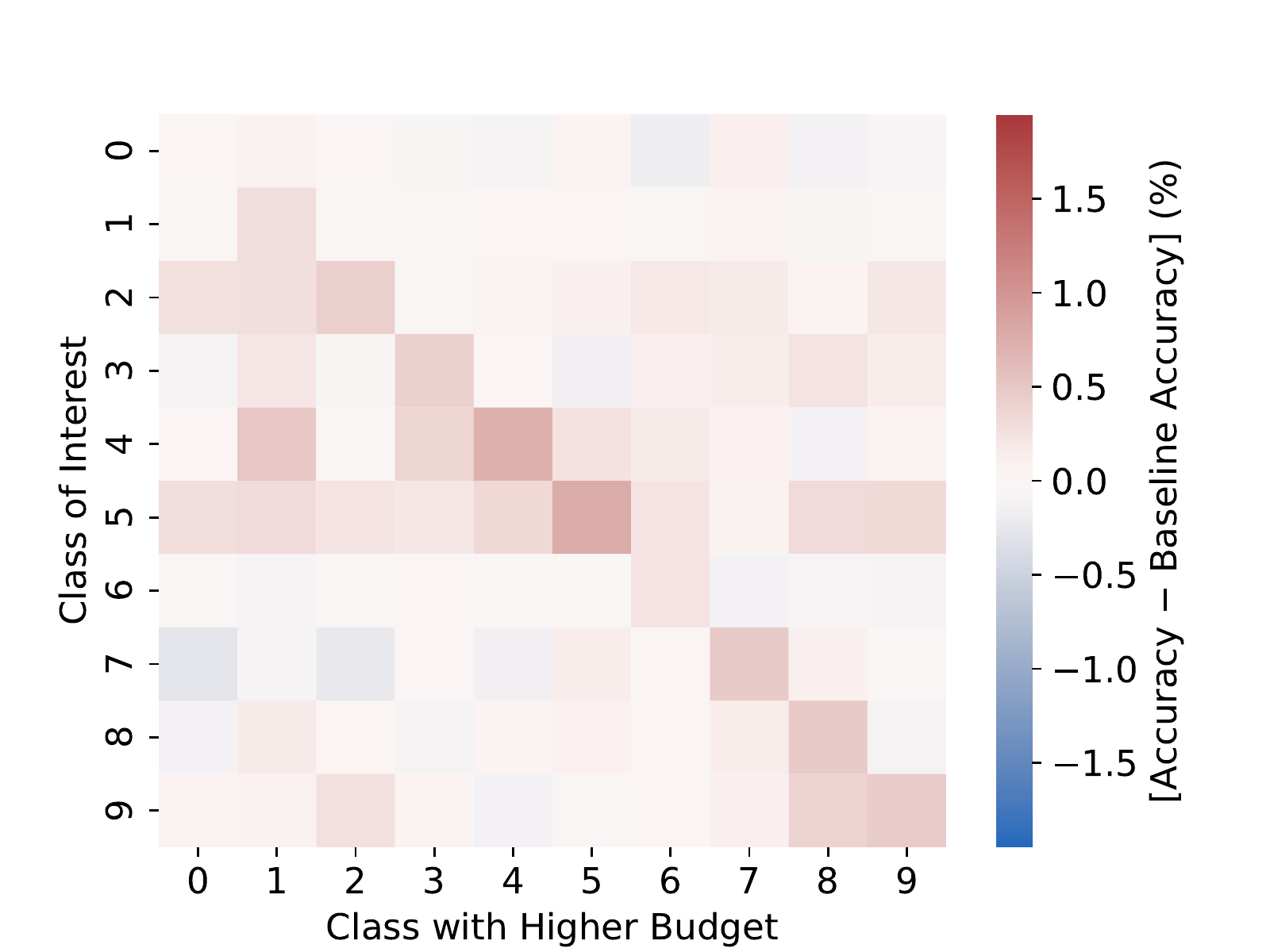}
    \caption{Scale | higher budget}
  \end{subfigure}

  \medskip

  \begin{subfigure}[t]{.475\textwidth}
    \centering
    \includegraphics[width=\linewidth]{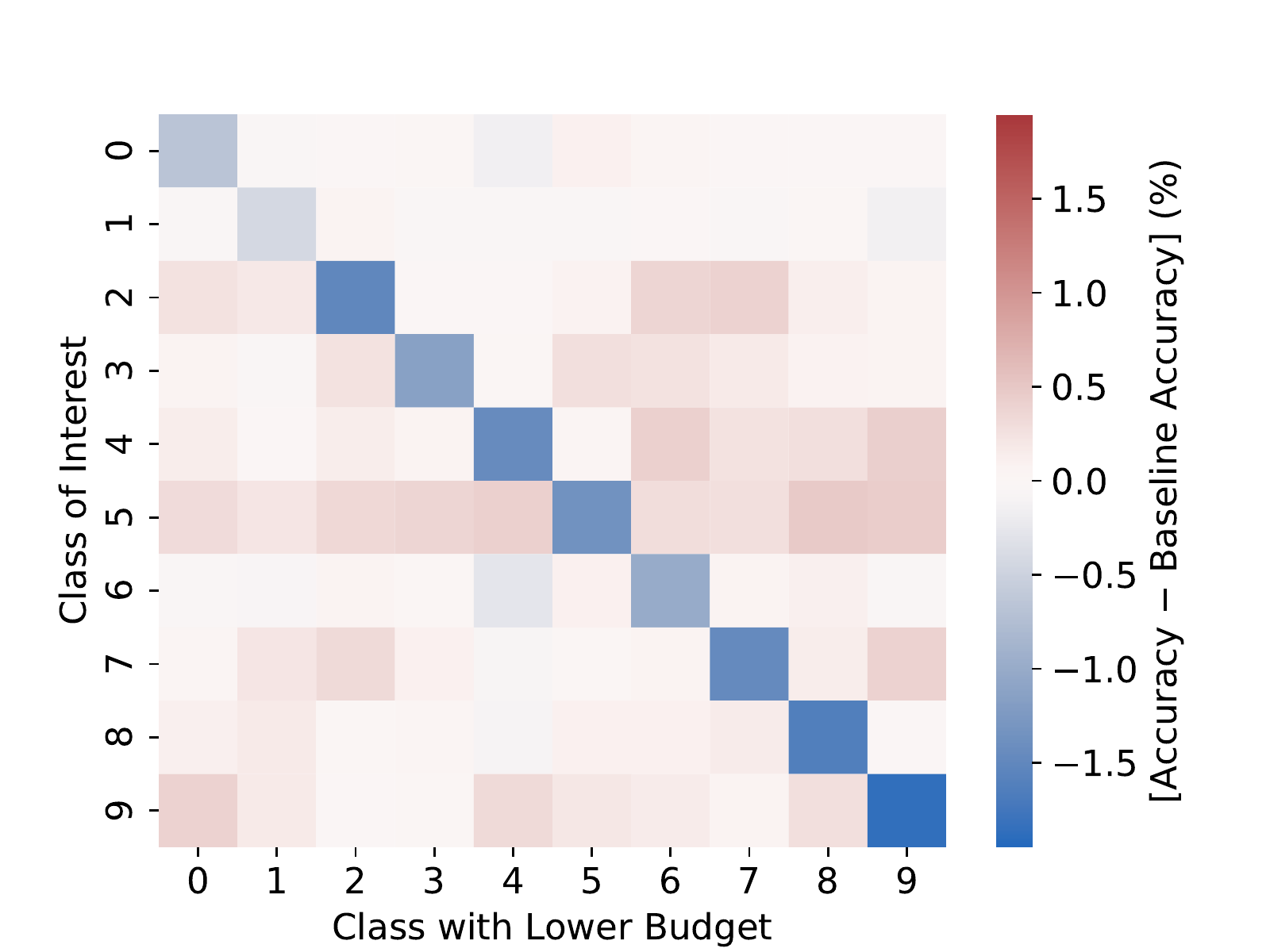}
    \caption{Sample | lower budget}
  \end{subfigure}
  \hfill
  \begin{subfigure}[t]{.475\textwidth}
    \centering
    \includegraphics[width=\linewidth]{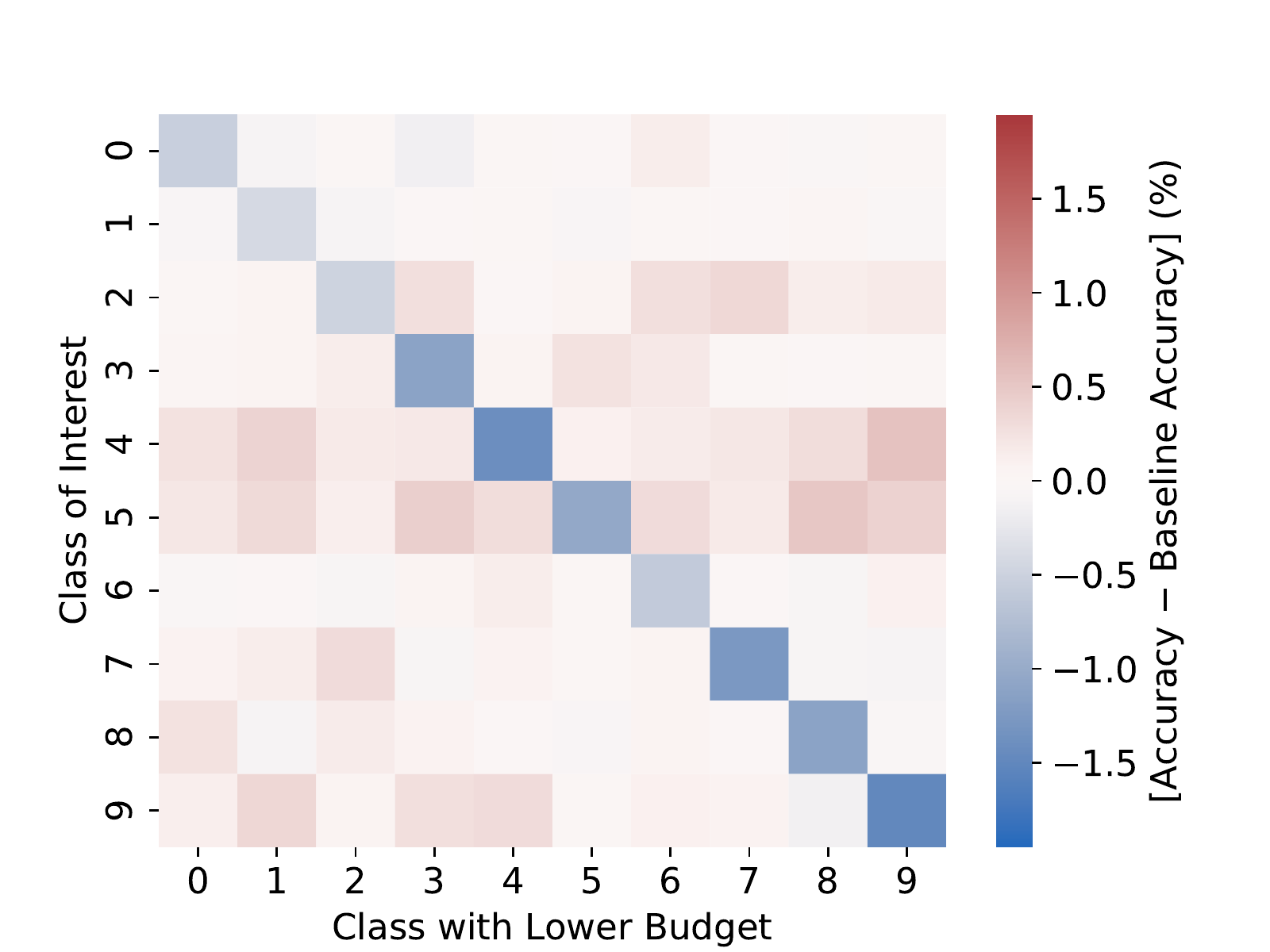}
    \caption{Scale | lower budget}
  \end{subfigure}
  \caption{\textbf{MNIST: Accuracy Changes for Subgroups.}
 We assess how the test accuracy of a class changes (in comparison to standard DP-SGD with $\varepsilon=2$) when we, during training, assign a lower ($\varepsilon=1$) or a higher ($\varepsilon=3$) privacy budget to data points from this class.
The diagonals show that by increasing a class' privacy budget (lower privacy), their utility increases, while it decreases with the decrease of privacy budget (higher privacy).}\label{fig:MNISTheatmap}
\end{figure}

\addtolength{\tabcolsep}{-1.2pt} 
\begin{table}
 \caption{\textbf{Per-class Individual Privacy Assignments.} 
 We manually optimize the per-class individual privacy budgets for \nameA such that the model achieves the same accuracy over all classes.
 The resulting per-class privacy budgets yield the maximum gap $\Delta$ between the highest and lowest accuracy level of only 0.39\% for \nameA, and 0.88\% for \nameB. 
For the baseline ($\varepsilon=3$ for all classes)  $\Delta=2.03$ is significantly higher, highlighting that our approach can successfully minimize the accuracy gap between different privacy groups.
We run the experiment on the MNIST dataset and report average per-class test-accuracies over three separate runs.
See each privacy group's test accuracy over training in \Cref{fig:balance_histories}.
 }
 \vspace{-0ex}
    \label{tab:10groups}
    \small
    \begin{center}
    \begin{tabular}{cccccccccccc}
    \toprule
        Class & 0 & 1 & 2 & 3 & 4 & 5 & 6 & 7 & 8 & 9 & $\Delta$\\
        \midrule
        Baseline ($\varepsilon=3$) &  98.95 &  99.06 &  98.39 &  98.09 &  97.93 &  98.47 &  98.16 &  98.12 &  97.78 &  97.03 & 2.03\\
        \hline
        Budgets &0.75& 0.5& 2.0&2.6&4.1& 2.1& 2.05& 3.0& 3.1 & 6.1 & /\\
        \cdashlinelr{1-12}
\nameA &  98.16 &  98.09 &  98.16 &  97.95 &  98.10 &  97.91 &  97.77 &  97.99 &  98.02 &  97.89 & 0.39\\
    \nameB &  98.44 &  98.36 &  98.13 &  98.02 &  97.76 &  98.17 &  97.91 &  97.96 &  97.91 &  97.56 & 0.88\\

    \bottomrule
\end{tabular}
\end{center}
\end{table}
\addtolength{\tabcolsep}{1.2pt}

\begin{figure}[h]
  \centering
  \begin{subfigure}[t]{.7\textwidth}
    \centering
    \includegraphics[width=\linewidth]{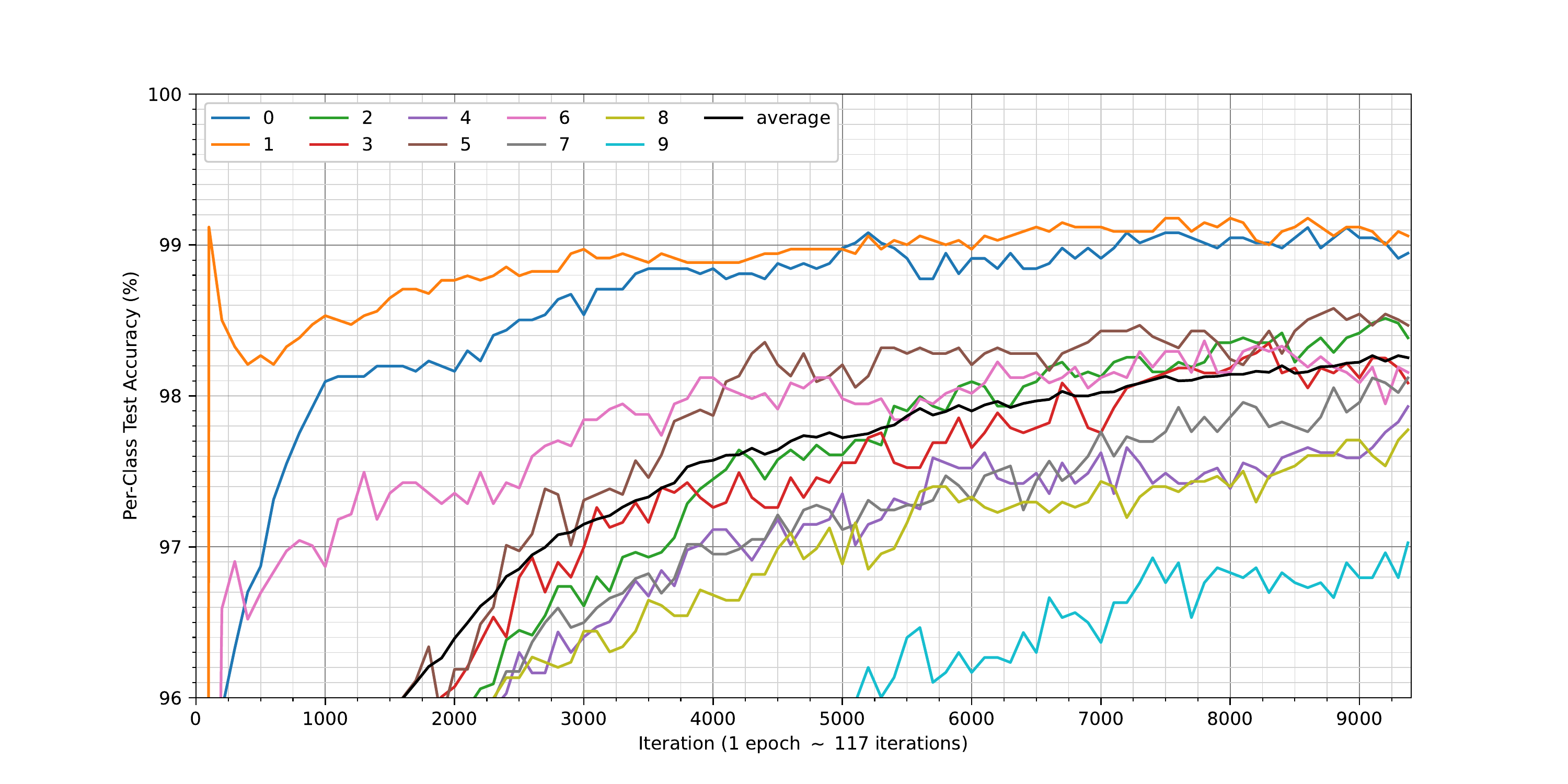}
    \caption{Baseline}
  \end{subfigure}
  \qquad
  \begin{subfigure}[t]{.7\textwidth}
    \centering
    \includegraphics[width=\linewidth]{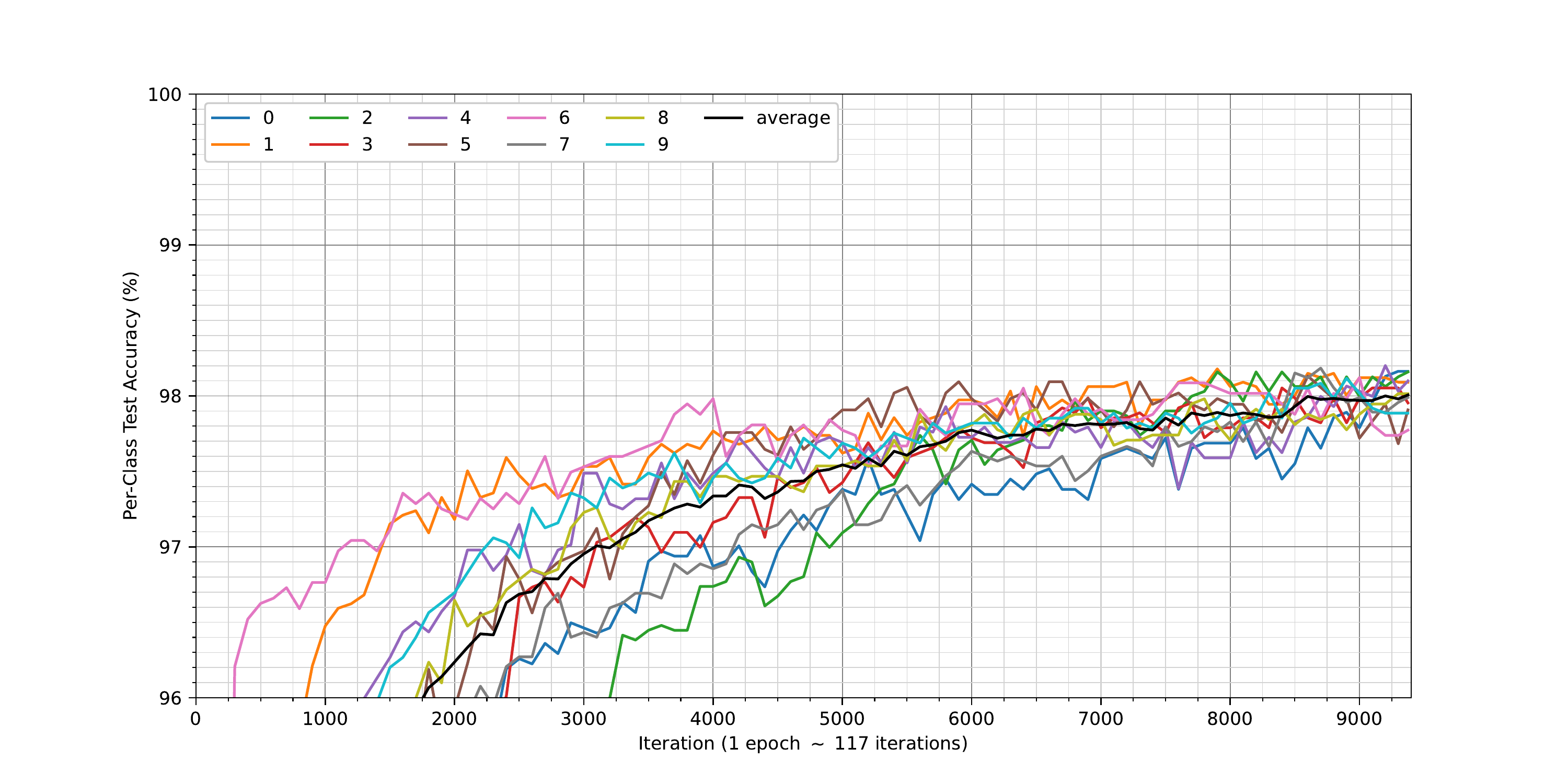}
    \caption{\nameA}
  \end{subfigure}
  \begin{subfigure}[t]{.7\textwidth}
    \centering
    \includegraphics[width=\linewidth]{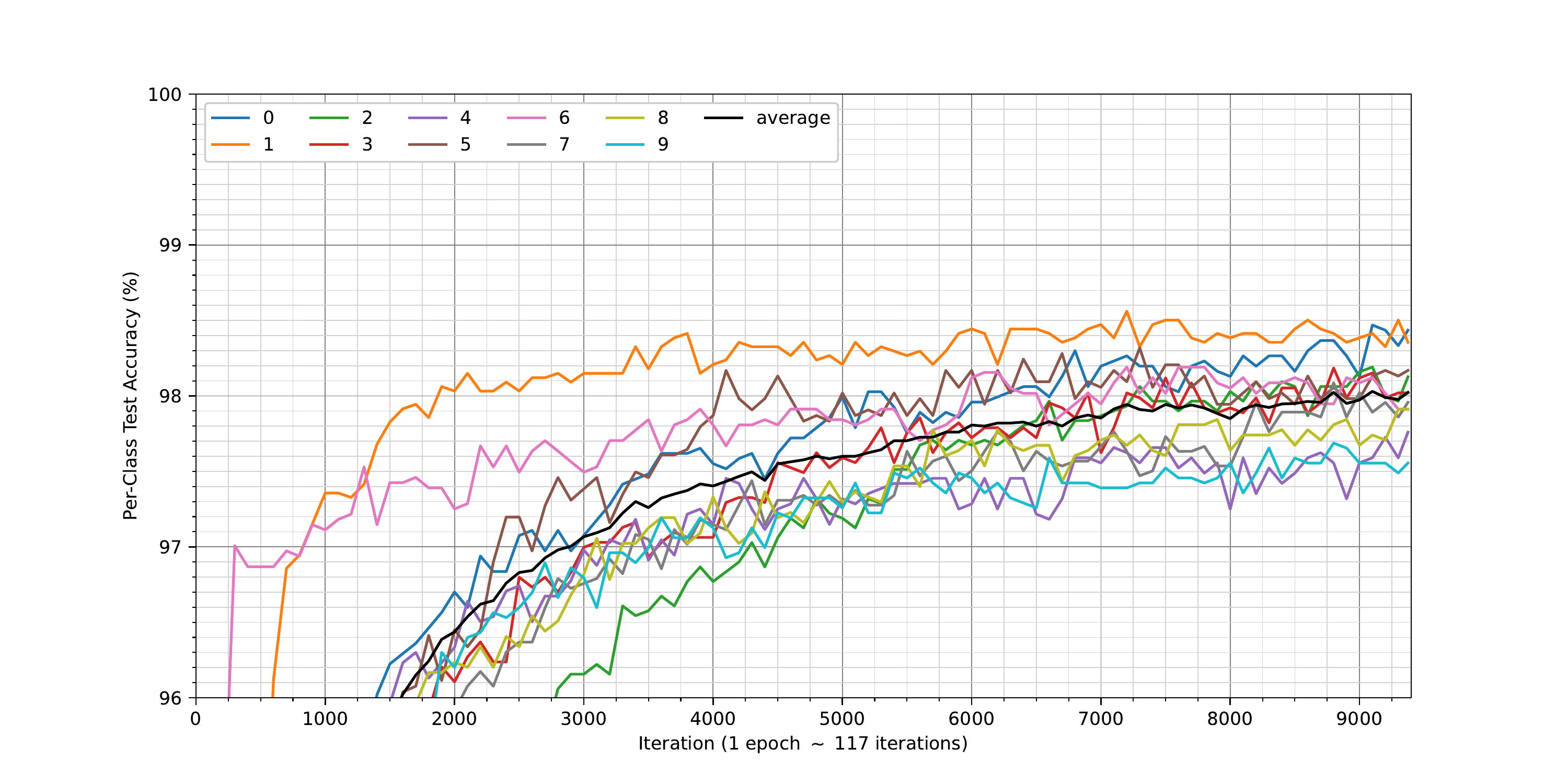}
    \caption{\nameB}
  \end{subfigure}
  \caption{\textbf{Per-class test accuracies over CIFAR10 training with Per-Class Privacy Budgets.} We manually tune the per-class privacy budgets for \nameA to obtain the same per-class accuracy at the end of training, see \Cref{tab:10groups}.
  Comparison with the Baseline (a) where all classes obtain $\varepsilon=3$ highlight that \nameA (b) and \nameB (c) successfully reduce the accuracy gap between the different classes.
  }\label{fig:balance_histories}
\end{figure}

\subsection{Additional Results for MIA}
\label{sec:mia-appendix}

Membership inference success for a single target model of our \nameA and \nameB methods  is shown in~\Cref{fig:lira-single-appendix}.
In~\Cref{fig:lira-5-appendix}, we present the results over for 5 different target models for \nameA. 
The figure highlights that over all target models, the two privacy groups' privacy risk is different: the group with higher protection $\varepsilon=10$ constantly has a lower AUC than the group with lower protection $\varepsilon=20$. 
The test statistics over the different privacy groups' LiRA likelihood scores for all five target models are shown in \Cref{tab:p_values}.

In \Cref{fig:mia_threegraphs}, we also compare the MIA risk of our IDP-SGD with the risk of training with DP-SGD, using the LiRA attack.
The results show that the privacy risk for data points with $\varepsilon=10$ when training with IDP-SGD is lower than when training with standard DP-SGD. This privacy gain does not come at large expense of points with $\varepsilon=20$ whose privacy risk remains roughly the same.

\begin{figure}[h]
    \centering
    \subfloat[\nameA]{\includegraphics[scale=0.5]{figures/roc_curve_per_group_logits_sampling.pdf}}%
        \qquad
    \subfloat[\nameB]{\includegraphics[scale=0.5]{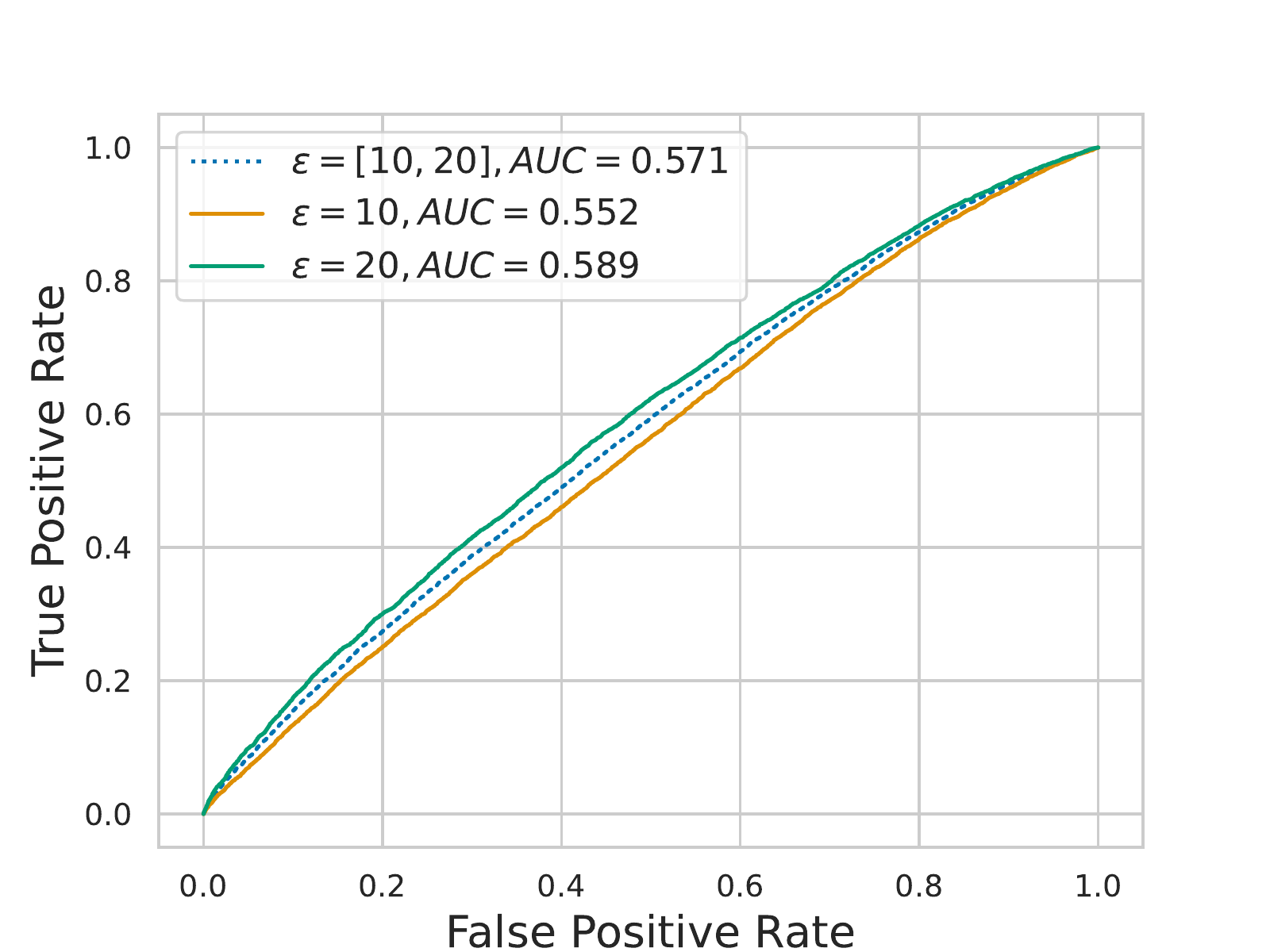}}%
    \caption{\textbf{True-Positive rate vs. False-Positive Rate of Lira Membership Inference Attacks Per Privacy-Budget.} We follow the same setup as in~\Cref{fig:LiRA-main}. We show the single target model for both (a) \nameA and (b) \nameB methods.}
    \label{fig:lira-single-appendix}
\end{figure}

\begin{figure}[h]
    \centering
    \includegraphics[width=0.45\textwidth]{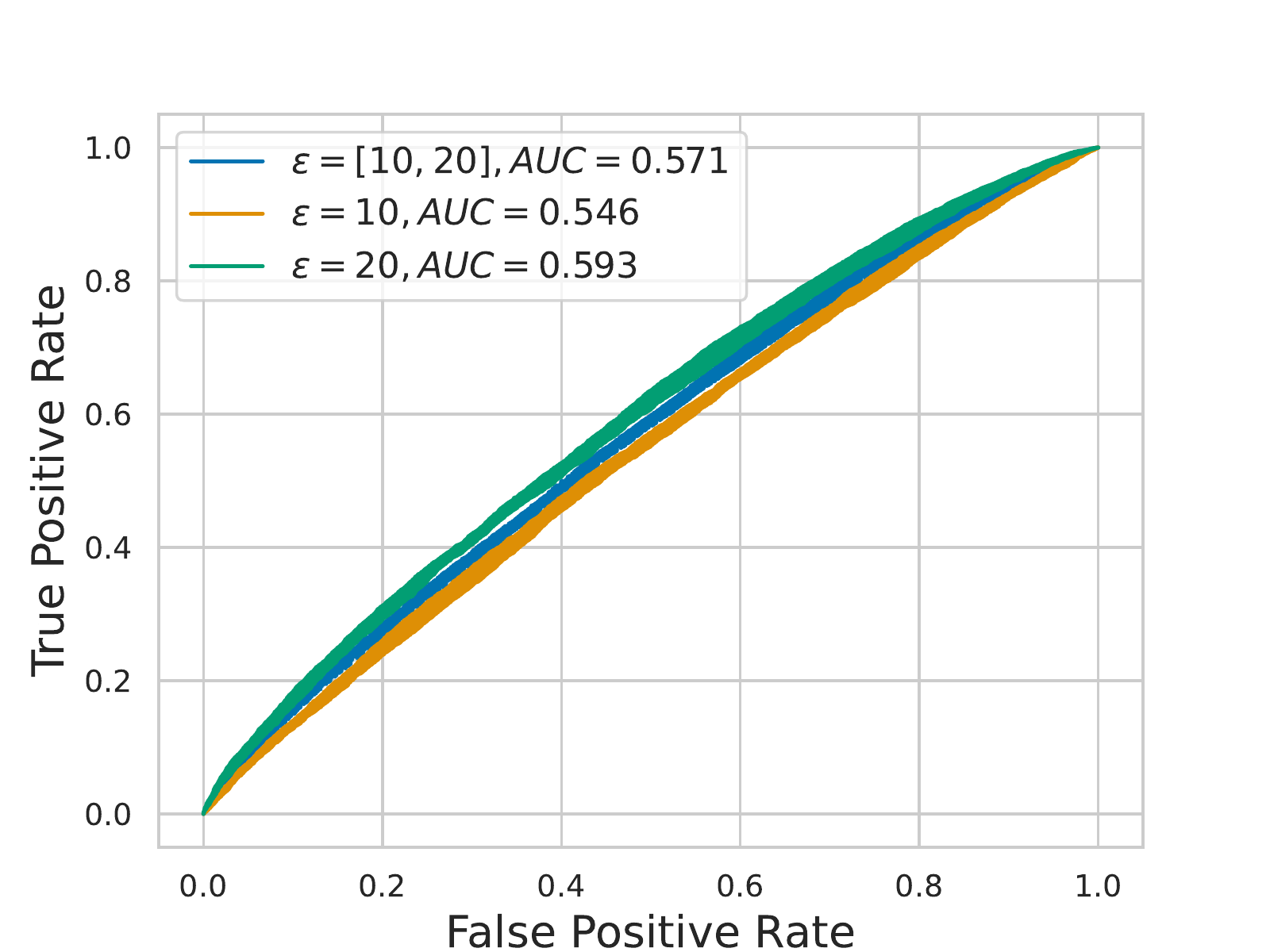} 
    \caption{\textbf{True-Positive rate vs. False-Positive Rate of Lira Membership Inference Attacks Per Privacy-Budget.} We follow the same setup as in~\Cref{fig:LiRA-main}. We run the experiment for five different target models and aggregate the results with the error bars for the \nameA method. 
    }
    \label{fig:lira-5-appendix}
\end{figure}

\begin{figure}[t]
     \centering
     \begin{subfigure}[b]{0.4\textwidth}
         \centering
         \includegraphics[width=\textwidth]{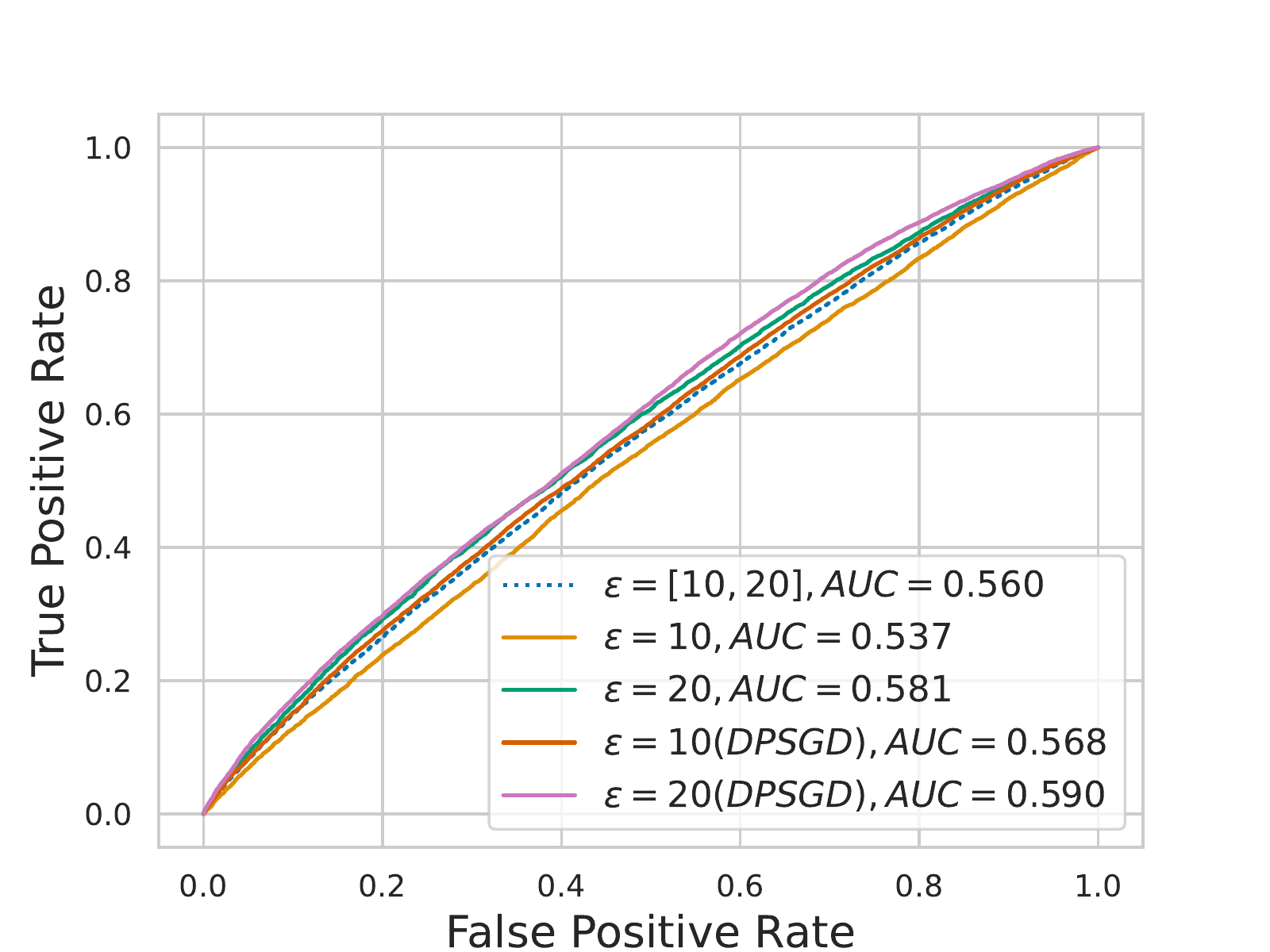}
         \caption{Sample}
         \label{fig:y equals x}
     \end{subfigure}
     \hspace{1cm}
     \begin{subfigure}[b]{0.4\textwidth}
         \centering
         \includegraphics[width=\textwidth]{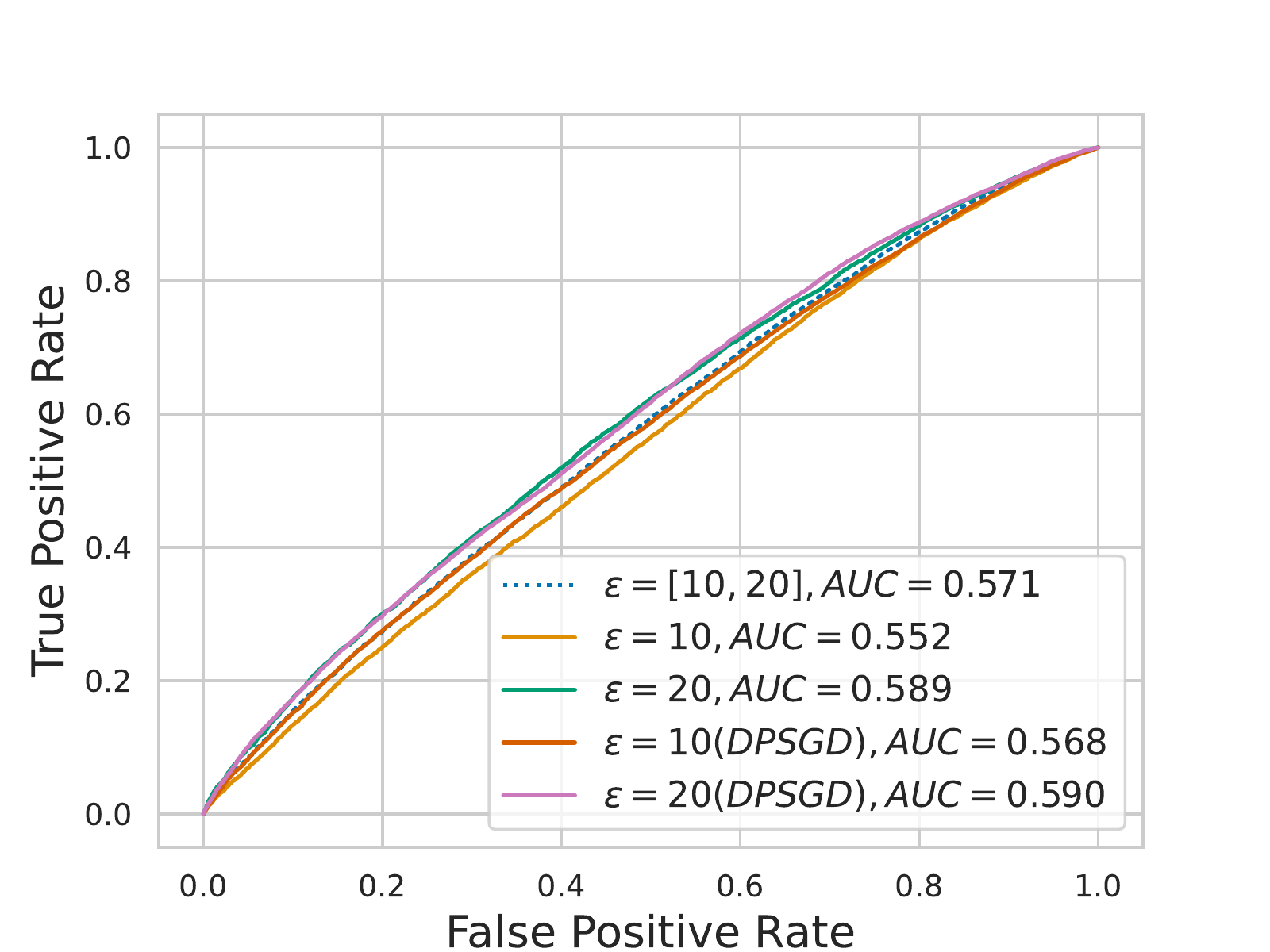}
         \caption{Scale}
         \label{fig:three sin x}
     \end{subfigure}
        \caption{\textbf{MIA on IDP-SGD vs. MIA on standard DP-SGD.} In addition to the first three lines (blue, orange, and green) from \Cref{fig:lira-5-appendix}, we also added the red and pink lines. The original three lines refer to a model trained with IDP-SGD on two privacy groups (12,500 data points with $\varepsilon=10$ and 12,500 with $\varepsilon=20$). For the red line, we trained 512 shadow models with standard DP-SGD and 25,000 CIFAR10 data points that all have $\varepsilon=10$. For the pink line, we did the same with 25,000 data points and $\varepsilon=20$. Then, we performed MIA according to the LiRA attack over the two setups. }
        \label{fig:mia_threegraphs}
\end{figure}

\begin{table}[h]
\caption{\textbf{Statistical differences between Privacy Groups.} We conduct a student t-test to determine if the Lira likelihood scores for data points with privacy budget $\varepsilon=10$ differ from the ones of data points with $\varepsilon=20$.
All results with $p<0.05$ indicate statistically significant differences. Results for \nameA.}
\label{tab:p_values}
\centering
\begin{tabular}{ccc}
\toprule
Target Model & $\Delta$ & $p$ \\
\midrule
1 & 5.16 & 2.49e-07\\
2 &2.41 &  0.016\\
3 &1.84&  0.066\\
4 &4.03& 5.52e-05\\
5 &2.537&  0.011\\
\bottomrule
\end{tabular}
\end{table}

\subsection{Comparison to Individualized Privacy with IPATE}
\label{app:comparisonPATE}

We present the comparison between our \ours and IPATE~\cite{iPATE} in \Cref{tab:compare_iPATE}.
In PATE, accuracy refers to the student model accuracy. The results in \Cref{tab:compare_iPATE} are averaged over three experiments for \ipate and ten runs for \ours. Note that the accuracies we report for \ipate differ from the accuracy values reported by~\cite{iPATE} in Table 1, since they report average voting accuracy (\ie, how correct are individual teacher model votes), whereas we report the resulting student model accuracies, which corresponds to the final performance of the method. Note that \ipate does not apply the performance improvements suggested by \citet{papernot2018scalable} (\eg,virtual adversarial training) or MixMatch which could increase the student model's performance.

\begin{table*}[h]
  \caption{
  \textbf{Comparison between \ours and Individualized PATE (\ipate).}
  We select the weighting mechanism from \ipate that performs better than the upsampling method.
  The \textit{Setup} indicates the size (in \%) of privacy groups.
  We present the accuracy (\%) for training with $\varepsilon=\{1.0, 2.0, 3.0\}$ privacy budgets, respectively to the order of the privacy groups. 
  }
  \label{tab:compare_iPATE}
  \begin{sc}
  \begin{center}
  \small
  \new{
  \begin{tabular}{cccccc}
    \toprule
    \textbf{Dataset} & \textbf{Setup} & Baseline PATE& \textbf{\ipate} & \textbf{\nameA} & \textbf{\nameB} \\
    \hline
    \multirow{2}{*}{MNIST} & 34\%-43\%-23\% &  \multirow{2}{*}{91.17$\pm$ 1.25}  & 95.27 $\pm$ 0.33 &\textbf{97.81} & 97.78 \\
      & 54\%-37\%-9\% &  &95.74 $\pm$ 0.43 & \textbf{97.6} & 97.54\\
      \hline
      \multirow{2}{*}{SVHN} & 34\%-43\%-23\% & \multirow{2}{*}{22.46 $\pm$ 5.19} &41.45 $\pm$  1.69 &\textbf{84.56} & 84.48\\
      & 54\%-37\%-9\% & &44.64 $\pm$ 0.55 &\textbf{84.32} & 84.31 \\
        \hline
      \multirow{2}{*}{CIFAR10} & 34\%-43\%-23\% & \multirow{2}{*}{24.83 $\pm$ 1.56}  &33.20 $\pm$ 0.94 & 54.89  & \textbf{54.92}\\
      & 54\%-37\%-9\% & &35.59 $\pm$ 0.73 & 54.88 & \textbf{55.00}\\
    \bottomrule 
  \end{tabular}
  }
  \end{center}
  \end{sc}
\vspace{-4pt}
\end{table*}

\subsection{Integrating Privacy Accounting and Assignment}
\label{app:accounting+assignment}
The main goal behind individualized accounting is to obtain tighter privacy analysis (recall our discussion in Section~\ref{sub:individual_privacy}). Instead of tracking a single privacy loss estimate across all data points, an individual privacy loss is kept track of for each data point. 
\citet{renyi_filter} defines a new individual privacy filter, which drops data points that exceed the individualized privacy loss from further processing. However, the same privacy budget is assigned to each data point. The individual assignment of a privacy budget to each data point is a natural extension of individualized accounting and can be directly incorporated into this framework. Therefore, a given data point has its own privacy filter and is dropped from the analysis once the filter indicates that the data point's privacy budget is exhausted. 

\subsection{Combining \nameA and \nameB}
\label{app:combining}

We run additional experiments to showcase the advantage of combining both our methods. 
The combination of \nameA and \nameB extends the number of hyperparameters and offers the potential to find better performing algorithms. It can be implemented as follows: obtain individual sampling probabilities as in \nameA. Once a mini-batch is sampled, use the respective data points’ indices and clip their gradients as in \nameB.
We allow to weight the different methods to arbitrary fractions (e.g., 50\% \nameA, 50\% \nameB), and derive the privacy parameters such that all privacy groups exhaust their respective budgets at the end of the specified number of iterations.
For practical evaluation, we use the MNIST dataset and the first privacy setup (0.34, 0.43, 0.23) with privacy budgets $\varepsilon={1,2,3}$ and different weightings as indicated in \Cref{tab:sample_and_scale}.

The baseline (first and last rows) are equivalent to \Cref{tab:compare-dpsgd-individual} in the main paper, while the additional three rows in between with the new results represent average test accuracy and the standard deviation over ten independent random runs over different weights assigned to the respective method.

Our results demonstrate a clear progression from the lower performing \nameB approach to the better performing \nameA. This aligns with expectations based on our implementation, where we combine the methods based on the parameters found for them sequentially.

Note that each of our methods has its advantages and disadvantages and the combination of both methods provides more options to balance (dis)advantages. For example, when some points’ sample rates are extremely small, then the model might never see those points during training and the combined method could reduce this possibility while still improving over the Scale-only method.

\begin{table}[t]
    \centering
    \caption{\textbf{Combining \nameA and \nameB.} We combine \nameA and \nameB according to the weights indicated in the table and perform experiments on the MNIST dataset. The respective percentages highlight to what degree which method contributed. For example, 0\% \nameA and 100\% \nameB means that only \nameB was executed (like in the results of the main paper in \Cref{tab:compare-dpsgd-individual}), while 50\% \nameA and 50\% \nameB expresses that both methods contributed equally. The results are averaged over ten independent runs.}
    \label{tab:sample_and_scale}
    \small
    \begin{tabular}{ccc}
    \toprule
        \nameA (weight in \%) & \nameB (weight in \%) & Test Accuracy (in \%) \\ \midrule
        0 & 100 & 97.78$\pm$0.08 \\ 
        25 & 75 & 97.75$\pm$0.10 \\ 
        50 & 50 & 97.80$\pm$0.10 \\ 
        75 & 25 & 97.80$\pm$0.10 \\ 
        100 & 0 & 97.81$\pm$0.09 \\ \bottomrule
    \end{tabular}
\end{table}

\subsection{Examining Convergence of IDP-SGD}
\label{app:convergence}

Individualization directly affects the objective function of the training procedure.
To verify that our individualized methods maintain a smooth convergence, we examine the loss history of one group of training data in the MNIST dataset and compare it with non-individualized DP-SGD.

We collected the loss values (and accuracy values) over the 4 following setups on the MNIST dataset and privacy setup 2 ($54\%,37\%,9\%$, $\varepsilon = \{1,2,3\}$):
\begin{enumerate}
\item loss value for privacy group with $\varepsilon=1$ for the Sample method;
\item loss value for privacy group with $\varepsilon=1$ for the Scale method;
\item training with standard DP-SGD ($\varepsilon=1$) solely on the $54\%$ of data points that have the privacy budget of $\varepsilon=1$, and
\item training with the standard DP-SGD ($\varepsilon=1$) on $100\%$ of the MNIST dataset.
\end{enumerate}

In the following table, we depict the training loss of the privacy group with $\varepsilon=1$ during training:
\begin{center}
\begin{tabular}{c|c|c|c|c}
\toprule
epochs & \nameA & \nameB & DP-SGD ($54\%$ data) & DP-SGD ($100\%$ data) \\
\midrule
20 & \textbf{0.145}$\pm$0.010 & 0.148$\pm$0.011 & 0.211$\pm$0.018 & 0.162$\pm$0.013 \\
40 & \textbf{0.116}$\pm$0.005 & 0.121$\pm$0.006 & 0.187$\pm$0.014 & 0.143$\pm$0.007 \\
60 & \textbf{0.109}$\pm$0.005 & 0.115$\pm$0.006 & 0.183$\pm$0.010 & 0.142$\pm$0.008 \\
80 & \textbf{0.105}$\pm$0.005 & 0.112$\pm$0.005 & 0.184$\pm$0.008 & 0.142$\pm$0.006 \\
\bottomrule
\end{tabular}   
\end{center}

The loss values correspond to the following accuracy values:

\begin{center}
\begin{tabular}{c|c|c|c|c}
\toprule
epochs & \nameA & \nameB & DP-SGD ($54\%$ data) & DP-SGD ($100\%$ data) \\
\midrule
20 & \textbf{96.114}$\pm$0.215 & 96.024$\pm$0.257 & 94.071$\pm$0.460 & 95.692$\pm$0.315 \\
40 & \textbf{97.024}$\pm$0.095 & 96.906$\pm$0.102 & 95.078$\pm$0.333 & 96.358$\pm$0.191 \\
60 & \textbf{97.296}$\pm$0.108 & 97.123$\pm$0.126 & 95.306$\pm$0.228 & 96.525$\pm$0.169 \\
80 & \textbf{97.430}$\pm$0.106 & 97.257$\pm$0.115 & 95.362$\pm$0.187 & 96.560$\pm$0.125 \\
\bottomrule
\end{tabular}
\end{center}

Our results indicate that the availability of more information from the data of different privacy groups in our Sample and Scale methods enables the model to learn better features. This leads to lower loss for the data points with privacy budget $\varepsilon=1$ than if the model was trained on only those $54\%$ of points. The effect of including more data (i.e., using $100\%$ of the MNIST dataset), all with $\varepsilon=1$, is weaker, indicating that even the data with highest privacy requirement benefits substantially from our individualized privacy in terms of reduced loss and increased accuracy.

\subsection{Runtime Analyses}
\label{app:runtime}
We analyze the runtime of our methods in comparison to standard DP-SGD. 
The runtime is determined by two steps, the first one being the calculation of the individualized parameters of the methods which is executed exactly once before the training.
The second one is the actual training time.
For the calculation of individualized parameters, \Cref{tab:methods_runtime} indicates that the runtime depends on the number of privacy groups $P$ specified for the implementation of \Cref{alg:sampling} and \Cref{alg:clipping} and the precision $\gamma$ (see \Cref{alg:getNoiseMultiplier}).
We observe a linear growth in runtime w.r.t. the number of privacy groups.
Note that, as stated above, this additional runtime occurs only once with IDP-SGD, namely before training starts.
Also note that when we repeat experiments with the same privacy setups, the parameters do not need to be derived every time, but can be set directly based on one initial computation.
In terms of actual training time, \Cref{tab:methods_runtime} indicates that \nameA and \nameB do not significantly increase the training time in comparison to standard DP-SGD which is mainly due to their tight integration with the standard DP-SGD algorithm and the opacus library.

\begin{table}[t]
\centering
\caption{\textbf{Runtime in seconds to obtain privacy parameters (clip norms or sample rates) for different numbers of groups $P$ and precisions $\gamma$.} 
We report the time it takes for our methods to generate the privacy parameters for the first setup of Table 2 of the original submission (34\%-43\%-23\%).
}
\begin{tabular}{ccccc}
\toprule
& \multicolumn{2}{c}{$\gamma=0.01$} & \multicolumn{2}{c}{$\gamma=0.0001$} \\
$P$                           & Sample & Scale   & Sample & Scale \\
\midrule
2 & 3.24&0.57 &5.79 &1.01\\
8  &10.34 &2.51 &  20.22&4.78\\
32  & 34.2& 10.24&  70.89&19.75\\
128  & 127.78& 35.13&277.52 &78.85\\
\bottomrule
\end{tabular}
\end{table}

\begin{table}[t]
\centering
\caption{\textbf{Runtime in seconds for our methods.} 
We report runtime when training on the CIFAR10 dataset and the CNN architecture for the first privacy setup of Table 2 in the original submission (34\%-43\%-23\%) with $\varepsilon=1,2,3$.
Experiments are conducted on an A100-GPU. 
Runtimes are averaged over three trials and average with standard deviation is reported.
}
\begin{tabular}{ccc}
\toprule
 DP-SGD & \nameA & \nameB \\
\midrule
 403.67 $\pm$  2.31   &   406.33$\pm$ 0.58    &  414.33$\pm$ 3.21 \\
\bottomrule
\end{tabular}
\label{tab:methods_runtime}
\end{table}

\section{Alternative Baselines}
\label{app:alternative_baselines}
Throughout this work, we compare our methods with Standard DP-SGD which cannot take into account different privacy requirements at the same time.
Thus, we consider it to apply the highest privacy protection required to all data points equally.
Nonetheless, we can think of two more ways to use Standard DP-SGD on data having heterogeneous privacy requirements.
Those approaches have different benefits and drawbacks and might perform better than our chosen baseline approach in some scenarios.

\subsection{Exclude Lower Privacy Groups}
\label{app:baseline1}
Instead of applying the strongest privacy protection, the deciding ML expert could entirely exclude data of low privacy groups from training for loosening the restrictions on the remaining data points' influence on model updates.
In some cases, it would be worth giving up the information and privacy budgets of those lower privacy groups to achieve utility improvements.
This approach performs poorly if important information is wasted, \eg, most data of one class has the highest privacy requirement.

\subsection{Learn Privacy Groups Separately}
\label{app:baseline2}
It is also possible to make use of all privacy budgets, independent of their diversity, although only using Standard DP-SGD.
Namely, a model can be trained on each privacy group separately one after another, whereby the corresponding lowest budget is regarded for each group.
A drawback of this approach is that the model could forget its knowledge about previously learned privacy groups.

\subsection{Empirical Comparison of Baselines}

\begin{table}[!ht]
    \caption{\textbf{Empirical evaluation against other baselines.} We report the obtained test accuracy obtained with our two methods vs. two other baselines for individualized privacy on the MNIST dataset.
    Similar to \Cref{tab:compare-dpsgd}, we use $\varepsilon=1,2,3$.
    Both our \nameA and \nameB outperform the other baselines.}
    \label{tab:other_baselines}
    \centering
    \begin{tabular}{cccccc}
    \toprule
        Setup & DP-SGD & E.1 Baseline & E.2 Baseline & \nameA & \nameB \\ \midrule
        34\%-43\%-23\% & 96.75 & 97.6 & 97.4 & 97.81 & 97.78 \\ 
        54\%-37\%-9\% & 96.75 & 97.1 & 97.3 & 97.6 & 97.54 \\ \bottomrule
    \end{tabular}
\end{table}

We empirically evaluate against these two additional baselines using the MNIST dataset.
For baseline~\ref{app:baseline1}, we include all data points with a privacy budget of $\varepsilon\geq2$ und use $\varepsilon=2$ as the privacy budget for training. After hyperparameter tuning, the training on the remaining data points (43\%+23\% and 37\%+9\% of the total data) yields the accuracy reported in \Cref{tab:other_baselines}.

For baseline~\ref{app:baseline2}, we also did hyperparameter tuning and used the best noise multiplier of 2.5 for training. We trained the groups sequentially, always continuing training with the next group once the privacy budget of the previous groups was exhausted. We evaluated both starting with the privacy group that has loosest and strongest preferences (orders [3,2,1] and [1,2,3], respectively). Starting with the group that has strongest privacy requirements and ending on the group that has loosest privacy requirements yielded the best results which we report in \Cref{tab:other_baselines}.

In summary, we observe that our methods outperform the other baselines.

\section{Alternative Individualization}
We present the alternative ways of individualizing privacy guarantees in DP-SGD that we considered in the design process of \ours and describe their drawbacks.

\subsection{Individual Per-Data Point Noise}
\label{app:per_point_noise}
Individualized privacy could, in principle also be obtained by adding different amounts of noise to different data points. 
Every of the $P$ privacy group would have their individual $\sigss$.
Utility improvements would result from some data points requiring smaller amounts of added noise.
Note however, that in DP-SGD, while clipping is performed on a per-data point basis, noise addition is performed on a per-mini-batch basis (line~8 in \Cref{alg:dpsgd}).
Hence, there are two possibilities to implement individual noise addition: either (i) by operating on mini-batch sizes of 1, or (ii) by implementing a two-step sampling approach which first randomly samples a privacy group for a given training iteration and then applies the standard Poisson sampling to obtain the mini-batch consisting of data points from this group.
While both approaches are conceptually correct, they exhibit significant drawbacks.
Approach (i) first slows down training performance due to more operations requiring to be carried out on individual data points, rather than a mini-batch.
Second, due to the weak signal-to-noise ratio when adding noise to individual gradients, model performance is likely to degrade.
Finally, sampling cannot be performed with Poisson anymore since with Poisson sampling, it is not possible to pre-determine and specify exact mini-batch sizes, instead these depend on the outcome of the random sampling process.
Approach (ii) could overcome the first two issues.
However, the different groups sizes are still strictly smaller than the entire dataset and large parts of DP-SGD's degrading the tight privacy bounds obtained by privacy amplification through subsampling.\footnote{Note that there exist other, less popular approaches to implement DP in ML than the DP-SGD algorithm, such as Differentially Private Follow-the-Regularized-Leader (DP-FTRL) which do not rely on subsampling but instead obtain tighter privacy bounds from adding correlated noise over the training iterations. However, since DP-FTRL under-performs DP-SGD for high-privacy regimes, and unfolds it advantages mainly in FL scenarios where the same data is only learned from once, or with a small number of epochs, we consider the approach outside of the scope of this work.
}
The privacy amplification through subsampling allows to scale down the noise $\sigma$ by the factor $B/N$ (with $B$ being the expected mini-batch size, $N$ the total number of data points, and $B \ll N$) while still ensuring the same $\varepsilon$ as with $\sigma$~\cite{kairouz2021practical}.
This privacy amplification is crucial to the practical performance (privacy-utility trade-offs) of DP-SGD.
Hence, by using the $P$ privacy groups of sizes $\{N_1, \dots, N_P\}$ with $N_i \ll N$, the factors $B/N_i \ll B/N$ for all $i\{1,\dots, P\}$. 
This effect cancels out, or in the worst case even inverts the privacy-utility benefits that should arise from assigning individual data points less noise based on their privacy preference in our individualization.

\subsection{Duplicating Data Points}
\label{app:duplicating_data}
When duplicating data points in the training dataset, similar to the Upsampling mechanism in \ipate~\cite{iPATE}, the DP-SGD algorithm itself does not need to be adapted.
Instead, different privacy levels of different data points stem from their individual number of replication within the training data.
This approach offers a very fine-grained control on individual privacy levels, since, in principle, each data point could be replicated a different number of times.
Utility gain would result from the larger training dataset.
However, this type of upsampling opens the possibility for the same data point to be present multiple times in the mini-batch used for training in a given iteration.
This stands in contrast to the original DP-SGD, where participation of each data point for training at a given iteration is determined by an independent Bernoulli trial, and hence, a data point can be either included once or not at all in a mini-batch.
The possibility for a data point to be included multiple times $n$ inside the same mini-batch changes the sensitivity of the mechanism from $c$ to $nc$.
According to~\cite{mironov2017renyi}, when noise is added according to $\sigma$, a mechanism with sensitivity $nc$ is $(\alpha, \frac{2(nc)^2}{2\sigma^2})$-RDP.
The quadratic influence of the sensitivity to privacy bound results in a severe increase in the RDP $\varepsilon$, making the approach suboptimal in terms of privacy-utility guarantees.
Additionally, upsampling leads to an effective increase in a data point's the sample-rate which further increases privacy costs.

\section{Additional Proofs}
\label{app:proofs}

\subsection{Additional Proofs for Individualized Privacy}

\paragraph{Proof for \Cref{theorem1}}
\begin{proof}
First note that $(\varepsilon_1, \delta)$-DP can be considered as a special case of $(\{\varepsilon_1, \varepsilon_2, \dots, \varepsilon_P\}, \delta)$-IDP, where $\forall p \in [1,P]$ $\varepsilon_p=\varepsilon_1$.
We can, hence apply \Cref{eq:DP_guarantee} and see that an $M$ that satisfies $(\varepsilon_1, \delta)$-DP has a privacy guarantee of $\mathbb{P}\left[M(D) \in R\right] \leq e^{\varepsilon_1} \cdot \mathbb{P}\left[M(D') \in R\right] + \delta$.
Given that by our definition $\forall p \in [2, P]$ it holds that $\varepsilon_p > \varepsilon_1$, for all $p$, it holds that $\mathbb{P}\left[M(D) \in R\right] \leq e^{\varepsilon_1} \cdot \mathbb{P}\left[M(D') \in R\right] \leq e^{\varepsilon_p} \cdot \mathbb{P}\left[M(D') \in R\right] + \delta$.
From this inequality, it follows that $M$ also satisfies $(\{\varepsilon_1, \varepsilon_2, \dots, \varepsilon_P\}, \delta)$-IDP.
\end{proof}

\paragraph{Proof for \Cref{theorem2}}
\begin{proof}
Analogous to the previous proof, by our definition, it holds that $\forall p \in [1, P-1]$ the $\varepsilon_p < \varepsilon_P$.
From an $M$ that satisfies $(\{\varepsilon_1, \varepsilon_2, \dots, \varepsilon_P\},\delta)$-IDP, it, therefore, holds that for all $p$ the $\mathbb{P}\left[M(D) \in R\right] \leq e^{\varepsilon_p} \cdot \mathbb{P}\left[M(D') \in R\right] +\delta \leq e^{\varepsilon_P} \cdot \mathbb{P}\left[M(D') \in R\right] + \delta$.
This inequality shows that $M$ satisfies $(\varepsilon_P, \delta)$-DP.
\end{proof}

\subsection{Privacy Proofs for our Methods}
Either of our methods can be considered as an individual SGM with different parameters from the point of view of each privacy group.
This is because for each group, we have to examine neighboring datasets which differ in an arbitrary data point from that group.
Our \nameA method ensures an individual sample rate for all points of each group, while our \nameB method applies an individual noise multiplier for all points of each group.

So, assume we have $P$ different privacy groups.
We base our proofs on the privacy guarantees of \nameA and \nameB on the observation that our methods can be considered as $P$ simultaneously executed Subsampled Gaussian Mechanisms (SGM)~\citep{mironov2019r} that update the same model. 
We first note that every data point's privacy preference stays stable over the course of training, therefore, conceptually, the privacy groups divide our training data in $P$ disjoint subsets. 
Hence, each of the $P$ SGMs operates on a different partition of the training data, using different individual sample rates or clip norms.
One can think of the simultaneous model updates by multiple SGMs with different privacy parameters as a similar concept to federated learning~\cite{mcmahan2017communication} with different local privacy guarantees~\cite{truex2020ldp}:
each client holds a disjoint subset of data, executes some local model update, implements privacy guarantees through local subsampling of the data, clipping, and noise addition (potentially with different parameters than other clients), and then shares the model updates with a server that aggregates all updates and applies them to the shared model.

Within each training step, our methods' SGMs also have a different privacy consumption due to their different sample rates and clip norms (sensitivity). 
Given that we keep the intermediate model states (checkpoints) private and only release the final model~\cite{ye2022differentially}, the privacy costs in RDP over multiple training iterations add up linearly per SGM, such that, at the end each SGM yields $(\alpha, \bar{\varepsilon}_p)$-RDP, $p \in [1,\dots,P]$.
As a consequence, the algorithm over all privacy groups yields $(\alpha, \{\bar{\varepsilon}_1, \bar{\varepsilon}_2, \dots, \bar{\varepsilon}_P\})$-RDP which can then be converted to  $(\{\varepsilon_1, \varepsilon_2, \dots, \varepsilon_P\}, \delta)$-DP using standard conversion~\citep{mironov2017renyi} as we will show in the following.

\begin{theorem}
\label{thm:sample_dp}
Our \nameA mechanism satisfies $(\{\varepsilon_1, \varepsilon_2, \dots, \varepsilon_P\}, \delta)$-DP.
\end{theorem}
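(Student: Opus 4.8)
The plan is to decompose \nameA into $P$ subsampled Gaussian mechanisms (SGMs), one per privacy group, running in lockstep and all updating the same model, and then to bound the privacy loss with respect to a point $x_i \in \mathcal{G}_p$ by arguing that only the $p$-th SGM ``sees'' $x_i$. Concretely, fix a group index $p$ and a data point $x_i \in \mathcal{G}_p$, and let $D \overset{x_i}{\sim} D'$ be neighboring, say $D = D' \cup \{x_i\}$. I would write the mechanism as the adaptive composition over $t \in [I]$ of the per-step maps $\theta_t \mapsto \theta_{t+1} = \theta_t - \eta \tilde g_t$, where $\tilde g_t$ is the (deterministically rescaled) sum, over all groups $p'$, of the Poisson-subsampled clipped per-example gradients of $\mathcal{G}_{p'}$ (inclusion probability $q_{p'}$, clip norm $c$), plus a single draw of $\mathcal{N}(0, (\sigs c)^2 \mathbf{I})$. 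Since releasing $\theta_I$ is post-processing of the whole trajectory $(\theta_1,\dots,\theta_I)$, it suffices to bound the RDP of the trajectory.

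The key step is the per-step reduction. Condition on $\theta_t$, hence on the full history. The contributions of the groups $p'\neq p$ together with the mini-batch noise form a random vector whose law is identical under $D$ and $D'$ (none of it involves $x_i$) and is independent of group $p$'s Poisson sampling. Hence, up to adding this $x_i$-independent random shift and applying a fixed affine rescaling — both post-processing, which leave Rényi divergences unchanged by the data-processing inequality and joint convexity — the step is exactly one application of the subsampled Gaussian mechanism of \citet{mironov2019r} on the data of $\mathcal{G}_p$, with sampling rate $q_p$, sensitivity $c$, and noise multiplier $\sigs$ (equivalently, after $1/c$ rescaling, the unit-sensitivity SGM with rate $q_p$ and multiplier $\sigs$). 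By \citet{mironov2019r}, this step satisfies $(\alpha, \varepsilon^{(1)}_p(\alpha))$-RDP for every order $\alpha > 1$, with $\varepsilon^{(1)}_p(\alpha)$ bounded as in \Cref{align:sgm_privacy}, i.e. $\varepsilon^{(1)}_p(\alpha) \le 2 q_p^2 \alpha / \sigs^2$ in the operative regime.

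Next I would apply adaptive composition of RDP (additivity of the RDP parameter over a sequence of adaptively chosen mechanisms): composing the $I$ steps shows that the trajectory, and hence $\theta_I$, is $(\alpha, \bar\varepsilon_p(\alpha))$-RDP with $\bar\varepsilon_p(\alpha) \le I\cdot 2 q_p^2 \alpha / \sigs^2$, all with respect to $x_i$. I would then convert to $(\varepsilon,\delta)$-DP using the standard RDP-to-DP conversion (or the tighter bound of \citet{balle2020hypothesis} quoted in \Cref{sub:DP}) and optimize over $\alpha$; by construction, \Cref{alg:sampling} selects $\sigs$ and $q_p$ precisely so that this optimized bound equals the target $\varepsilon_p$, giving $(\varepsilon_p,\delta)$-DP with respect to $x_i$. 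Since $p$ and $x_i\in\mathcal{G}_p$ were arbitrary, $M$ satisfies $(\{\varepsilon_1,\dots,\varepsilon_P\},\delta)$-IDP, which is the asserted notion.

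The step I expect to be the main obstacle is the per-step reduction of the second paragraph: making rigorous that the simultaneous updates contributed by the other $P-1$ groups and the shared mini-batch noise really do act, from $x_i$'s viewpoint, as an $x_i$-independent randomized post-processing of the group-$p$ SGM output, so that the single-SGM RDP analysis of \citet{mironov2019r} transfers unchanged to each group even though all groups write to the same $\theta$. A secondary technicality is the random denominator $|L_t|$ in the averaging step; I would sidestep it, as is standard, by using the fixed expected batch size $B = qN$ as the denominator (so the rescaling is deterministic and hence post-processing), or otherwise absorb it into the conditioning argument.
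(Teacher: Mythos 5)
Your proposal is correct and takes essentially the same route as the paper's proof: both view the training, from the perspective of a point $x_i\in\Gp$, as $I$ compositions of a subsampled Gaussian mechanism with sample rate $q_p$ and noise multiplier $\sigs$ (after normalizing away the clip norm), invoke the RDP bound of \citet{mironov2019r}, add the RDP costs linearly over iterations, and convert to $(\varepsilon_p,\delta)$-DP using the calibration performed by \Cref{alg:sampling}. The paper carries out your ``per-step reduction'' by writing $M(D)$ and $M(D')$ directly as Gaussian mixtures over mini-batches and matching them to the mixtures of Theorem~4 in \citet{mironov2019r}; one small caution on your phrasing is to keep the Gaussian noise attached to the group-$p$ SGM rather than folding it into the $x_i$-independent shift, since the noiseless subsampled gradient sum has no RDP guarantee for the post-processing argument to preserve.
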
 

\begin{proof}
We prove the bound for any particular privacy group separately.
Fix $p \in \left\{1, \ldots, P\right\}$, let $D \subseteq \mathcal{D}$ be the training dataset, and select any $x_i \in D$ that belongs to group $\Gp$.
We are interested in comparing outcomes of mechanism $M$ on $D$ with its outcomes on $D'=D\setminus\left\{ x_{i}\right\}$ where $M$ represents a particular model update of \nameA.
We get Gaussian mixtures
\[
\begin{array}{llll}
M\left(D'\right) & = & \underset{L \subset D}{\sum} \pi_{L}\mathcal{N}\left(f\left(L\right), \sigs^{2}\mathbf{I}^{d}\right) & \text{and} \\
M\left(D\right) & = & \underset{L \subset D}{\sum}\pi_{L}\left(\left(1-q_{p}\right) \mathcal{N}\left(f\left(L\right), \sigs^{2}\mathbf{I}^{d}\right)+q_{p} \mathcal{N}\left(f\left(L\cup\left\{ x_{i}\right\} \right), \sigs^{2}\mathbf{I}^{d}\right)\right) & ,
\end{array}
\]
where $f\left(L\right)$ is the clipped gradient of the current mini-batch $L$, $\pi_{L}$ is its probability, $\sigs>0$ is the noise scale, $\mathbf{I}^{d}$ is the identity matrix, and $0<q_{p}\leq1$ is the individual sample rate of $x_i$ and every other point in $\Gp$.
Note that $\sigs$ is actually multiplied by the clip norm $c_{\text{sample}}>0$.
Gaussian mechanisms are invariant regarding scaling of their sensitivity and noise scale, but instead depend on the relationship between sensitivity and noise scale, called the noise multiplier.
Hence, we can ignore $c_{\text{sample}}$ and consider $f$ to have sensitivity $1$.

Now we can see that the Gaussian mixtures of our \nameA are equivalent to those corresponding to the original SGM from \citet{mironov2019r}, Thm. 4, when we parameterize it with sample rate $q_{p}$ and noise scale $\sigs$ which are individual per group.
Therefore, all RDP bounds of the original SGD apply, especially $\left(\alpha, \bar{\varepsilon}_{p}\right)$-RDP with $\bar{\varepsilon}_{p}=2q_{p}^{2}\frac{\alpha}{\sigs^{2}}$ in a particular parameter regime (cfg. Thm. 11 from \citet{mironov2019r}).
As a final step of the proof, we need to convert from $\left(\alpha, \bar{\varepsilon}_{p}\right)$-RDP guarantees to $(\varepsilon_p, \delta)$-DP guarantees following~\citet{mironov2017renyi} (see \Cref{sub:DP}).
Note that our individual parameters have been selected before the start of training so that each group's privacy budget is exhausted at the intended number of iterations (see \cref{alg:sampling}).
\end{proof}

\begin{theorem}
Our \nameB mechanism satisfies $(\{\varepsilon_1, \varepsilon_2, \dots, \varepsilon_P\}, \delta)$-DP.
\end{theorem}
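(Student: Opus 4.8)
The plan is to mirror the structure of the proof of \Cref{thm:sample_dp}, analyzing \nameB one privacy group at a time and reducing each group's per-step analysis to a standard Subsampled Gaussian Mechanism (SGM) with a \emph{group-specific} noise multiplier $\sigma_p$, even though the implementation adds a single common noise vector to each mini-batch.

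First I would fix $p \in \{1,\dots,P\}$, let $D \subseteq \mathcal{D}$ be the training set, pick an arbitrary $x_i \in D$ belonging to $\Gp$, and set $D' = D \setminus \{x_i\}$. For a single update step of \nameB, condition on the Poisson-sampled mini-batch $L \subseteq D'$ (each point of $D'$ included independently with probability $q$, and $x_i$ additionally included with probability $q$). The per-example clipping in line~6 of \Cref{alg:dpsgd} uses the individualized clip norms $\ccs$, so the summed clipped gradient equals $f(L)$ when $x_i$ is excluded and $f(L\cup\{x_i\})$ when it is included, with $\|f(L\cup\{x_i\}) - f(L)\|_2 \le c_p$ because $x_i$'s own clipped gradient has norm at most $c_p$. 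The noise in line~8 is $\mathcal{N}(0,(\sigc c)^2\mathbf{I}^d)$ irrespective of $L$. Writing $\pi_L$ for the probability of drawing $L$ from $D'$, we get $M(D') = \sum_L \pi_L \mathcal{N}(f(L),(\sigc c)^2\mathbf{I}^d)$ and $M(D) = \sum_L \pi_L\big((1-q)\mathcal{N}(f(L),(\sigc c)^2\mathbf{I}^d) + q\,\mathcal{N}(f(L\cup\{x_i\}),(\sigc c)^2\mathbf{I}^d)\big)$.

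Next I would invoke scale-invariance of the Gaussian mechanism (it depends only on the sensitivity-to-noise ratio): dividing every mean above by $c_p$ yields exactly the Gaussian-mixture pair analyzed by \citet{mironov2019r}, with sensitivity normalized to $1$, sampling rate $q$, and noise scale $\sigc c / c_p = \sigma_p$, using $c_p = \sigc c / \sigma_p$ from \Cref{alg:clipping}. Hence all RDP bounds for the SGM transfer: one step is $(\alpha,\bar\varepsilon_p)$-RDP with $\bar\varepsilon_p \le 2q^2\alpha/\sigma_p^2$ in the parameter regime of Thm.~11 of \citet{mironov2019r}. Since only the final model is released, the per-step RDP costs for group $p$ add linearly over the $I$ iterations, and the composed bound converts to $(\varepsilon_p,\delta)$-DP via the standard RDP-to-DP conversion \citep{mironov2017renyi}. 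The multipliers $\sigss$ are chosen by \textit{getNoise} precisely so that this composed-and-converted bound equals $\varepsilon_p$ at iteration $I$. As $p$ was arbitrary and $(\{\varepsilon_1,\dots,\varepsilon_P\},\delta)$-DP only requires each coordinate's guarantee to hold for neighbors differing in a point of the corresponding group, this completes the argument.

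The step needing the most care is the reduction itself: justifying that, from group $p$'s viewpoint, the heterogeneously clipped contributions of the \emph{other} groups' points are harmless — conditioned on the mini-batch they are a fixed additive shift folded into $f(L)$, and $x_i$'s inclusion is an independent Bernoulli$(q)$ event — so that the actual mechanism (one common $\mathcal{N}(0,(\sigc c)^2\mathbf{I}^d)$ added to a sum of non-uniformly-clipped gradients) is genuinely \emph{equivalent}, not merely comparable, to an SGM with sensitivity $c_p$ and noise scale $\sigc c$, i.e.\ noise multiplier $\sigma_p$. Once that equivalence is in place, everything downstream is a verbatim reuse of the \nameA proof.
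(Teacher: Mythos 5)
Your proposal is correct and follows essentially the same route as the paper, which proves the \nameB{} guarantee by declaring it analogous to the \nameA{} proof with a global sample rate $q$ and per-group noise multipliers $\sigma_p$; you have simply written out in full the scale-invariance reduction ($\sigc c / c_p = \sigma_p$) that the paper leaves implicit. No gaps.
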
 
\begin{proof}
This proof is analog to the proof of \cref{thm:sample_dp} with the difference that we have a global sample rate $q$ but individual noise multipliers $\sigma_p$ and clip norms $c_p$.
Moreover, \cref{alg:clipping} is used to configure parameters prior to training.
\end{proof}

\end{document}